\pgfplotsset{compat=1.18}
\definecolor{bedsblue}{RGB}{0,102,153}
\definecolor{bedsorange}{RGB}{230,138,0}
\definecolor{bedsgreen}{RGB}{0,153,76}
\definecolor{bedsred}{RGB}{180,50,50}
\definecolor{bedspurple}{RGB}{128,0,128}
\definecolor{lightblue}{RGB}{230,242,255}
\definecolor{lightorange}{RGB}{255,245,230}
\definecolor{lightgreen}{RGB}{230,255,240}
\definecolor{lightred}{RGB}{255,235,235}
\definecolor{lightpurple}{RGB}{245,235,250}
\theoremstyle{definition}
\newtheorem{definition}{Definition}[section]
\newtheorem{theorem}{Theorem}[section]
\newtheorem{corollary}[theorem]{Corollary}
\newtheorem{proposition}[theorem]{Proposition}
\newtheorem{principle}{Principle}
\newtheorem{assumption}{Assumption}
\theoremstyle{remark}
\newtheorem{remark}{Remark}[section]
\DeclareMathOperator{\KL}{KL}
\newcommand{\R}{\mathbb{R}}
\newcommand{\E}{\mathbb{E}}
\newcommand{\HH}{\mathbb{H}}
\newcommand{\MvM}{\mathcal{M}_{\text{vM}}}
\newcommand{\Normal}{\mathcal{N}}
\newcommand{\bedsL}{\mathcal{L}_{\text{BEDS}}}
\newcommand{\bedsM}{\mathcal{M}_{\text{BEDS}}}
\newcommand{\dF}{d_{\text{F}}}
\newcommand{\kB}{k_{\text{B}}}
\newcommand{\proven}{\textsc{[Proven under A1--A3]}}
\newcommand{\conjectured}{\textsc{[Conjectured]}}
\newcommand{\speculative}{\textsc{[Speculative]}}
\newtcolorbox{keyresult}[1][]{
    enhanced,
    colback=lightblue,
    colframe=bedsblue,
    fonttitle=\bfseries,
    title=#1,
    boxrule=1pt,
    arc=3pt,
    left=8pt,
    right=8pt,
    top=6pt,
    bottom=6pt
}
\newtcolorbox{insight}[1][]{
    enhanced,
    colback=lightorange,
    colframe=bedsorange,
    fonttitle=\bfseries,
    title=#1,
    boxrule=1pt,
    arc=3pt,
    left=8pt,
    right=8pt,
    top=6pt,
    bottom=6pt
}
\newtcolorbox{warning}[1][]{
    enhanced,
    colback=lightred,
    colframe=bedsred,
    fonttitle=\bfseries,
    title=#1,
    boxrule=1pt,
    arc=3pt,
    left=8pt,
    right=8pt,
    top=6pt,
    bottom=6pt
}
\newtcolorbox{conjbox}[1][]{
    enhanced,
    colback=lightpurple,
    colframe=bedspurple,
    fonttitle=\bfseries,
    title=#1,
    boxrule=1pt,
    arc=3pt,
    left=8pt,
    right=8pt,
    top=6pt,
    bottom=6pt
}
\title{\vspace{-1cm}\textbf{Dissipative Learning}\\[0.3cm]
\Large Framework for Viable Adaptive Systems\\[0.2cm]}
\author{Laurent Caraffa\\
\small Universit\'e Gustave Eiffel, LASTIG, IGN-ENSG\\
\small \texttt{laurent.caraffa@ign.fr}}
\date{}
\begin{document}

\maketitle
\thispagestyle{firstpage}

\begin{abstract}
\noindent
Machine learning systems are increasingly deployed in settings requiring continual adaptation, long-term stability, and operation under finite resources. Yet fundamental phenomena such as forgetting, overfitting, and representation collapse remain poorly understood at a principled level, while regularization techniques---weight decay, dropout, batch normalization, exponential moving averages---work empirically without a unifying theoretical explanation.

This work proposes a perspective in which learning is modeled as an intrinsically dissipative process. Rather than viewing forgetting and regularization as heuristic add-ons, we argue they are structural necessities for the viability of adaptive systems. Building on information theory, thermodynamics, and information geometry, we introduce the BEDS (Bayesian Emergent Dissipative Structures) framework, where learning dynamics are described through compressed belief states evolving under dissipation constraints.

A central contribution is the Conditional Optimality Theorem, showing that Fisher--Rao regularization---measuring change in terms of information divergence rather than Euclidean distance---is the unique thermodynamically optimal regularization strategy. This result provides a theoretical lower bound on dissipation and reveals standard Euclidean regularization as structurally suboptimal. The framework naturally unifies existing methods (Ridge, SIGReg, EMA, SAC) as special cases of a single fundamental equation.

From this perspective, overfitting becomes over-crystallization, catastrophic forgetting reflects insufficient dissipation control, and many empirical heuristics can be reinterpreted as implicit mechanisms regulating information dissipation.
The framework introduces a fundamental classification: \emph{BEDS-crystallizable} problems (where beliefs can converge to stable equilibria) versus \emph{BEDS-maintainable} problems (requiring continuous adaptation).
The framework extends to continual and multi-agent systems, where viability---remaining stable, adaptable, and resource-bounded---replaces asymptotic optimality as the primary criterion. Remarkably, several modern architectures---Transformer attention, diffusion models, and Gaussian splatting---naturally implement BEDS coordinates, suggesting deep structural connections across learning paradigms. Overall, this work reframes learning as maintaining viable belief states under finite resources, providing a reference framework for interpreting forgetting, regularization, and stability as first-class principles in adaptive learning.

\medskip
\noindent\textbf{Keywords:} Machine learning, thermodynamics, information geometry, Fisher--Rao metric, regularization, dissipative structures.
\end{abstract}

\vspace{0.5cm}
\hrule
\vspace{0.5cm}


\part{Introduction}


\section{Introduction}
\label{sec:introduction}


\subsection{Machine Learning: Developments and Limits}
\label{subsec:ml-history}

\subsubsection{Foundations (1943--1990)}

The history of machine learning begins with formal models of neural computation. McCulloch and Pitts~\cite{mcculloch_pitts_1943} introduced the first mathematical model of a neuron, showing that networks of simple threshold units could compute logical functions. Rosenblatt's perceptron~\cite{rosenblatt_1958} added learning through weight adjustment, raising hopes for general artificial intelligence.

These hopes were tempered by Minsky and Papert's~\cite{minsky_papert_1969} analysis demonstrating fundamental limitations of single-layer perceptrons---notably their inability to learn the XOR function. This contributed to the first ``AI winter,'' a period of reduced funding and interest.

The solution---multi-layer networks trained by backpropagation---was developed independently by Werbos~\cite{werbos_1974}, LeCun~\cite{lecun_1985}, and Rumelhart et al.~\cite{rumelhart_hinton_williams_1986}. Yet practical applications remained limited by computational resources and the lack of large datasets.

\subsubsection{Statistical Learning (1990--2012)}

The 1990s saw the rise of statistical learning theory, providing the first rigorous foundations for generalization. Vapnik's~\cite{vapnik_1995} work on Support Vector Machines introduced the VC dimension and margin-based bounds, enabling provable guarantees on test error.

This era witnessed three pivotal developments. The kernel trick enabled nonlinear classification while preserving the computational tractability of convex optimization. Ensemble methods---notably Random Forests introduced by Breiman~\cite{breiman_2001} and the Boosting algorithms of Freund and Schapire~\cite{freund_schapire_1997}---demonstrated that combining weak learners could yield robust predictors. Meanwhile, Valiant's~\cite{valiant_1984} Probably Approximately Correct (PAC) learning framework provided the first rigorous formalization of the computational complexity of learning, establishing machine learning as a mathematically grounded discipline.

This period established machine learning as a rigorous discipline with theoretical foundations, though practical performance often lagged human capabilities.

\subsubsection{Deep Learning (2012--2020)}

The deep learning revolution began with AlexNet~\cite{krizhevsky_2012}, which achieved 16\% error on ImageNet versus 26\% for previous methods. Three factors converged: large datasets (ImageNet), computational power (GPUs), and architectural innovations (CNNs with ReLU activations).

The subsequent years witnessed rapid architectural innovations including VGG and ResNet~\cite{he_2016}, the latter introducing skip connections that enabled training of networks with hundreds of layers. New regularization techniques emerged, notably Dropout~\cite{srivastava_2014} and Batch Normalization~\cite{ioffe_szegedy_2015}, each providing empirical benefits without clear theoretical justification. Optimization also advanced with Adam~\cite{kingma_ba_2015} and sophisticated learning rate schedules, alongside gradient clipping for sequence models.

A puzzling gap emerged between theory and practice: deep networks with more parameters than training examples nonetheless generalized well, contradicting classical bias-variance tradeoffs. The techniques that made this work---dropout, batch norm, weight decay, data augmentation, early stopping---remained largely disconnected heuristics.

\subsubsection{Foundation Models (2020--2025)}

The Transformer architecture~\cite{vaswani_2017} enabled a new paradigm: massive models trained on internet-scale data exhibiting emergent capabilities. GPT-3~\cite{brown_2020} (175B parameters) and its successors demonstrated surprising abilities from in-context learning to chain-of-thought reasoning.

Several key developments define this era. Kaplan et al.~\cite{kaplan_2020} established scaling laws revealing power-law relationships between compute, data, parameters, and performance. In computer vision, self-supervised methods such as SimCLR~\cite{chen_2020_simclr}, BYOL~\cite{grill_2020}, DINO~\cite{caron_2021}, and I-JEPA~\cite{assran_2023} demonstrated that powerful representations can be learned without labels. LeCun's~\cite{lecun_2022_jepa} Joint Embedding Predictive Architecture proposed learning in latent space rather than pixel space. Concurrently, diffusion models~\cite{ho_2020} achieved state-of-the-art image synthesis through iterative denoising processes.

\textit{Both architectures---Transformers and diffusion models---will emerge as natural
special cases of the BEDS framework: attention implements Bayesian belief updates with
temperature-controlled precision, while diffusion models instantiate the fundamental
dissipation-reconstruction cycle that defines all learning under BEDS.}

\subsubsection*{Energy-Based Models and the Path to BEDS}

LeCun's Energy-Based Models (EBM) framework provides a unifying perspective: learning is energy minimization over a parameterized energy function $E_\theta(x)$. The Boltzmann distribution $p(x) \propto \exp(-E(x)/T)$ connects energy landscapes to probability distributions.

BEDS extends this framework in three critical ways:
\begin{enumerate}
    \item \textbf{Dissipation}: Classical EBMs have $\gamma = 0$---once converged, they freeze. BEDS adds dissipation ($\gamma > 0$, formally defined in Section~\ref{subsec:dissipation}), enabling recovery from crystallization when distributions shift.
    \item \textbf{Learned topology}: EBMs have fixed interaction structures; BEDS learns which agents should communicate.
    \item \textbf{Open systems}: EBMs minimize energy in closed systems; BEDS explicitly models entropy export to the environment.
\end{enumerate}

This distinction---whether a learning problem can ``crystallize'' into a stable state or must be continuously ``maintained''---motivates our classification into \emph{BEDS-crystallizable} and \emph{BEDS-maintainable} regimes, developed formally in Section~\ref{sec:beds-framework}.

\subsubsection{Limits and Pathologies}

Despite remarkable achievements, fundamental problems persist:

\begin{table}[htbp]
\centering
\caption{\textbf{Pathologies and limits of machine learning.} These phenomena motivate our search for a unified theoretical framework.}
\label{tab:pathologies}
\begin{tabular}{p{3cm}p{5cm}p{4cm}}
\toprule
\textbf{Problem} & \textbf{Description} & \textbf{Theoretical Status} \\
\midrule
Hallucinations & LLMs generate false content with high confidence & No predictive theory \\
Energy cost & GPT-4 training estimated at \$100M+ & No fundamental bound \\
Mode collapse & GANs/SSL converge to trivial solutions & Ad hoc heuristics \\
Overfitting / Underfitting & Bias-variance tradeoff & Empirical criteria \\
Double descent & Non-monotonic test error in overparameterized regime & Not predicted classically \\
Catastrophic forgetting & New tasks erase old knowledge & No general solution \\
Adversarial examples & Imperceptible perturbations cause misclassification & Partially understood \\
\bottomrule
\end{tabular}
\end{table}

These pathologies reveal deep structural gaps in our understanding of learning systems.
\textbf{Hallucinations} occur when large language models generate factually incorrect or entirely fabricated content while exhibiting high confidence---a phenomenon for which no predictive theory currently exists, making it impossible to anticipate when a model will confabulate.
\textbf{Energy costs} have reached staggering levels, with GPT-4 training estimated at over \$100 million, yet we lack fundamental bounds relating computational resources to learning capacity, leaving practitioners without principled guidance for resource allocation.
\textbf{Mode collapse} afflicts generative adversarial networks and self-supervised learning systems when they converge to trivial solutions---producing identical outputs or ignoring input variation entirely---a failure mode addressed only through ad hoc architectural heuristics rather than theoretical understanding.
The classical \textbf{overfitting/underfitting} dichotomy, while conceptually clear through the bias-variance tradeoff, still relies on empirical criteria such as cross-validation rather than predictive bounds.
More puzzling is the \textbf{double descent} phenomenon, where test error first decreases, then increases, then decreases again as model complexity grows into the overparameterized regime---a behavior that classical learning theory not only fails to predict but actively contradicts.
\textbf{Catastrophic forgetting} plagues continual learning systems: acquiring new tasks erases previously learned knowledge, with no general solution beyond replay buffers or elastic weight consolidation that themselves introduce new tradeoffs.
Finally, \textbf{adversarial examples} demonstrate that imperceptible perturbations can cause dramatic misclassifications, revealing that learned representations are fragile in ways that remain only partially understood.
Together, these phenomena suggest that current theoretical frameworks are insufficient to capture the dynamics of modern learning systems.

\subsubsection{The Need for a Unified Foundation}
After more than 80 years of development, machine learning remains theoretically fragmented.
Techniques such as dropout, batch normalization, and weight decay each carry separate and
largely heuristic justifications.
No unified theory predicts when pathologies such as overfitting, representation collapse,
or catastrophic forgetting will occur.
Fundamental bounds relating performance, data, memory, and energy remain elusive.
Even the major paradigms---supervised, unsupervised, and reinforcement learning---are
treated as distinct disciplines rather than facets of a single underlying process.

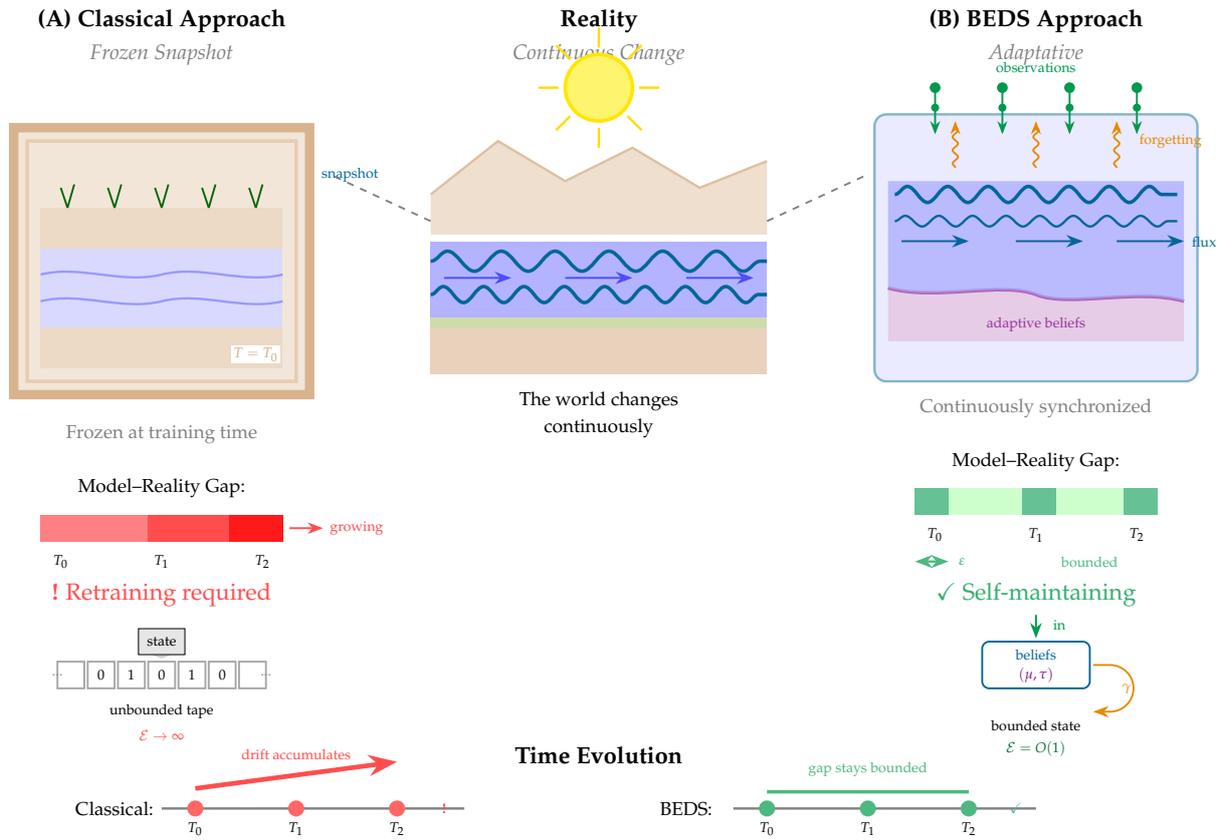
\begin{figure}[H]
\centering
\resizebox{\textwidth}{!}{%
\begin{tikzpicture}[
  font=\small,
  flow/.style={-{Stealth[length=2.5mm]}, thick},
  dissipation/.style={-{Stealth[length=2mm]}, thick, bedsorange},
  observation/.style={-{Stealth[length=2mm]}, thick, bedsgreen},
  tinytext/.style={font=\scriptsize},
  paneltitle/.style={font=\bfseries\small, align=center}
]

\node[font=\bfseries\large] at (0, 5.8) {Digital Twin: Two Paradigms};

\begin{scope}[shift={(-6.5,0)}]
  \node[paneltitle] at (0, 4.8) {(A) Classical Approach};
  \node[font=\footnotesize, gray] at (0, 4.3) {\textit{Frozen Snapshot}};

  \fill[brown!20] (-2.2, -0.8) rectangle (2.2, 3.2);
  \draw[brown!60, line width=4pt] (-2.2, -0.8) rectangle (2.2, 3.2);
  \draw[brown!40, line width=1.5pt] (-2.0, -0.6) rectangle (2.0, 3.0);

  \fill[blue!15] (-1.8, 0.2) rectangle (1.8, 1.4);
  \draw[blue!40, line width=1pt] (-1.8, 1.0)
    .. controls (-1.2, 1.15) and (-0.6, 0.85) .. (0, 1.0)
    .. controls (0.6, 1.15) and (1.2, 0.85) .. (1.8, 1.0);
  \draw[blue!40, line width=1pt] (-1.8, 0.6)
    .. controls (-1.2, 0.75) and (-0.6, 0.45) .. (0, 0.6)
    .. controls (0.6, 0.75) and (1.2, 0.45) .. (1.8, 0.6);
  \fill[brown!30] (-1.8, 1.4) rectangle (1.8, 2.0);
  \fill[brown!30] (-1.8, -0.4) rectangle (1.8, 0.2);
  \foreach \x in {-1.4, -0.7, 0, 0.7, 1.4} {
    \draw[green!40!black, line width=0.8pt] (\x, 2.0) -- (\x-0.1, 2.3);
    \draw[green!40!black, line width=0.8pt] (\x, 2.0) -- (\x+0.1, 2.35);
  }

  \node[font=\tiny, brown!70, fill=white, inner sep=1pt] at (1.4, -0.2) {$T = T_0$};

  \node[font=\scriptsize, align=center, gray] at (0, -1.4) {Frozen at training time};

  \node[font=\tiny, bedsblue] at (2.8, 2.5) {snapshot};

  \node[font=\scriptsize, align=center] at (0, -2.2) {Model--Reality Gap:};
  \fill[red!20, rounded corners=2pt] (-1.8, -3.0) rectangle (1.8, -2.6);
  \fill[red!50] (-1.8, -3.0) rectangle (-0.2, -2.6);
  \fill[red!70] (-0.2, -3.0) rectangle (1.0, -2.6);
  \fill[red!90] (1.0, -3.0) rectangle (1.8, -2.6);
  \draw[-{Stealth}, red!70, thick] (1.9, -2.8) -- (2.4, -2.8);
  \node[font=\tiny, red!70] at (2.9, -2.8) {growing};

  \node[font=\tiny] at (-1.5, -3.3) {$T_0$};
  \node[font=\tiny] at (0, -3.3) {$T_1$};
  \node[font=\tiny] at (1.5, -3.3) {$T_2$};

  \node[font=\small, red!70] at (0, -3.8) {\textbf{!} Retraining required};

  \begin{scope}[shift={(0,-5.2)}]
    \foreach \i in {-3,-2,-1,0,1,2,3} {
      \draw[gray!70, thick] (\i*0.45-0.2, 0) rectangle (\i*0.45+0.2, 0.4);
    }
    \foreach \i/\val in {-2/0, -1/1, 0/0, 1/1, 2/0} {
      \node[font=\tiny] at (\i*0.45, 0.2) {\val};
    }
    \node[font=\tiny, gray] at (-1.55, 0.2) {...};
    \node[font=\tiny, gray] at (1.55, 0.2) {...};

    \fill[gray!40] (-0.15, 0.5) -- (0.15, 0.5) -- (0, 0.45) -- cycle;
    \node[draw, fill=gray!20, font=\tiny, minimum width=0.6cm] at (0, 0.7) {state};

    \node[font=\tiny, align=center] at (0, -0.35) {unbounded tape};
    \node[font=\tiny, red!70, align=center] at (0, -0.7) {$\mathcal{E} \to \infty$};
  \end{scope}
\end{scope}

\begin{scope}[shift={(0,0)}]
  \node[paneltitle] at (0, 4.8) {Reality};
  \node[font=\footnotesize, gray] at (0, 4.3) {\textit{Continuous Change}};

  \fill[yellow!70] (0, 3.8) circle (0.5);
  \draw[yellow!90!orange, line width=1.5pt] (0, 3.8) circle (0.5);
  \foreach \angle in {0, 45, 90, 135, 180, 225, 270, 315} {
    \draw[yellow!80!orange, line width=1pt] (\angle:0.6) ++(0, 3.8) -- ++(\angle:0.3);
  }

  \fill[brown!25] (-2.5, 2.2) -- (-1.5, 3.0) -- (-0.5, 2.4) -- (0.5, 2.9) -- (1.5, 2.3) -- (2.5, 2.7) -- (2.5, 1.6) -- (-2.5, 1.6) -- cycle;
  \draw[brown!50, line width=1pt] (-2.5, 2.2) -- (-1.5, 3.0) -- (-0.5, 2.4) -- (0.5, 2.9) -- (1.5, 2.3) -- (2.5, 2.7);

  \fill[blue!30] (-2.5, 0.2) rectangle (2.5, 1.5);
  \draw[bedsblue, line width=1.5pt, decorate, decoration={snake, amplitude=1.5mm, segment length=8mm}]
    (-2.5, 1.2) -- (2.5, 1.2);
  \draw[bedsblue, line width=1.5pt, decorate, decoration={snake, amplitude=1.2mm, segment length=6mm}]
    (-2.5, 0.7) -- (2.5, 0.7);
  \draw[flow, blue!70] (-2.3, 0.95) -- (-1.3, 0.95);
  \draw[flow, blue!70] (-0.5, 0.95) -- (0.5, 0.95);
  \draw[flow, blue!70] (1.3, 0.95) -- (2.3, 0.95);

  \fill[brown!35] (-2.5, -0.5) rectangle (2.5, 0.2);
  \fill[green!30!brown!40] (-2.5, 0.2) rectangle (2.5, 0.35);

  \node[font=\scriptsize, align=center] at (0, -0.9) {The world changes};
  \node[font=\scriptsize, align=center] at (0, -1.3) {continuously};

  \draw[dashed, gray, line width=0.8pt] (-2.5, 1.8) -- (-4.0, 2.5);
  \draw[dashed, gray, line width=0.8pt] (2.5, 1.8) -- (4.0, 2.5);
\end{scope}

\begin{scope}[shift={(6.5,0)}]
  \node[paneltitle] at (0, 4.8) {(B) BEDS Approach};
  \node[font=\footnotesize, gray] at (0, 4.3) {\textit{Adaptative}};

  \fill[blue!8, rounded corners=5pt] (-2.4, -0.6) rectangle (2.4, 3.4);
  \draw[bedsblue!50, line width=1pt, rounded corners=5pt] (-2.4, -0.6) rectangle (2.4, 3.4);

  \foreach \x in {-1.5, -0.5, 0.5, 1.5} {
    \draw[observation] (\x, 3.8) -- (\x, 3.1);
    \fill[bedsgreen] (\x, 3.8) circle (0.08);
    \fill[bedsgreen] (\x, 3.5) circle (0.06);
  }
  \node[font=\tiny, bedsgreen, align=center] at (0, 4.1) {observations};

  \foreach \x in {-1.2, 0, 1.2} {
    \draw[dissipation, decorate, decoration={snake, amplitude=0.4mm, segment length=2mm}]
      (\x, 2.6) -- (\x, 3.3);
  }
  \node[font=\tiny, bedsorange] at (2.0, 3.0) {forgetting};

  \fill[bedspurple!20] (-2.2, 0.8)
    .. controls (-1.5, 0.6) and (-0.5, 0.9) .. (0, 0.7)
    .. controls (0.5, 0.5) and (1.5, 0.8) .. (2.2, 0.6)
    -- (2.2, 0.0) -- (-2.2, 0.0) -- cycle;
  \draw[bedspurple!60, line width=1.5pt] (-2.2, 0.8)
    .. controls (-1.5, 0.6) and (-0.5, 0.9) .. (0, 0.7)
    .. controls (0.5, 0.5) and (1.5, 0.8) .. (2.2, 0.6);
  \node[font=\tiny, bedspurple!80] at (0, 0.25) {adaptive beliefs};

  \fill[blue!35, opacity=0.7] (-2.2, 0.8)
    .. controls (-1.5, 0.6) and (-0.5, 0.9) .. (0, 0.7)
    .. controls (0.5, 0.5) and (1.5, 0.8) .. (2.2, 0.6)
    -- (2.2, 2.4) -- (-2.2, 2.4) -- cycle;

  \draw[bedsblue, line width=1.5pt, decorate, decoration={snake, amplitude=1.2mm, segment length=6mm}]
    (-2.1, 2.2) -- (2.1, 2.2);
  \draw[bedsblue, line width=1pt, decorate, decoration={snake, amplitude=0.8mm, segment length=5mm}]
    (-2.1, 1.8) -- (2.1, 1.8);

  \draw[flow, bedsblue] (-2.0, 1.5) -- (-1.0, 1.5);
  \draw[flow, bedsblue] (-0.3, 1.5) -- (0.7, 1.5);
  \draw[flow, bedsblue] (1.2, 1.5) -- (2.2, 1.5);
  \node[font=\tiny, bedsblue] at (2.5, 1.5) {flux};

  \node[font=\scriptsize, align=center, gray] at (0, -1.0) {Continuously synchronized};

  \node[font=\scriptsize, align=center] at (0, -1.8) {Model--Reality Gap:};
  \fill[green!20, rounded corners=2pt] (-1.8, -2.6) rectangle (1.8, -2.2);
  \fill[bedsgreen!60] (-1.8, -2.6) rectangle (-1.3, -2.2);
  \fill[bedsgreen!60] (-0.2, -2.6) rectangle (0.3, -2.2);
  \fill[bedsgreen!60] (1.3, -2.6) rectangle (1.8, -2.2);

  \node[font=\tiny] at (-1.5, -2.9) {$T_0$};
  \node[font=\tiny] at (0, -2.9) {$T_1$};
  \node[font=\tiny] at (1.5, -2.9) {$T_2$};

  \draw[bedsgreen!70, line width=1pt, {Stealth}-{Stealth}] (-1.8, -3.3) -- (-1.3, -3.3);
  \node[font=\tiny, bedsgreen!80] at (-1.1, -3.3) {$\varepsilon$};
  \node[font=\tiny, bedsgreen!70] at (0.8, -3.3) {bounded};

  \node[font=\small, bedsgreen!80] at (0, -3.8) {\checkmark\ Self-maintaining};

  \begin{scope}[shift={(0,-5.2)}]
    \draw[bedsblue, thick, rounded corners=3pt] (-0.8, 0) rectangle (0.8, 0.7);
    \node[font=\tiny, bedsblue] at (0, 0.5) {beliefs};
    \node[font=\tiny, bedspurple] at (0, 0.2) {$(\mu, \tau)$};

    \draw[bedsgreen, thick, -{Stealth}] (0, 1.1) -- (0, 0.75);
    \node[font=\tiny, bedsgreen] at (0.35, 0.95) {in};

    \draw[bedsorange, thick, -{Stealth}] (0.85, 0.35) -- (1.1, 0.35)
      arc[start angle=90, end angle=-90, radius=0.35] -- (0.85, -0.35);
    \node[font=\tiny, bedsorange] at (1.35, 0) {$\gamma$};

    \node[font=\tiny, align=center] at (0, -0.55) {bounded state};
    \node[font=\tiny, bedsgreen!70!black, align=center] at (0, -0.9) {$\mathcal{E} = O(1)$};
  \end{scope}
\end{scope}

\begin{scope}[shift={(0,-7.0)}]
  \node[font=\bfseries\small] at (0, 0.8) {Time Evolution};

  \draw[gray, line width=1pt] (-6.5, 0) -- (-2, 0);
  \node[font=\scriptsize, align=right] at (-7.2, 0) {Classical:};
  \foreach \x/\label in {-6/T_0, -4.5/T_1, -3/T_2} {
    \fill[red!60] (\x, 0) circle (0.12);
    \node[font=\tiny, below=2pt] at (\x, 0) {$\label$};
  }
  \draw[red!70, line width=2pt, -{Stealth}] (-6, 0.3) -- (-3, 0.7);
  \node[font=\tiny, red!70, above] at (-4.5, 0.6) {drift accumulates};
  \node[font=\tiny, red!70] at (-2.3, 0) {\textbf{!}};

  \draw[gray, line width=1pt] (2, 0) -- (6.5, 0);
  \node[font=\scriptsize, align=right] at (1.3, 0) {BEDS:};
  \foreach \x/\label in {2.5/T_0, 4/T_1, 5.5/T_2} {
    \fill[bedsgreen!70] (\x, 0) circle (0.12);
    \node[font=\tiny, below=2pt] at (\x, 0) {$\label$};
  }
  \draw[bedsgreen!70, line width=1.5pt] (2.5, 0.25) -- (5.5, 0.25);
  \node[font=\tiny, bedsgreen!70, above] at (4, 0.35) {gap stays bounded};
  \node[font=\tiny, bedsgreen!70] at (6.2, 0) {\checkmark};
\end{scope}

\end{tikzpicture}%
}

\caption{Two paradigms for digital twins and their computational models. Left, a frozen snapshot trained at T0. As reality evolves, the model reality gap grows unboundedly, requiring periodic retraining. The underlying computational model is a Turing machine with unbounded tape: memory accumulates without limit, and the energy cost of maintaining fidelity diverges. Right, a continuously synchronized system that receives observations, updates beliefs, and exports entropy through dissipation. The gap remains bounded through perpetual synchronization. The computational model has bounded state: forgetting prevents memory explosion. This contrast, unbounded tape vs. bounded dissipative state, captures why classical models require ever-growing resources while BEDS systems remain viable indefinitely. The BEDS parameters $(\mu, \tau)$ shown are formally introduced in Definition~\ref{def:beds-state}.}
\label{fig:turing_vs_beds_digital_twin}
\end{figure}

A particularly revealing manifestation of this fragmentation arises in systems that must
operate continuously under finite resources, such as digital twins.
Unlike classical learning settings, these systems are not trained once and deployed,
but must adapt indefinitely to non-stationary environments while remaining stable and viable.
Yet classical computational abstractions, rooted in the Turing machine model, assume unbounded
memory and abstract away the cost of information retention and erasure---a Turing machine
can store arbitrarily many bits without energy cost, making perpetual fidelity trivially
achievable in principle but physically impossible in practice (Figure~\ref{fig:turing_vs_beds_digital_twin}).
By contrast, BEDS systems maintain bounded state through continuous dissipation, trading
perfect fidelity for thermodynamic viability.

In the absence of a principled theory of forgetting, regularization techniques act as
ad hoc mechanisms for information disposal, stabilizing learning without a clear notion
of optimality or necessity.
This gap motivates the search for a unified foundation in which learning is understood
as an intrinsically dissipative process, and in which forgetting, stability, and efficiency
are governed by fundamental constraints rather than empirical tuning.

\begin{insight}[Central Questions]
\begin{enumerate}
    \item Is there a single principle from which effective heuristics derive?
    \item Are regularization techniques optional add-ons or thermodynamic necessities?
    \item Can we predict overfitting, collapse, and loss of adaptability before they occur?
    \item What are the fundamental trade-offs between accuracy, data, memory, and energy?
\end{enumerate}
\end{insight}


\subsection{Dissipative Structures: The Physics}
\label{subsec:dissipative-structures}

\subsubsection{Classical Thermodynamics (1850--1900)}

The second law of thermodynamics, formulated by Clausius~\cite{clausius_1865}, states that entropy in an isolated system never decreases. Boltzmann~\cite{boltzmann_1877} gave this a statistical interpretation: $S = k_B \ln W$, where $W$ counts microstates.

This raises a fundamental puzzle: \emph{how can ordered structures---living organisms, learning systems, crystalline patterns---exist in a universe tending toward maximum entropy?}

\subsubsection{Open Systems and Entropy}

Schr\"odinger's ``What is Life?''~\cite{schrodinger_1944} introduced the concept of \emph{negative entropy} (negentropy): living systems maintain order by exporting entropy to their environment. They are not closed systems approaching equilibrium, but \emph{open systems} exchanging energy and matter with their surroundings.

The key distinction is that closed systems evolve toward equilibrium (maximum entropy), while open systems can maintain non-equilibrium steady states through continuous flux of energy or matter.

\subsubsection{Prigogine's Theory of Dissipative Structures (1960--1977)}

Ilya Prigogine developed a thermodynamic theory of systems far from equilibrium, introducing the concept of \emph{dissipative structures}: ordered patterns that emerge and are maintained through continuous energy dissipation.

\begin{table}[htbp]
\centering
\caption{\textbf{Conditions for dissipative structures (Prigogine).} These four conditions become the basis for our Assumption A3.}
\label{tab:prigogine-conditions}
\begin{tabular}{ll}
\toprule
\textbf{Condition} & \textbf{Description} \\
\midrule
Openness & Exchange of energy/matter with the environment \\
Far from equilibrium & Not at the state of maximum entropy \\
Nonlinearity & Feedback mechanisms amplifying fluctuations \\
Dissipation & Export of entropy to the environment \\
\bottomrule
\end{tabular}
\end{table}

\begin{table}[htbp]
\centering
\caption{\textbf{Examples of dissipative structures.}}
\label{tab:dissipative-examples}
\begin{tabular}{lll}
\toprule
\textbf{System} & \textbf{Incoming Flux} & \textbf{Dissipation} \\
\midrule
B\'enard cells & Temperature gradient & Heat \\
Belousov-Zhabotinsky reaction & Chemical reactants & Reaction products \\
Living organisms & Food/nutrients & Metabolic waste \\
Hurricanes & Ocean heat & Thermal dissipation \\
\bottomrule
\end{tabular}
\end{table}

Prigogine received the Nobel Prize in Chemistry~\cite{prigogine_1977_nobel} for this work, which showed that \emph{order can emerge from chaos through dissipation}.

\subsubsection{Thermodynamics of Information}

Landauer~\cite{landauer_1961} established a fundamental connection between information and thermodynamics: \emph{erasing one bit of information requires dissipating at least $E_{\min} = \kB T \ln 2$ of energy}.

At room temperature (300K), this is approximately $2.87 \times 10^{-21}$ J per bit---tiny, but non-zero. This resolved Maxwell's demon paradox: the demon must erase information about which molecules to sort, and this erasure has an irreducible thermodynamic cost.

Bennett~\cite{bennett_1982} extended this to show that computation itself can be reversible (and thus thermodynamically free), but any \emph{irreversible} operation---including resetting memory---requires dissipation proportional to the information lost.

\begin{keyresult}[Landauer's Principle]
Manipulating information has irreducible thermodynamic consequences. Any system that acquires, maintains, or erases information must pay an energy cost. This principle underpins our Assumption A3 and the resulting Energy-Precision Bound.
\end{keyresult}


\subsection{The Thermodynamic Framework for Learning}
\label{subsec:thermo-framework}

\subsubsection{Correspondence with Prigogine's Conditions}

We propose that \emph{neural networks during training satisfy all four conditions for dissipative structures}:

\begin{table}[htbp]
\centering
\caption{\textbf{Correspondence between dissipative structures and neural networks.} This mapping motivates our conditional framework.}
\label{tab:correspondence}
\begin{tabular}{p{2.5cm}p{4.5cm}p{5cm}}
\toprule
\textbf{Condition} & \textbf{Dissipative Structure} & \textbf{Neural Network in Training} \\
\midrule
Openness & Flux of energy/matter & Flux of data (mini-batches) \\
Far from equilibrium & Ordered state maintained & Structured representations $\neq$ random weights \\
Nonlinearity & Chemical/thermal feedback & Activations, backpropagation, attention \\
Dissipation & Entropy export & Regularization, forgetting, dropout \\
\bottomrule
\end{tabular}
\end{table}

\subsubsection{From Implicit to Explicit Dissipation}

Traditional machine learning treats regularization as an optional add-on: dropout prevents overfitting, weight decay improves generalization, batch normalization stabilizes training. These techniques work, but their success appears coincidental---disconnected heuristics without a unifying principle.

The thermodynamic perspective reveals that these ``heuristics'' may actually be \emph{dissipation mechanisms}. They work because they export entropy. But in current practice, dissipation is \textbf{implicit}: it happens as a side effect, not by design.

We propose a paradigm shift: make dissipation \textbf{explicit}. Rather than discovering post hoc that effective methods happen to dissipate, we design learning systems where entropy export is a first-class architectural principle. This is the core insight of the BEDS framework.

\subsubsection{The Regularization-Forgetting-Energy Chain}

The connection between regularization and thermodynamics is direct and quantitative:

\begin{enumerate}
    \item \textbf{Regularization = Forgetting}: Weight decay pushes parameters toward zero, dropout erases correlations, early stopping prevents memorization. Each technique erases information about the training data.

    \item \textbf{Forgetting = Erasure}: By Landauer's principle, erasing one bit of information requires dissipating at least $E_{\min} = \kB T \ln 2$ of energy.

    \item \textbf{Erasure = Energy Cost}: Therefore, regularization has an irreducible thermodynamic cost. The more you regularize, the more information you erase, the more energy you dissipate.
\end{enumerate}

\begin{center}
\begin{tikzpicture}[node distance=5cm, >=Stealth]
    \node[draw, rounded corners, fill=bedsblue!15, minimum width=2.8cm, minimum height=1cm] (reg) {Regularization};
    \node[draw, rounded corners, fill=bedsorange!15, minimum width=2.8cm, minimum height=1cm, right of=reg] (forget) {Forgetting};
    \node[draw, rounded corners, fill=bedsred!15, minimum width=2.8cm, minimum height=1cm, right of=forget] (energy) {Energy Cost};

    \draw[->, thick] (reg) -- node[above, font=\footnotesize] {erases info} (forget);
    \draw[->, thick] (forget) -- node[above, font=\footnotesize] {Landauer} (energy);
\end{tikzpicture}
\end{center}

This chain explains why regularization works: it is not an optional add-on but the \emph{entropy export mechanism} that allows the system to maintain order. Without it, information accumulates without bound (overfitting), violating the conditions for a stable dissipative structure.

\begin{insight}[The Price of Generalization]
The price of generalization is forgetting. The price of forgetting is energy.
\end{insight}

\subsubsection{Consequences and Interpretations}

If this hypothesis is correct, several consequences follow:
\begin{enumerate}
    \item \textbf{Heuristics derive from thermodynamics}: Effective techniques (dropout, weight decay, batch norm) are mechanisms for controlling dissipation
    \item \textbf{Fundamental bounds exist}: Limits relating energy, precision, and dissipation constrain all learning systems
    \item \textbf{Pathologies are thermodynamic}: Overfitting, collapse, and forgetting correspond to identifiable thermodynamic imbalances
    \item \textbf{Paradigms unify}: Supervised, self-supervised, and reinforcement learning are special cases of a single framework
\end{enumerate}

Under this thermodynamic lens, common ML phenomena acquire new interpretations. Overfitting corresponds to information accumulation exceeding dissipation capacity. Mode collapse represents decay toward a low-energy equilibrium state. Generalization, in contrast, corresponds to a stable non-equilibrium steady state where information influx balances dissipation.

\subsubsection{This Motivates Our Three Assumptions}

The empirical success of methods that implement dissipative dynamics, combined with the theoretical apparatus of thermodynamics and information geometry, suggests formalizing the conditions under which we can prove optimality.

Rather than claiming that learning \emph{is} dissipation (a metaphysical claim), we ask: \emph{under what conditions would thermodynamic optimality uniquely determine the regularization strategy?}

This analysis motivates three explicit assumptions. Assumption A1 (Intrinsic Measure) posits that optimality requires a parametrization-invariant metric, which \v{C}encov's theorem identifies as Fisher--Rao. Assumption A2 (Maximum Entropy) specifies that belief states follow maximum-entropy distributions---Gaussians and von Mises families. Assumption A3 (Quasi-Static) asserts that optimal processes approach the quasi-static limit, where geodesics minimize dissipation and Landauer's bound becomes achievable.

The next section makes these assumptions precise and derives their consequences.


\subsection{Learning as Dissipative Processes: From B\'enard to Neural Networks}
\label{subsec:benard-lejepa}

Having established the thermodynamic foundations---Prigogine's conditions for
dissipative structures and Landauer's principle connecting information to
energy---we can now formalize how these principles apply to learning systems.

\subsubsection*{The BEDS Parameters}

The BEDS framework represents belief states using four canonical parameters:

\begin{itemize}
    \item \textbf{Spatial parameters} (from Gaussian beliefs):
    \begin{itemize}
        \item $\mu$ (\emph{position}): where the system believes the true value lies
        \item $\tau$ (\emph{precision}): how confident the system is (inverse variance)
    \end{itemize}
    \item \textbf{Temporal parameters} (from von Mises beliefs):
    \begin{itemize}
        \item $\phi$ (\emph{phase}): the angular/temporal position
        \item $\kappa$ (\emph{coherence}): how strongly synchronized the system is
    \end{itemize}
\end{itemize}

This decomposition follows from Assumption A2 (maximum entropy): Gaussians are
the unique maximum-entropy distributions for unbounded variables with known mean
and variance, while von Mises distributions are their circular analogues. Together,
$(\mu, \tau, \phi, \kappa)$ span the complete BEDS state space (formalized in Definition~\ref{def:beds-state}).

With these parameters defined, consider how self-supervised learning methods
naturally decompose into complementary thermodynamic mechanisms. B\'enard
convection---the canonical dissipative structure---finds not one but \emph{two}
neural analogues: LeJEPA for spatial precision and DINO for temporal coherence.
When a thin fluid layer is heated from below beyond a critical threshold, it
spontaneously organizes into hexagonal convection cells. These \emph{dissipative
structures} are ordered patterns that exist only because energy continuously
flows through the system. SSL methods exhibit the same thermodynamic logic.

\begin{table}[htbp]
\centering
\caption{\textbf{The same thermodynamic story in three different substrates.} LeJEPA and DINO follow the exact pattern of B\'enard convection, targeting complementary BEDS parameters. Under assumptions A1--A3, these correspondences become precise.}
\label{tab:benard-lejepa-unified}
\begin{tabular}{p{2.8cm}p{3.2cm}p{3.2cm}p{3.2cm}}
\toprule
\textbf{Aspect} & \textbf{B\'enard Cells} & \textbf{LeJEPA ($\mu$, $\tau$)} & \textbf{DINO ($\phi$, $\kappa$)} \\
\midrule
System & Fluid layer & Neural network & Neural network \\
External flux & Heat ($\Delta T$) & Data stream & Data stream \\
Equilibrium state & Uniform temperature & Random weights & Random attention \\
What emerges & Hexagonal cells & Spatial precision & Global coherence \\
Dissipation & Heat $\to$ cold & SIGReg $\to$ entropy & Centering $\to$ entropy \\
Control parameter & Rayleigh number & Masking ratio & Momentum $m$ \\
BEDS mapping & --- & Position $\mu$, Precision $\tau$ & Phase $\phi$, Coherence $\kappa$ \\
\bottomrule
\end{tabular}
\end{table}

Gaussian splatting representations follow the same pattern: each Gaussian primitive
$\mathcal{N}(\mu_i, \Sigma_i)$ carries explicit position and precision coordinates,
making it a \emph{native} BEDS representation that complements the learned latent
spaces of LeJEPA and DINO.

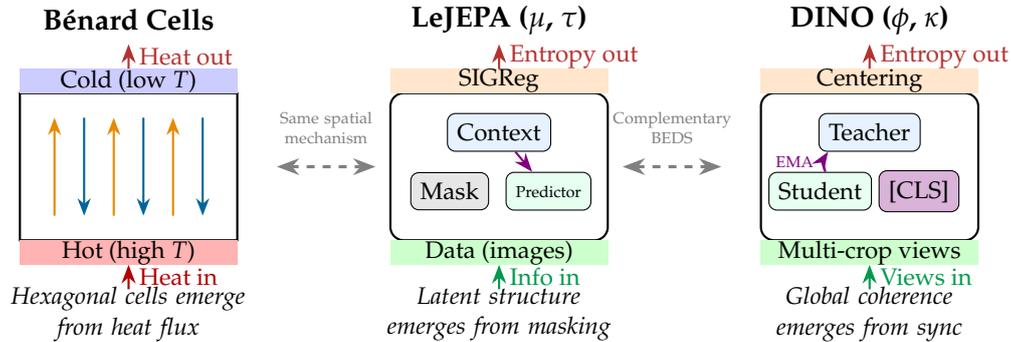
\begin{figure}[H]
\centering
\begin{tikzpicture}[
    scale=0.65,
    every node/.style={font=\small},
    flow/.style={-{Stealth[length=2.5mm]}, thick},
    dissipation/.style={-{Stealth[length=2.5mm]}, thick, bedsred}
]

\begin{scope}[shift={(-7.5,0)}]
    \node[font=\bfseries] at (0,3.5) {B\'enard Cells};

    \draw[thick] (-2.2,-1) rectangle (2.2,2);

    \fill[red!30] (-2.2,-1.5) rectangle (2.2,-1);
    \node[font=\footnotesize] at (0,-1.25) {Hot (high $T$)};

    \fill[blue!20] (-2.2,2) rectangle (2.2,2.5);
    \node[font=\footnotesize] at (0,2.25) {Cold (low $T$)};

    \foreach \x in {-1.2, 0, 1.2} {
        \draw[bedsorange, thick, -{Stealth}] (\x-0.3, -0.5) -- (\x-0.3, 1.5);
        \draw[bedsblue, thick, -{Stealth}] (\x+0.3, 1.5) -- (\x+0.3, -0.5);
    }

    \draw[flow, red!70!black] (0,-2) -- (0,-1.5) node[midway, right, font=\footnotesize] {Heat in};
    \draw[dissipation] (0,2.5) -- (0,3) node[midway, right, font=\footnotesize] {Heat out};

    \node[font=\footnotesize\itshape, text width=3.5cm, align=center] at (0,-2.5) {Hexagonal cells emerge\\from heat flux};
\end{scope}

\draw[{Stealth}-{Stealth}, very thick, gray, dashed] (-4.5, 0.5) -- (-2.5, 0.5);
\node[font=\tiny, gray, text width=1.8cm, align=center] at (-3.5, 1.3) {Same spatial mechanism};

\begin{scope}[shift={(0,0)}]
    \node[font=\bfseries] at (0,3.5) {LeJEPA ($\mu$, $\tau$)};

    \draw[thick, rounded corners=5pt] (-2.2,-1) rectangle (2.2,2);

    \fill[green!20] (-2.2,-1.5) rectangle (2.2,-1);
    \node[font=\footnotesize] at (0,-1.25) {Data (images)};

    \fill[orange!20] (-2.2,2) rectangle (2.2,2.5);
    \node[font=\footnotesize] at (0,2.25) {SIGReg};

    \node[draw, rounded corners=3pt, fill=lightblue, minimum width=1.3cm, minimum height=0.5cm]
        (ctx) at (0, 1.2) {\footnotesize Context};
    \node[draw, rounded corners=3pt, fill=lightgreen, minimum width=1.1cm, minimum height=0.5cm]
        (pred) at (1.0, 0) {\tiny Predictor};
    \node[draw, rounded corners=3pt, fill=gray!20, minimum width=0.9cm, minimum height=0.4cm]
        (mask) at (-1.0, 0) {\footnotesize Mask};

    \draw[flow, bedspurple] (ctx) -- (pred);

    \draw[flow, bedsgreen] (0,-2) -- (0,-1.5) node[midway, right, font=\footnotesize] {Info in};
    \draw[dissipation] (0,2.5) -- (0,3) node[midway, right, font=\footnotesize] {Entropy out};

    \node[font=\footnotesize\itshape, text width=3.5cm, align=center] at (0,-2.5) {Latent structure\\emerges from masking};
\end{scope}

\draw[{Stealth}-{Stealth}, very thick, gray, dashed] (2.5, 0.5) -- (4.5, 0.5);
\node[font=\tiny, gray, text width=2cm, align=center] at (3.5, 1.3) {Complementary BEDS};

\begin{scope}[shift={(7.5,0)}]
    \node[font=\bfseries] at (0,3.5) {DINO ($\phi$, $\kappa$)};

    \draw[thick, rounded corners=5pt] (-2.2,-1) rectangle (2.2,2);

    \fill[green!20] (-2.2,-1.5) rectangle (2.2,-1);
    \node[font=\footnotesize] at (0,-1.25) {Multi-crop views};

    \fill[orange!20] (-2.2,2) rectangle (2.2,2.5);
    \node[font=\footnotesize] at (0,2.25) {Centering};

    \node[draw, rounded corners=3pt, fill=lightblue, minimum width=1.3cm, minimum height=0.5cm]
        (teacher) at (0, 1.2) {\footnotesize Teacher};
    \node[draw, rounded corners=3pt, fill=lightgreen, minimum width=1.3cm, minimum height=0.5cm]
        (student) at (-1.0, 0) {\footnotesize Student};
    \node[draw, rounded corners=3pt, fill=bedspurple!30, minimum width=0.8cm, minimum height=0.4cm]
        (cls) at (1.0, 0) {\footnotesize [CLS]};

    \draw[flow, bedspurple] (student) to[bend left=25] node[midway, left, font=\tiny] {EMA} (teacher);

    \draw[flow, bedsgreen] (0,-2) -- (0,-1.5) node[midway, right, font=\footnotesize] {Views in};
    \draw[dissipation] (0,2.5) -- (0,3) node[midway, right, font=\footnotesize] {Entropy out};

    \node[font=\footnotesize\itshape, text width=3.5cm, align=center] at (0,-2.5) {Global coherence\\emerges from sync};
\end{scope}

\end{tikzpicture}
\caption{\textbf{B\'enard cells, LeJEPA, and DINO---three dissipative structures.} Left: In B\'enard convection, hexagonal cells emerge from heat flux. Center: LeJEPA learns latten  ($\mu$, $\tau$) through masked prediction and SIGReg entropy export. Right: DINO learns global coherence ($\phi$, $\kappa$) through student-teacher synchronization and centering. Together, LeJEPA and DINO span the full BEDS state space. Under A1--A3, these systems satisfy the conditions for thermodynamically optimal learning.}
\label{fig:benard-lejepa}
\end{figure}

The correspondence is not superficial: all three systems maintain ordered structure through continuous flux and entropy export. Figure~\ref{fig:benard-lejepa} visualizes this parallel---heat flows upward in Bénard cells just as gradient information flows through LeJEPA, and both require continuous dissipation to prevent collapse into thermal equilibrium or representation collapse. Remove the flux, and structure collapses.

\textbf{LeJEPA learns \emph{where} things are:} the context encoder forms beliefs $\mu$ about visible regions, while the predictor quantifies precision $\tau$ needed to forecast masked areas. SIGReg prevents over-confidence ($\tau \to \infty$) by exporting entropy---under our framework, this is thermodynamic necessity, not optional regularization.

\textbf{DINO learns \emph{when} features align:} the momentum teacher creates a slow reference frame that student views must synchronize with. The class token [CLS] acts as a global order parameter, and centering/sharpening dissipate incoherent modes. With momentum $m \approx 0.99$, this implies coherence $\kappa \approx 100$---strong phase locking analogous to Kuramoto oscillators.

The two methods are complementary: LeJEPA targets the spatial parameters ($\mu$, $\tau$) while DINO targets the temporal parameters ($\phi$, $\kappa$). Together, they span the full BEDS state space.

\subsubsection*{The Computational Cost of Forgetting}

This thermodynamic view makes a surprising prediction: \emph{forgetting has a cost}. To see this concretely, consider a LeJEPA model with a ViT-Large encoder (300M parameters). Where does the GPU memory go?

\begin{table}[htbp]
\centering
\caption{\textbf{Memory budget of LeJEPA (ViT-Large, 300M params).} A significant fraction of GPU memory is dedicated to implementing controlled forgetting.}
\label{tab:jepa-memory}
\begin{tabular}{lll}
\toprule
\textbf{Component} & \textbf{Memory} & \textbf{Thermodynamic Role} \\
\midrule
Student weights & 1.2 GB & Current beliefs $(\mu, \tau)$ \\
Teacher weights (EMA) & 1.2 GB & Slowly-forgetting reference \\
Adam optimizer states & 2.4 GB & Decaying gradient memory \\
Activations (batch 256) & $\sim$15 GB & Transient computation \\
\midrule
\textbf{Total} & $\sim$\textbf{20 GB} & \\
\bottomrule
\end{tabular}
\end{table}

Of these 20 GB, approximately \textbf{3.6 GB serve exclusively to manage forgetting}. The EMA teacher (1.2 GB) constitutes a memory that forgets slowly: with momentum $m = 0.996$, it retains 99.6\% and forgets 0.4\% per step. The Adam states (2.4 GB) store momentum and variance estimates that decay exponentially---$\beta_1 = 0.9$ implies 10\% forgetting per step.

This is the \textbf{computational price of dissipation}. Weight decay ($\lambda = 0.05$) actively erases 5\% of each weight per step---information about the training data is deliberately destroyed. The EMA teacher exists precisely to provide a stable target \emph{despite} this continuous erasure.

\begin{remark}[Landauer vs Computational Cost]
This memory cost is not the thermodynamic minimum (Landauer's bound gives $\sim 10^{-12}$ J/step, negligible). It is the \emph{engineering cost} of implementing controlled dissipation with current hardware. The analogy holds: maintaining order against entropy requires dedicated resources---whether measured in joules or gigabytes.
\end{remark}

\subsubsection*{Why This Suggests Our Assumptions}

The empirical success of LeJEPA and DINO is suggestive. Both methods share revealing features. They employ metrics that approximate Fisher--Rao geometry through cosine similarity and normalized representations, supporting Assumption A1. They regularize toward Gaussian-like distributions---SIGReg explicitly targets $\Normal(0, I)$---consistent with Assumption A2. Their EMA updates approximate quasi-static processes, aligning with Assumption A3.

This empirical convergence motivates our theoretical investigation: \emph{why} do successful methods share these features?

\begin{keyresult}[Dissipative Learning Hypothesis]
Under assumptions A1--A3, SSL methods work \emph{because} they implement dissipative structures optimally. LeJEPA's SIGReg exports spatial entropy (controls $\tau$); DINO's centering exports temporal entropy (controls $\kappa$). The ``tricks'' are not optional heuristics but thermodynamic necessities.
\end{keyresult}


\subsection{The Conditional Approach}
\label{subsec:conditional-approach}

Rather than claiming universal truths, we adopt a conditional methodology:
\begin{enumerate}
    \item State explicit assumptions (A1, A2, A3)
    \item Derive consequences rigorously
    \item Distinguish proven results from conjectures
    \item Propose testable predictions
\end{enumerate}

This approach has precedent in physics: thermodynamics does not prove that entropy increases; it assumes the second law and derives consequences. Similarly, we assume conditions under which optimality is meaningful, then derive what optimality requires.

\begin{insight}[Central Question]
\emph{Does there exist a fundamentally optimal regularization strategy, derivable from first principles?}
\end{insight}

This work answers affirmatively, with a crucial caveat: optimality is \emph{conditional} on explicit assumptions. We do not claim that learning systems ``are'' dissipative structures in some metaphysical sense. Rather, we prove that \emph{if} one accepts three physically motivated assumptions, \emph{then} a unique regularization strategy emerges as thermodynamically optimal.

\subsubsection{Building on Prior Work}

This framework builds upon foundational contributions in information geometry---particularly \v{C}encov's uniqueness theorem for the Fisher--Rao metric and Amari's work on statistical manifolds---and thermodynamics of computation, including Landauer's principle and Prigogine's theory of dissipative structures. Section~\ref{sec:related-work} provides a detailed positioning relative to these works and recent developments in self-supervised learning.

\begin{insight}[Positioning and Scope of This Work]
This paper is a theoretical contribution.
Its goal is not to propose a new learning algorithm, nor to provide quantitative
performance guarantees on existing methods.
Instead, it introduces a unifying conceptual framework for reasoning about learning
as a dissipative process under finite resources.

\smallskip
\noindent
\textbf{What this work does:}
\begin{itemize}
    \item Introduces a conditional thermodynamic and information-geometric framework (BEDS)
    for modeling learning dynamics under explicit assumptions.
    \item Derives idealized lower bounds and reference geometries for information dissipation.
    \item Provides a principled lens to interpret regularization, forgetting, and stability
    across learning paradigms.
    \item Identifies qualitative regimes, trade-offs, and failure modes relevant to continual
    and resource-constrained systems, such as digital twins.
\end{itemize}

\smallskip
\noindent
\textbf{What this work does not claim:}
\begin{itemize}
    \item It does not claim that real learning algorithms operate in the quasi-static regime
    or follow Fisher--Rao geodesics.
    \item It does not claim that BEDS defines an implementable or optimal training procedure.
    \item It does not provide empirical benchmarks or performance predictions.
    \item It does not model hardware-level energy consumption or discrete computational costs.
\end{itemize}

\smallskip
\noindent
The framework should therefore be understood as a reference model and diagnostic tool,
designed to structure future empirical and methodological work rather than to replace
existing algorithmic approaches.
\end{insight}

\subsection{Structure of This Document}
\label{subsec:structure}

The document proceeds as follows. Section~\ref{sec:beds-visual-guide} provides an overview from entropy to BEDS, establishing intuition before formalism. Section~\ref{sec:related-work} positions BEDS within existing literature on information geometry, thermodynamics of computation, and machine learning. Section~\ref{sec:assumptions} states the three explicit assumptions. Section~\ref{sec:main-theorem} proves the Conditional Optimality Theorem. Section~\ref{sec:beds-framework} develops the BEDS parameter framework. Section~\ref{sec:unification} shows how existing methods emerge as special cases. Section~\ref{sec:efficiency} introduces thermodynamic efficiency of learning. Section~\ref{sec:recursive-hierarchy} develops the recursive structure of BEDS, its connection to Markov Random Fields, Energy-Based Models, and JEPA. Section~\ref{sec:problem-taxonomy} classifies learning problems by crystallization behavior. Section~\ref{sec:predictions} proposes testable predictions. Section~\ref{sec:discussion} addresses limitations and open questions. The appendices contain proofs and speculative material including the GLP conjecture.


\section{BEDS overview}
\label{sec:beds-visual-guide}

This section provides a visual overview of the BEDS framework before diving into the formal mathematical treatment. We trace the logical path from entropy to the complete BEDS state space, showing how each step is necessary rather than arbitrary.


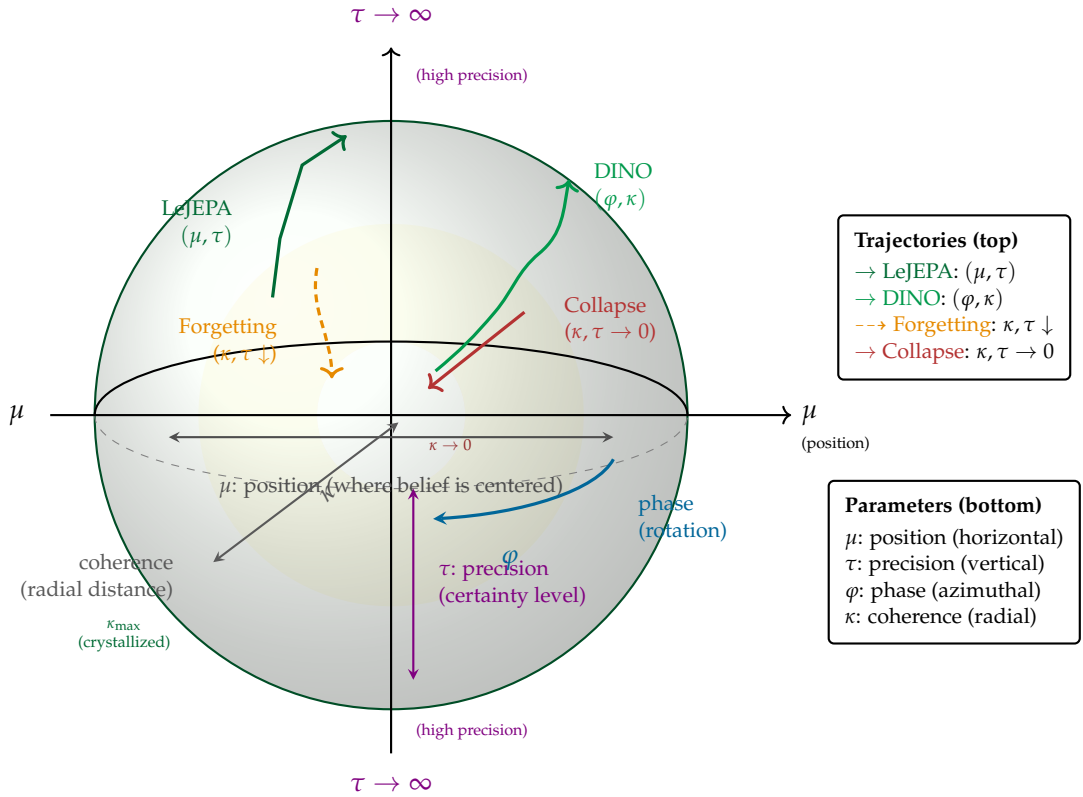
\begin{figure}[H]
\centering
\begin{tikzpicture}[scale=1.95]

  \fill[red!15, opacity=0.7] (0,0) circle (0.5cm);

  \fill[yellow!20, opacity=0.5] (0,0) circle (1.3cm);
  \fill[white] (0,0) circle (0.5cm); 

  \shade[ball color=green!5, opacity=0.4] (0,0) circle (2cm);
  \draw[thick, bedsgreen!50!black] (0,0) circle (2cm);

  \draw[dashed, gray] (-2,0) arc (180:360:2 and 0.5);
  \draw[thick] (-2,0) arc (180:0:2 and 0.5);


  \draw[thick, ->] (0,-2.3) -- (0,2.5);

  \draw[thick, ->] (-2.3,0) -- (2.7,0);


  \node[above, font=\small\bfseries, text=bedspurple] at (0, 2.6) {$\tau \to \infty$};
  \node[right, font=\tiny, text=bedspurple] at (0.1, 2.3) {(high precision)};

  \draw[very thick, bedsgreen!70!black, ->]
    (-0.8, 0.8) -- (-0.75, 1.2) -- (-0.6, 1.7) -- (-0.3, 1.9);
  \node[left, font=\scriptsize, text=bedsgreen!70!black, align=right] at (-1.0, 1.3) {LeJEPA\\$(\mu, \tau)$};

  \draw[very thick, bedsgreen, ->]
    (0.3, 0.3) to[out=40, in=-130] (0.7, 0.7)
    to[out=50, in=-140] (1.0, 1.1)
    to[out=40, in=-100] (1.2, 1.6);
  \node[right, font=\scriptsize, text=bedsgreen, align=left] at (1.3, 1.55) {DINO\\$(\varphi, \kappa)$};

  \draw[very thick, bedsorange, ->, densely dashed]
    (-0.5, 1.0) to[out=-100, in=90] (-0.4, 0.25);
  \node[left, font=\scriptsize, text=bedsorange, align=right] at (-0.7, 0.5) {Forgetting\\($\kappa, \tau \downarrow$)};

  \draw[very thick, bedsred, ->]
    (0.9, 0.7) -- (0.25, 0.18);
  \node[right, font=\scriptsize, text=bedsred, align=left] at (1.1, 0.65) {Collapse\\($\kappa, \tau \to 0$)};


  \node[below, font=\small\bfseries, text=bedspurple] at (0,-2.4) {$\tau \to \infty$};
  \node[right, font=\tiny, text=bedspurple] at (0.1, -2.15) {(high precision)};

  \node[right, font=\small\bfseries] at (2.7, 0) {$\mu$};
  \node[right, font=\tiny] at (2.7, -0.2) {(position)};
  \node[left, font=\small] at (-2.4,0) {$\mu$};

  \draw[thick, black!70, <->, >=stealth] (-1.5, -0.15) -- (1.5, -0.15);
  \node[below, font=\scriptsize, text=black!70] at (0, -0.35) {$\mu$: position (where belief is centered)};

  \draw[thick, bedspurple, <->, >=stealth] (0.15, -0.5) -- (0.15, -1.8);
  \node[right, font=\scriptsize, text=bedspurple, align=left] at (0.25, -1.15) {$\tau$: precision\\(certainty level)};

  \draw[very thick, bedsblue, ->, >=stealth]
    (1.5, -0.3) arc (-10:-80:1.5 and 0.5);
  \node[below, font=\small\bfseries, text=bedsblue] at (0.8, -0.85) {$\varphi$};
  \node[right, font=\scriptsize, text=bedsblue, align=left] at (1.6, -0.7) {phase\\(rotation)};

  \draw[thick, gray!70!black, <->, >=stealth] (0.05,-0.05) -- (-1.2, -1.0);
  \node[below, font=\small\bfseries, text=gray!70!black, rotate=40] at (-0.5, -0.45) {$\kappa$};
  \node[left, font=\scriptsize, text=gray!70!black, align=right] at (-1.4, -1.1) {coherence\\(radial distance)};

  \node[font=\tiny, text=bedsred!80!black] at (0.4, -0.2) {$\kappa \to 0$};
  \node[font=\tiny, text=bedsgreen!70!black, align=center] at (-1.8, -1.5) {$\kappa_{\max}$\\(crystallized)};


  \node[draw, thick, fill=white, font=\scriptsize, align=left,
        rounded corners=2pt, inner sep=6pt] at (3.8, 0.8) {
    \textbf{Trajectories (top)}\\[2pt]
    \textcolor{bedsgreen!70!black}{$\to$ LeJEPA}: $(\mu, \tau)$\\
    \textcolor{bedsgreen}{$\to$ DINO}: $(\varphi, \kappa)$\\
    \textcolor{bedsorange}{$\dashrightarrow$ Forgetting}: $\kappa, \tau \downarrow$\\
    \textcolor{bedsred}{$\to$ Collapse}: $\kappa, \tau \to 0$
  };

  \node[draw, thick, fill=white, font=\scriptsize, align=left,
        rounded corners=2pt, inner sep=6pt] at (3.8, -1.0) {
    \textbf{Parameters (bottom)}\\[2pt]
    $\mu$: position (horizontal)\\
    $\tau$: precision (vertical)\\
    $\varphi$: phase (azimuthal)\\
    $\kappa$: coherence (radial)
  };

\end{tikzpicture}
\caption{\textbf{The BEDS state space with the four canonical parameters.}
\emph{Top hemisphere}: Learning trajectories illustrated by two complementary paradigms---LeJEPA learns on $(\mu, \tau)$
with vertical movement toward poles (increasing precision $\tau \to \infty$), while DINO learns on $(\varphi, \kappa)$
with spiral movement \emph{toward the surface} (coherence $\kappa$ increases toward crystallization while phase $\varphi$ rotates).
Systems may also experience forgetting (decreasing $\kappa$ and $\tau$) or collapse toward center ($\kappa, \tau \to 0$).
\emph{Bottom hemisphere}: The four BEDS parameters visualized---position $\mu$ (horizontal),
precision $\tau$ (vertical), phase $\varphi$ (azimuthal rotation), and coherence $\kappa$ (radial distance).
Poles represent high precision ($\tau \to \infty$); the surface represents crystallized states with maximum coherence ($\kappa_{\max}$);
the center represents collapse ($\kappa \to 0$).}
\label{fig:beds-state-space-canonical}
\end{figure}

The framework naturally encompasses modern architectures: Transformer attention
operates primarily on the temporal component $(\phi, \kappa)$, where $\sqrt{d_k}$
controls attention sharpness (effective $\kappa$), while diffusion models traverse
the full state space---forward process as dissipation ($\tau \to 0$),
reverse process as reconstruction ($\tau$ increases via learned score).
Similarly, Gaussian splatting representations---where each primitive carries
position $\mu$ and precision $\tau = \Sigma^{-1}$---constitute native belief-space
encodings, suggesting that self-supervised learning can operate directly on
Gaussian primitives without requiring learned latent spaces.

\subsection{Neural Networks as Dissipative Structures}
\label{subsec:nn-dissipative}

The previous section established that neural networks during training satisfy Prigogine's four conditions for dissipative structures: (1) openness via continuous data through mini-batches, (2) far-from-equilibrium operation with structured weights distinct from random initialization, (3) nonlinearity in activations, attention, and backpropagation, and (4) dissipation through regularization that exports entropy. This recognition raises a fundamental question: \emph{how do we formalize dissipative learning mathematically?} We need a representation of uncertainty (beliefs about parameters), a geometry to measure distances between belief states, and a notion of optimal trajectories through belief space. The BEDS framework provides exactly this formalization. The following section traces the logical path from these physical insights to the mathematical structure.

\subsection{From Dissipation to BEDS: The Logical Path}
\label{subsec:dissipation-to-beds}

We now trace the logical steps from dissipative structure to mathematical framework. Figure~\ref{fig:entropy-to-beds} summarizes this derivation: each arrow represents a necessary logical step, from Shannon entropy through Prigogine's conditions to the BEDS state space, with assumptions A1--A3 entering at precisely identified points. Each step is necessary---not arbitrary---conditional on our assumptions.

\textbf{Step 1: Representing Uncertainty.}
A dissipative structure maintains order against entropy. To formalize ``order,'' we need to represent beliefs about the system's state. Bayes' theorem provides this: $p(\theta|D) \propto p(D|\theta) p(\theta)$. The posterior distribution encodes what the system has learned; the prior encodes what it would forget without data.

\textbf{Step 2: Which Distributions? (A2)}
Not all probability distributions are suitable. \emph{Under Assumption A2}, we use maximum-entropy distributions given known constraints. For mean and variance constraints: Gaussians $\mathcal{N}(\mu, \tau^{-1})$. For circular constraints: von Mises $\text{vM}(\phi, \kappa)$. These are thermodynamically natural---they maximize disorder subject to what we actually know.

\textbf{Step 3: Measuring Distances (A1)}
How do we measure how far the system is from equilibrium? We need a distance on probability distributions. \emph{Under Assumption A1}, this distance must be parametrization-invariant (intrinsic to the distributions, not their coordinates). \v{C}encov's theorem states: the Fisher--Rao metric is the unique such metric:
\[
ds^2 = \tau \, d\mu^2 + \frac{d\tau^2}{2\tau^2}
\]
This is the metric of the hyperbolic plane $\HH^2$---the natural geometry of Gaussian beliefs.

\textbf{Step 4: Optimal Trajectories (A3)}
In thermodynamics, quasi-static processes minimize dissipation. \emph{Under Assumption A3}, optimal learning trajectories are geodesics in Fisher--Rao geometry---the paths of minimum entropy production.

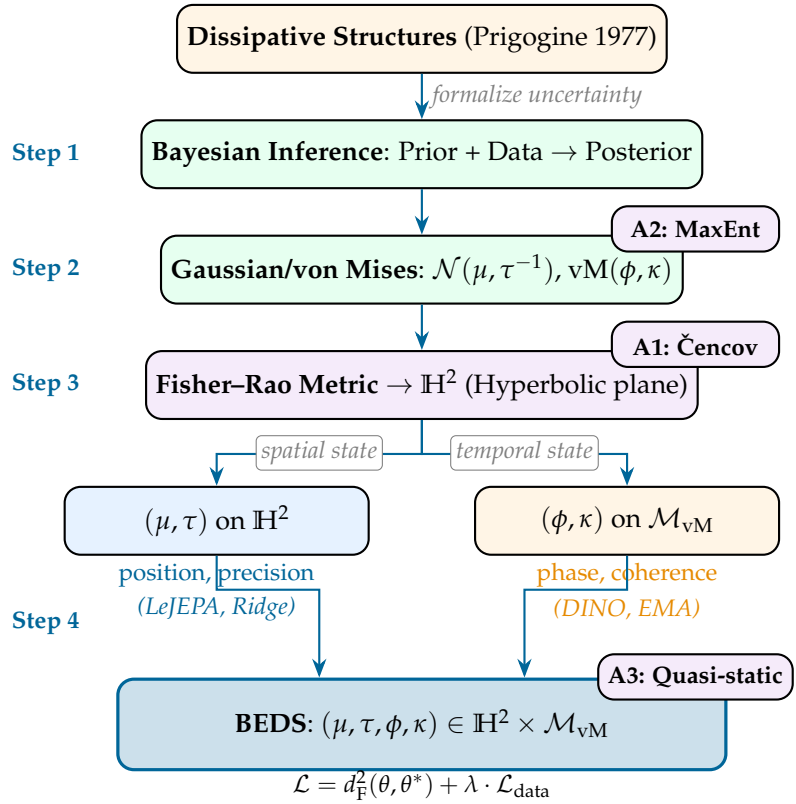
\begin{figure}[H]
\centering
\begin{tikzpicture}[
    scale=0.9,
    every node/.style={font=\small},
    concept/.style={draw, rounded corners=6pt, minimum width=5.5cm, minimum height=0.9cm, thick, fill=white},
    assumption/.style={draw, rounded corners=6pt, minimum width=2.2cm, minimum height=0.6cm, thick, fill=lightpurple, font=\footnotesize\bfseries},
    arrow/.style={-{Stealth[length=2.5mm]}, thick, bedsblue},
    annotation/.style={font=\footnotesize\itshape, text=gray}
]


\node[font=\footnotesize\bfseries, bedsblue] at (-5.5,3.9) {Step 1};
\node[font=\footnotesize\bfseries, bedsblue] at (-5.5,2.2) {Step 2};
\node[font=\footnotesize\bfseries, bedsblue] at (-5.5,0.5) {Step 3};
\node[font=\footnotesize\bfseries, bedsblue] at (-5.5,-3) {Step 4};

\node[concept, fill=lightorange] (prigogine) at (0,5.6) {\textbf{Dissipative Structures} (Prigogine 1977)};

\node[concept, fill=lightgreen] (bayes) at (0,3.9) {\textbf{Bayesian Inference}: Prior + Data $\to$ Posterior};

\node[concept, fill=lightgreen] (gauss) at (0,2.2) {\textbf{Gaussian/von Mises}: $\mathcal{N}(\mu, \tau^{-1})$, vM$(\phi, \kappa)$};
\node[concept, fill=lightpurple] (fisher) at (0,0.5) {\textbf{Fisher--Rao Metric} $\to$ $\HH^2$ (Hyperbolic plane)};

\node[assumption] (A2) at (4,2.8) {A2: MaxEnt};
\node[assumption] (A1) at (4,1.1) {A1: \v{C}encov};

\node[concept, fill=lightblue, minimum width=4cm] (spatial) at (-3,-1.5) {$(\mu, \tau)$ on $\HH^2$};
\node[concept, fill=lightorange, minimum width=4cm] (temporal) at (3,-1.5) {$(\phi, \kappa)$ on $\MvM$};

\node[font=\footnotesize, bedsblue] at (-3,-2.3) {position, precision};
\node[font=\footnotesize, bedsorange] at (3,-2.3) {phase, coherence};

\node[font=\footnotesize\itshape, bedsblue] at (-3,-2.8) {(LeJEPA, Ridge)};
\node[font=\footnotesize\itshape, bedsorange] at (3,-2.8) {(DINO, EMA)};

\node[concept, fill=bedsblue!20, minimum width=8cm, minimum height=1.2cm, very thick, draw=bedsblue, text=black] (beds) at (0,-4.5) {\textbf{BEDS}: $(\mu, \tau, \phi, \kappa) \in \HH^2 \times \MvM$};

\node[assumption] (A3) at (4,-3.8) {A3: Quasi-static};

\node[font=\footnotesize] at (0,-5.4) {$\mathcal{L} = \dF^2(\theta, \theta^*) + \lambda \cdot \mathcal{L}_{\text{data}}$};

\draw[arrow] (prigogine) -- (bayes) node[midway, right, annotation] {formalize uncertainty};
\draw[arrow] (bayes) -- (gauss);
\draw[arrow] (gauss) -- (fisher);
\draw[arrow] (fisher.south) -- ++(0,-0.5) -| (spatial.north);
\draw[arrow] (fisher.south) -- ++(0,-0.5) -| (temporal.north);
\draw[arrow] (spatial.south) -- ++(0,-0.5) -| ([xshift=-1.5cm]beds.north);
\draw[arrow] (temporal.south) -- ++(0,-0.5) -| ([xshift=1.5cm]beds.north);

\node[annotation, fill=white, draw=gray, rounded corners=2pt, inner sep=2pt] at (-1.5,-0.5) {spatial state};
\node[annotation, fill=white, draw=gray, rounded corners=2pt, inner sep=2pt] at (1.5,-0.5) {temporal state};

\end{tikzpicture}
\caption{\textbf{From entropy to BEDS: The logical path.} Each step follows from the previous, with assumptions A1, A2, A3 entering at specific points. Entropy equals uncertainty (Shannon). Order requires entropy export (Prigogine). A2 (MaxEnt) yields Gaussians and von Mises. A1 (\v{C}encov) yields Fisher--Rao geometry. A3 (quasi-static) makes geodesics optimal. The result: BEDS state space $\HH^2 \times \MvM$.}
\label{fig:entropy-to-beds}
\end{figure}

\medskip

The result: the BEDS state space $(\mu, \tau, \phi, \kappa) \in \HH^2 \times \MvM$, where:
\begin{itemize}
    \item $(\mu, \tau)$: spatial beliefs (position, precision) on the hyperbolic plane
    \item $(\phi, \kappa)$: temporal beliefs (phase, coherence) on the von Mises manifold
\end{itemize}

And one unifying loss function:
\[
\mathcal{L} = \dF^2(\theta, \theta^*) + \lambda \cdot \mathcal{L}_{\text{data}}
\]

The complete picture emerges from these considerations: a BEDS agent maintains ordered beliefs $(\mu, \tau, \phi, \kappa)$ against natural dissipation toward prior/equilibrium. This steady state requires continuous power expenditure (bounded below by $P \geq \gamma \kB T / 2$), information influx from data, and entropy export through regularization.

\subsection{Recursive Structure: Crystallization}
\label{subsec:recursive-structure}

BEDS has a \textbf{recursive structure} that mirrors how deep networks operate. Figure~\ref{fig:recursive-structure} illustrates this crystallization process: Layer 1 learns edge detectors with high uncertainty; once learned, these become fixed axioms (crystallized beliefs) that Layer 2 takes as given when learning textures, and so on up the hierarchy. The posterior at level $n$ becomes the prior at level $n+1$: $p_{n+1}(\theta) = p_n(\theta | D_n)$. What was uncertain at one level \emph{crystallizes} into assumed structure at the next.

Consider how a convolutional network works: early layers learn to detect edges (this knowledge crystallizes), and later layers take ``edges exist'' as a given axiom to learn higher-level patterns like faces. Each layer inherits the certainties of the previous layer.

Under Assumption A2, this crystallization process is well-defined: each level's beliefs are maximum-entropy distributions, and the crystallized structure at level $n+1$ is the mean of the posterior at level $n$.

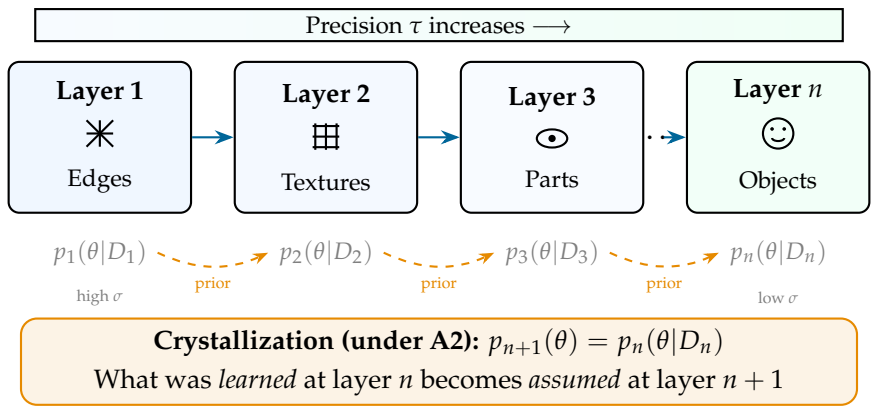
\begin{figure}[H]
\centering
\begin{tikzpicture}[
    scale=0.85,
    every node/.style={font=\small},
    layer/.style={draw, rounded corners=4pt, minimum width=2.4cm, minimum height=2cm, thick, align=center},
    arrow/.style={-{Stealth[length=2.5mm]}, thick, bedsblue},
    crystalarrow/.style={-{Stealth[length=2mm]}, thick, bedsorange, dashed},
    formula/.style={font=\footnotesize, text=gray},
    label/.style={font=\footnotesize\bfseries}
]

\node[layer, fill=lightblue!60] (L1) at (0,0) {
    \textbf{Layer 1}\\[4pt]
    \begin{tikzpicture}[scale=0.35]
        \draw[thick] (-0.5,0.5) -- (0.5,-0.5);
        \draw[thick] (0,0.5) -- (0,-0.5);
        \draw[thick] (-0.5,0) -- (0.5,0);
        \draw[thick] (-0.5,-0.5) -- (0.5,0.5);
    \end{tikzpicture}\\[2pt]
    {\footnotesize Edges}
};

\node[layer, fill=lightblue!45] (L2) at (3.5,0) {
    \textbf{Layer 2}\\[4pt]
    \begin{tikzpicture}[scale=0.35]
        \foreach \i in {-0.4,0,0.4} {
            \draw[thick] (\i,-0.5) -- (\i,0.5);
            \draw[thick] (-0.5,\i) -- (0.5,\i);
        }
    \end{tikzpicture}\\[2pt]
    {\footnotesize Textures}
};

\node[layer, fill=lightblue!30] (L3) at (7,0) {
    \textbf{Layer 3}\\[4pt]
    \begin{tikzpicture}[scale=0.4]
        \draw[thick] (0,0) ellipse (0.5 and 0.3);
        \fill (0,0) circle (0.12);
    \end{tikzpicture}\\[2pt]
    {\footnotesize Parts}
};

\node[layer, fill=lightgreen!50] (L4) at (10.5,0) {
    \textbf{Layer $n$}\\[4pt]
    \begin{tikzpicture}[scale=0.4]
        \draw[thick] (0,0) circle (0.5);
        \fill (-0.2,0.15) circle (0.06);
        \fill (0.2,0.15) circle (0.06);
        \draw[thick] (-0.2,-0.15) arc (180:360:0.2 and 0.1);
    \end{tikzpicture}\\[2pt]
    {\footnotesize Objects}
};

\draw[arrow] (L1) -- (L2);
\draw[arrow] (L2) -- (L3);
\node at (8.75,0) {\large\ldots};
\draw[arrow] ([xshift=0.3cm]L3.east) -- (L4);

\node[formula] (post1) at (0,-1.8) {$p_1(\theta|D_1)$};
\node[formula] (post2) at (3.5,-1.8) {$p_2(\theta|D_2)$};
\node[formula] (post3) at (7,-1.8) {$p_3(\theta|D_3)$};
\node[formula] (postn) at (10.5,-1.8) {$p_n(\theta|D_n)$};

\draw[crystalarrow] (post1.east) to[bend right=25] node[below, font=\tiny, bedsorange] {prior} (post2.west);
\draw[crystalarrow] (post2.east) to[bend right=25] node[below, font=\tiny, bedsorange] {prior} (post3.west);
\draw[crystalarrow] (post3.east) to[bend right=25] node[below, font=\tiny, bedsorange] {prior} (postn.west);

\node[font=\tiny, gray] at (0,-2.5) {high $\sigma$};
\node[font=\tiny, gray] at (10.5,-2.5) {low $\sigma$};

\shade[left color=lightblue!30, right color=lightgreen!60] (-1,1.5) rectangle (11.5,2.0);
\draw[thick] (-1,1.5) rectangle (11.5,2.0);
\node[font=\footnotesize] at (5.25,1.75) {Precision $\tau$ increases $\longrightarrow$};

\node[draw, rounded corners=6pt, fill=bedsorange!10, thick, draw=bedsorange, text=black,
      minimum width=11cm, align=center, font=\small] at (5.25,-3.5) {
    \textbf{Crystallization (under A2):} $p_{n+1}(\theta) = p_n(\theta | D_n)$\\[2pt]
    What was \emph{learned} at layer $n$ becomes \emph{assumed} at layer $n+1$
};

\end{tikzpicture}
\caption{\textbf{Recursive structure in deep networks.} Each layer's posterior becomes the next layer's prior. Early layers learn low-level features (edges, textures) which ``crystallize'' into axioms for later layers. Under A2, this hierarchical belief propagation is well-defined through maximum-entropy distributions.}
\label{fig:recursive-structure}
\end{figure}

This recursive crystallization pattern provides the foundation for
understanding hierarchical learning architectures. The next section
examines how this principle manifests in the energetic structure of
deep networks.


\subsection{Hierarchical Architecture}
\label{subsec:hierarchical-architecture}

The recursive principle established above has direct implications for
the energy requirements of hierarchical systems.

The recursive structure of BEDS enables hierarchical learning, where each level crystallizes patterns from the level below. This architecture (illustrated in Figure~\ref{fig:recursive-levels}) ensures that the posterior at each level becomes the prior for the next, and maintenance energy decreases geometrically with level, making abstract knowledge energetically cheap to maintain.


\subsection{The Energy-Precision Bound (Preview)}
\label{subsec:energy-precision-preview}

Under A1--A3, we derive a fundamental constraint: maintaining precise beliefs requires energy expenditure. Figure~\ref{fig:energy-precision} visualizes this trade-off as a Pareto diagram: the shaded region below $P_{\min}$ is thermodynamically forbidden---no learning system, regardless of architecture, can achieve precision $\tau$ while dissipating less than $\gamma k_B T / 2$ per unit time. Ridge regression, dropout, and BEDS-optimal methods occupy different positions in the achievable region. This is stated precisely as a Corollary in Section~\ref{sec:main-theorem}; here we preview the result visually.

\begin{figure}[H]
\centering
\begin{tikzpicture}[scale=1.0]

\draw[-{Stealth}, thick] (0,0) -- (8,0) node[right] {Precision $\tau$};
\draw[-{Stealth}, thick] (0,0) -- (0,5.5) node[above] {Power $P$};

\draw[bedsred, very thick] (0,1.5) -- (7.5,1.5);
\node[bedsred, font=\footnotesize, anchor=west] at (7.6,1.5) {$P_{\min} = \frac{\gamma \kB T}{2}$};

\fill[bedsred, opacity=0.1] (0,0) rectangle (7.5,1.5);
\node[bedsred, font=\footnotesize] at (3.75,0.7) {FORBIDDEN REGION};

\fill[bedsgreen, opacity=0.1] (0,1.5) rectangle (7.5,5);

\fill[bedsblue] (1.5,2.5) circle (4pt) node[above right, font=\footnotesize] {Ridge};
\fill[bedsorange] (3,3.2) circle (4pt) node[above right, font=\footnotesize] {Dropout};
\fill[bedspurple] (5,4) circle (4pt) node[above right, font=\footnotesize] {SAC};
\fill[bedsgreen] (6.5,4.5) circle (4pt) node[above, font=\footnotesize] {SIGReg};

\draw[gray, dashed] (0,1.5) -- (6.5,4.5);
\node[gray, font=\footnotesize, rotate=25] at (4,2.8) {Pareto frontier};

\node[font=\footnotesize, text width=3cm, align=center] at (1.5, 4.5)
    {Achievable\\(sufficient power)};

\begin{scope}[shift={(0,-0.5)}]
    \node[font=\footnotesize, anchor=west] at (0,0) {$\gamma$ = dissipation rate};
    \node[font=\footnotesize, anchor=west] at (4,0) {$\kB T$ = thermal energy};
\end{scope}

\node[draw, rounded corners=3pt, fill=lightpurple, font=\footnotesize\bfseries] at (6, 5.2) {Corollary of Theorem~\ref{thm:conditional-optimality}};

\end{tikzpicture}
\caption{\textbf{The Energy-Precision bound (Corollary~\ref{cor:energy-precision}).} Under A1--A3, no learning system can operate in the forbidden region below $P_{\min} = \gamma \kB T / 2$. Different methods achieve different trade-offs on the Pareto frontier. This bound is independent of precision---maintaining \emph{any} structured belief costs at least this much.}
\label{fig:energy-precision}
\end{figure}
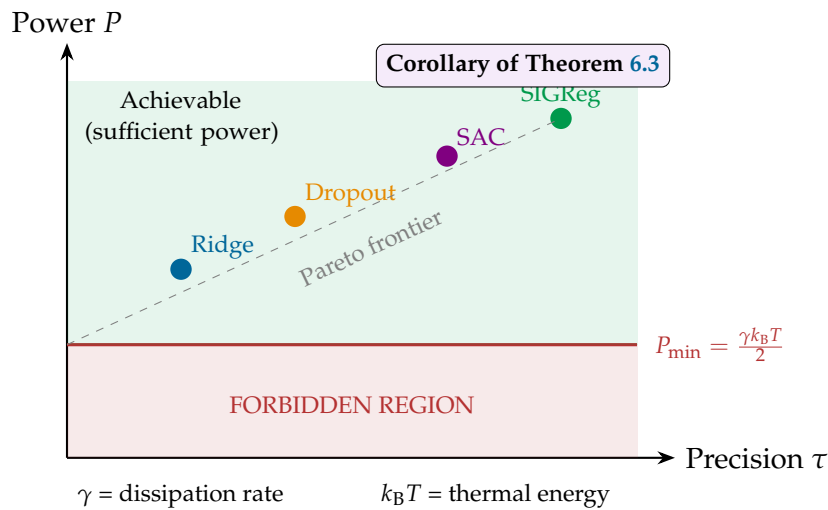

The theorem has profound implications:
\begin{enumerate}[leftmargin=*]
    \item \textbf{No free precision.} You cannot maintain arbitrarily precise beliefs without paying a computational cost. This explains why over-parameterized models require regularization.

    \item \textbf{Regularization = controlled dissipation.} Weight decay, dropout, and early stopping are all mechanisms for managing the energy-precision trade-off.

    \item \textbf{Efficiency is measurable.} Different methods achieve different positions in the energy-precision diagram. SAC and SIGReg operate closer to the thermodynamic bound, making them more efficient.

    \item \textbf{Temperature matters.} In high-noise environments ($\kB T$ large), maintaining precision is more expensive. This formalizes the intuition that learning from noisy data is harder.
\end{enumerate}


\subsection{BEDS in Three Sentences}
\label{subsec:beds-three-sentences}

\begin{keyresult}[BEDS in Three Sentences]
\begin{enumerate}
    \item \textbf{Under A1--A3, optimal regularization uses Fisher--Rao distance.} This is the unique geometry respecting intrinsic information measures (\v{C}encov's theorem via A1).
    \item \textbf{Four parameters $(\mu, \tau, \phi, \kappa)$ on $\HH^2 \times \MvM$ capture belief states.} This parameterization arises from maximum-entropy distributions (A2).
    \item \textbf{One equation unifies all methods:} $\mathcal{L} = \dF^2(\theta, \theta^*) + \lambda \cdot \mathcal{L}_{\text{data}}$
\end{enumerate}
\end{keyresult}

Figure~\ref{fig:five-contributions} provides a visual roadmap of these contributions: the left column shows the assumptions (A1--A3), the middle column the mathematical machinery they unlock (Fisher--Rao, maximum entropy, quasi-static processes), and the right column the resulting tools (unique geometry, thermodynamic bounds, interpretable state space, problem taxonomy).


\subsection{Problem Classes Preview}
\label{subsec:problem-classes-preview}

The product structure $\HH^2 \times \MvM$ suggests a natural classification of learning problems based on whether the system can reach a stable equilibrium or must continuously adapt. Since each component---spatial precision $\tau$ and temporal coherence $\kappa$---can independently be either crystallizable or maintainable, we obtain six distinct problem classes (Figure~\ref{fig:problem-taxonomy}). This taxonomy is not arbitrary: it follows directly from assumptions A1--A3 and determines which regularization strategies are thermodynamically appropriate. We formalize this as:

\begin{itemize}
    \item \textbf{BEDS-crystallizable}: The environment is stationary, and the system can converge to a fixed belief state. Examples: image classification, static regression.
    \item \textbf{BEDS-maintainable}: The environment is non-stationary, and the system must continuously track a moving target. Examples: online learning, continual learning, reinforcement learning in changing environments.
\end{itemize}

This distinction has practical consequences: BEDS-crystallizable problems can use high $\kappa$ (strong temporal coherence), while BEDS-maintainable problems require lower $\kappa$ to remain adaptive.

The formal treatment in Section~\ref{sec:beds-framework} develops these ideas rigorously.

With this intuitive overview complete, we turn to the rigorous development
of the framework. Part II establishes the mathematical foundations, while
Part III develops the practical implications for existing learning methods.

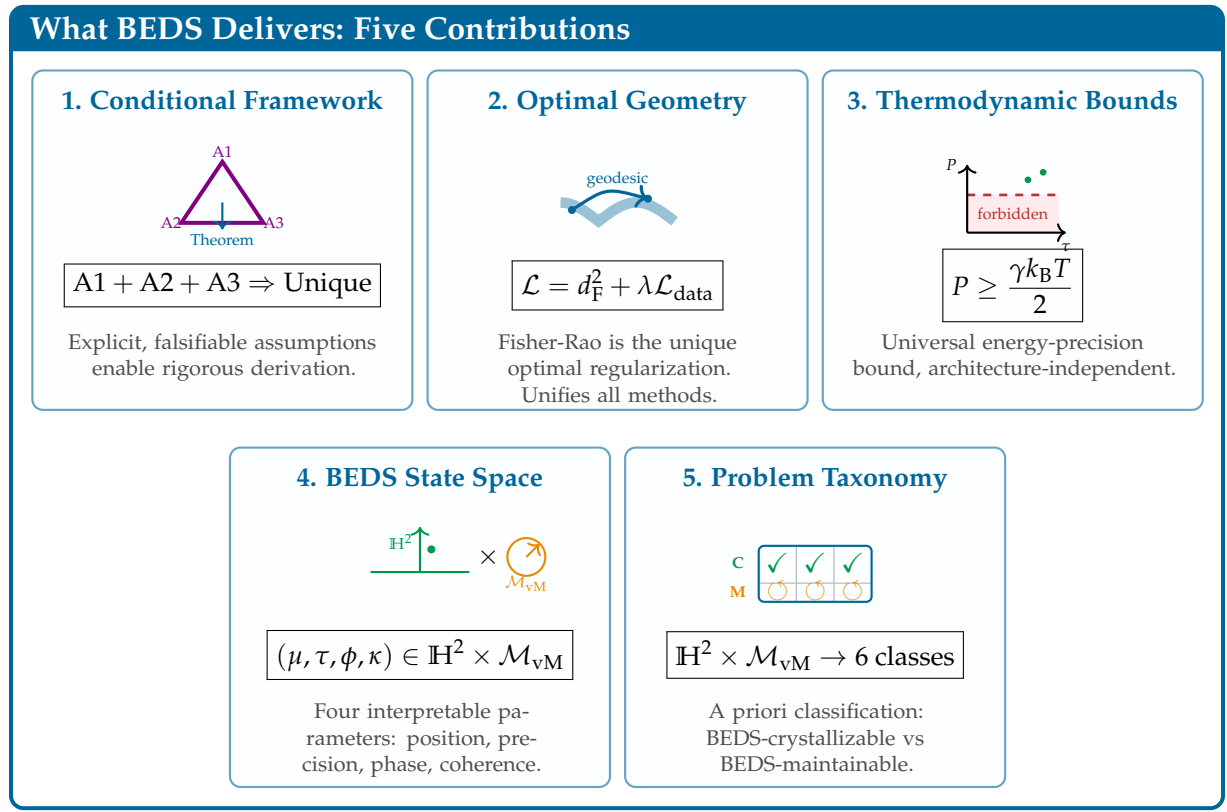
\begin{figure}[H]
\begin{tcolorbox}[
  colback=white,
  colframe=bedsblue,
  title={\textbf{\large What BEDS Delivers: Five Contributions}},
  fonttitle=\large,
  boxrule=1.5pt,
  arc=4pt,
  left=4pt, right=4pt, top=4pt, bottom=4pt
]
\centering
\begin{tikzpicture}[
    box/.style={rectangle, rounded corners=4pt,
        minimum width=5.0cm, minimum height=4.5cm,
        draw=bedsblue!60, line width=0.8pt, fill=lightblue!10, align=center},
    title/.style={font=\small\bfseries, text=bedsblue},
    formula/.style={font=\small},
    desc/.style={font=\scriptsize, text width=4.2cm, align=center, text=black!70}
]

\node[box] (b1) at (0,0) {};
\node[title, anchor=north] at ([yshift=-4pt]b1.north) {1. Conditional Framework};
\begin{scope}[shift={(0,0.5)}, scale=0.45]
    \draw[bedspurple, line width=1.5pt] (0,1.2) -- (-1.2,-0.6) -- (1.2,-0.6) -- cycle;
    \node[font=\tiny, bedspurple] at (0,1.5) {A1};
    \node[font=\tiny, bedspurple] at (-1.5,-0.6) {A2};
    \node[font=\tiny, bedspurple] at (1.5,-0.6) {A3};
    \draw[->, thick, bedsblue] (0,0) -- (0,-0.8);
    \node[font=\tiny, bedsblue] at (0,-1.1) {Theorem};
\end{scope}
\node[formula] at (0,-0.6) {$\boxed{\text{A1}+\text{A2}+\text{A3} \Rightarrow \text{Unique}}$};
\node[desc, anchor=north] at (0,-1.1) {Explicit, falsifiable assumptions enable rigorous derivation.};

\node[box] (b2) at (5.2,0) {};
\node[title, anchor=north] at ([yshift=-4pt]b2.north) {2. Optimal Geometry};
\begin{scope}[shift={(5.2,0.5)}, scale=0.5]
    \draw[bedsblue!40, line width=4pt] (-1.5,0) cos (-0.5,-0.5) sin (0.5,0) cos (1.5,-0.5);
    \draw[bedsblue, line width=1.2pt, ->] (-1.2,-0.2) .. controls (-0.3,0.4) .. (0.8,0.1);
    \fill[bedsblue] (-1.2,-0.2) circle (3pt);
    \fill[bedsblue] (0.8,0.1) circle (3pt);
    \node[font=\tiny, bedsblue] at (0,0.6) {geodesic};
\end{scope}
\node[formula] at (5.2,-0.6) {$\boxed{\mathcal{L} = \dF^2 + \lambda \mathcal{L}_{\text{data}}}$};
\node[desc, anchor=north] at (5.2,-1.1) {Fisher-Rao is the unique optimal regularization. Unifies all methods.};

\node[box] (b3) at (10.4,0) {};
\node[title, anchor=north] at ([yshift=-4pt]b3.north) {3. Thermodynamic Bounds};
\begin{scope}[shift={(10.4,0.5)}, scale=0.5]
    \fill[lightred] (-1.2,-0.8) rectangle (1.2,0.2);
    \draw[bedsred, line width=1pt, dashed] (-1.2,0.2) -- (1.2,0.2);
    \node[font=\tiny, bedsred] at (0,-0.3) {forbidden};
    \draw[->, thick] (-1.2,-0.8) -- (-1.2,1.0) node[left, font=\tiny] {$P$};
    \draw[->, thick] (-1.2,-0.8) -- (1.4,-0.8) node[below, font=\tiny] {$\tau$};
    \fill[bedsgreen] (0.4,0.6) circle (2.5pt);
    \fill[bedsgreen] (0.8,0.8) circle (2.5pt);
\end{scope}
\node[formula] at (10.4,-0.6) {$\boxed{P \geq \dfrac{\gamma \kB T}{2}}$};
\node[desc, anchor=north] at (10.4,-1.1) {Universal energy-precision bound, architecture-independent.};

\node[box] (b4) at (2.6,-5) {};
\node[title, anchor=north] at ([yshift=-4pt]b4.north) {4. BEDS State Space};
\begin{scope}[shift={(2.6,-4.4)}, scale=0.5]
    \draw[bedsgreen, thick] (-1.3,0) -- (1.3,0);
    \draw[bedsgreen, thick, ->] (0,0) -- (0,1.2);
    \fill[bedsgreen] (0.3,0.6) circle (3pt);
    \node[font=\tiny, bedsgreen] at (-0.5,0.8) {$\HH^2$};
    \node[font=\small] at (1.8,0.4) {$\times$};
    \draw[bedsorange, thick] (2.8,0.4) circle (0.5);
    \draw[bedsorange, thick, ->] (2.8,0.4) -- (3.15,0.75);
    \node[font=\tiny, bedsorange] at (2.8,-0.3) {$\MvM$};
\end{scope}
\node[formula] at (2.6,-5.5) {$\boxed{(\mu,\tau,\phi,\kappa) \in \HH^2 \times \MvM}$};
\node[desc, anchor=north] at (2.6,-6.0) {Four interpretable parameters: position, precision, phase, coherence.};

\node[box] (b5) at (7.8,-5) {};
\node[title, anchor=north] at ([yshift=-4pt]b5.north) {5. Problem Taxonomy};
\begin{scope}[shift={(7.8,-4.3)}, scale=0.55]
    \draw[gray!50, line width=0.5pt] (-1.35,-0.45) -- (1.35,-0.45);
    \draw[gray!50, line width=0.5pt] (-0.45,-0.9) -- (-0.45,0.45);
    \draw[gray!50, line width=0.5pt] (0.45,-0.9) -- (0.45,0.45);
    \draw[bedsblue, line width=0.8pt, rounded corners=2pt] (-1.35,-0.9) rectangle (1.35,0.45);
    \node[font=\tiny\bfseries, bedsgreen] at (-1.85,0.05) {C};
    \node[font=\tiny\bfseries, bedsorange] at (-1.85,-0.65) {M};
    \node[font=\small, bedsgreen] at (-0.9,0.05) {\checkmark};
    \node[font=\small, bedsgreen] at (0,0.05) {\checkmark};
    \node[font=\small, bedsgreen] at (0.9,0.05) {\checkmark};
    \node[font=\small, bedsorange] at (-0.9,-0.65) {$\circlearrowleft$};
    \node[font=\small, bedsorange] at (0,-0.65) {$\circlearrowleft$};
    \node[font=\small, bedsorange] at (0.9,-0.65) {$\circlearrowleft$};
\end{scope}
\node[formula] at (7.8,-5.5) {$\boxed{\HH^2 \times \MvM \to 6 \text{ classes}}$};
\node[desc, anchor=north] at (7.8,-6.0) {A priori classification: BEDS-crystallizable vs BEDS-maintainable.};

\end{tikzpicture}
\end{tcolorbox}
\caption{\textbf{The five contributions of BEDS.} All results are proven under assumptions A1--A3. The framework provides: (1) explicit foundations, (2) unique optimal geometry, (3) thermodynamic bounds, (4) interpretable state space, (5) problem classification.}
\label{fig:five-contributions}
\end{figure}


\section{Related Work}
\label{sec:related-work}

This section positions the BEDS framework within existing literature, clarifying both the foundations we build upon and the novel contributions we introduce. We organize the discussion along three axes: (1) information geometry and complexity measures in machine learning, (2) thermodynamics of information and computation, and (3) recent attempts to bridge these two domains.

\subsection{Information Geometry and Complexity Measures}
\label{subsec:info-geometry}

\subsubsection{The Fisher--Rao Metric: Foundations}

The use of differential geometry in statistical inference has a rich history. Rao~\cite{rao_1945} first introduced the Fisher information matrix as a Riemannian metric on statistical manifolds, enabling the measurement of distances between probability distributions. The fundamental uniqueness result came from \v{C}encov~\cite{cencov_1982}: \emph{the Fisher--Rao metric is the only Riemannian metric (up to a constant factor) that is invariant under sufficient statistics and Markov morphisms}. This theorem provides the mathematical foundation for our Assumption A1 (Intrinsic Measure)---we do not choose Fisher--Rao arbitrarily; it is the unique choice satisfying coordinate-independence.

Amari and Nagaoka~\cite{amari_nagaoka_2000} developed the comprehensive treatment of information geometry in their monograph \emph{Methods of Information Geometry}, establishing the dual connection structure ($\nabla$ and $\nabla^*$) that characterizes statistical manifolds. For Gaussian families, the Fisher--Rao metric takes the explicit form:
\begin{equation}
ds^2 = \tau \, d\mu^2 + \frac{1}{2\tau^2} d\tau^2
\end{equation}
which is precisely the metric of the hyperbolic half-plane $\HH^2$. This geometric structure underlies the spatial component of the BEDS state space.

\subsubsection{Fisher--Rao for Neural Networks: The Liang et al.\ Framework}

The work most closely related to ours is Liang et al.~\cite{liang_2019}, ``Fisher-Rao Metric, Geometry, and Complexity of Neural Networks.'' They introduced the \emph{Fisher--Rao norm} as a capacity measure for deep networks:
\begin{equation}
\|\theta\|_{\text{fr}}^2 = \langle \theta, I(\theta) \theta \rangle = \E\left[\langle \nabla_\theta \ell(f_\theta(X), Y), \theta \rangle^2\right]
\end{equation}
and proved an elegant analytical formula:
\begin{equation}
\|\theta\|_{\text{fr}}^2 = (L+1)^2 \, \E\left[\left\langle \frac{\partial \ell}{\partial f_\theta}, f_\theta \right\rangle^2\right]
\end{equation}
where $L$ is the network depth.

Their central result is the \textbf{umbrella theorem}: the Fisher--Rao norm dominates all other commonly used complexity measures:
\begin{equation}
\frac{1}{L+1} \|\theta\|_{\text{fr}} \leq \|\theta\|_X \quad \text{for } X \in \{\text{spectral, group, path}\}
\end{equation}
This implies that spectral norm bounds~\cite{bartlett_2017}, group norms, and path norms~\cite{neyshabur_2015} are all upper-bounded by Fisher--Rao.

\begin{table}[htbp]
\centering
\caption{\textbf{Comparison: Liang et al.~\cite{liang_2019} vs.\ BEDS.} Both frameworks use Fisher--Rao geometry, but with fundamentally different objectives.}
\label{tab:liang-vs-beds}
\begin{tabular}{p{3.5cm}p{5cm}p{5cm}}
\toprule
\textbf{Aspect} & \textbf{Liang et al.~\cite{liang_2019}} & \textbf{BEDS (this work)} \\
\midrule
Object of study & Norm $\|\theta\|_{\text{fr}}^2$ & Distance $\dF^2(\theta, \theta^*)$ \\
Question addressed & ``What properties does FR have?'' & ``Why is FR optimal?'' \\
Methodological stance & Descriptive (capacity measure) & Prescriptive (regularization target) \\
Main result & FR dominates other norms & FR is the \emph{unique} optimal regularization \\
Assumptions & Implicit (ReLU architectures) & Explicit (A1, A2, A3) \\
Reference point & Implicit $\theta^* = 0$ & Explicit prior $\theta^*$ \\
Temporal dynamics & Not addressed & Coherence $\kappa$ on $\MvM$ \\
\bottomrule
\end{tabular}
\end{table}

\textbf{Key distinction:} Liang et al.\ use Fisher--Rao as a \emph{post hoc diagnostic}---a way to characterize trained networks. BEDS instead proposes Fisher--Rao distance as a \emph{prescriptive objective}---the quantity to minimize during training. Furthermore, we derive this prescription from thermodynamic first principles and extend the state space to include temporal coherence.

\subsubsection{Other Complexity Measures}

The quest for appropriate complexity measures in deep learning has produced numerous proposals:

\begin{itemize}
    \item \textbf{VC dimension and Rademacher complexity}: Classical measures that grow with network size but fail to explain generalization in overparameterized regimes~\cite{golowich_2018}.

    \item \textbf{PAC-Bayes bounds}: Provide tighter generalization guarantees by incorporating prior knowledge~\cite{mcallester_1999,dziugaite_roy_2017}.

    \item \textbf{Compression-based measures}: Arora et al.~\cite{arora_2018} showed that compressibility correlates with generalization.
\end{itemize}

Hu et al.~\cite{hu_2021} provide a comprehensive survey distinguishing \emph{expressive capacity} (what functions a model can represent) from \emph{effective complexity} (what functions it actually learns). BEDS complements this taxonomic view with a generative principle: under A1--A3, the appropriate complexity measure is Fisher--Rao distance to a reference state.

\subsubsection{Natural Gradient and Optimization}

Amari~\cite{amari_1998} introduced \emph{natural gradient descent}, showing that optimization in Fisher--Rao geometry converges faster than Euclidean gradient descent:
\begin{equation}
\theta_{t+1} = \theta_t - \eta \, I(\theta_t)^{-1} \nabla \mathcal{L}(\theta_t)
\end{equation}
This is the steepest descent direction when distance is measured by Fisher--Rao rather than Euclidean metric.

Practical implementations include K-FAC~\cite{martens_grosse_2015}, which approximates the Fisher information matrix using Kronecker factorization, and Shampoo~\cite{gupta_2018}, which uses tensor decomposition. Martens~\cite{martens_2020} unified these approaches, showing they all approximate the natural gradient.

\textbf{Connection to BEDS:} These methods use Fisher geometry for \emph{optimization direction}, not as a \emph{regularization target}. Under BEDS, natural gradient is the dynamics of minimal dissipation---the trajectory that least disturbs the belief state.

\subsubsection{The Flat Minima Debate}

Hochreiter and Schmidhuber~\cite{hochreiter_schmidhuber_1997_flat} hypothesized that ``flat'' minima of the loss landscape generalize better than ``sharp'' ones. However, Dinh et al.~\cite{dinh_2017} demonstrated a fundamental problem: \emph{flatness is sensitive to reparametrization}. By rescaling weights between layers, one can make the Hessian arbitrarily large without changing the function computed.

Their key observation: ``Flatness is sensitive to network parametrization whereas generalization should be invariant.''

\textbf{BEDS resolution:} This paradox motivated our Assumption A1. Fisher--Rao distance is \emph{intrinsically invariant}---it depends only on the function $f_\theta$, not on the particular parametrization. If $f_{\theta_1} = f_{\theta_2}$, then $\dF(\theta_1, \theta^*) = \dF(\theta_2, \theta^*)$. Thus BEDS provides a notion of ``flatness'' that is mathematically well-defined.

\subsection{Thermodynamics of Information and Computation}
\label{subsec:thermodynamics}

\subsubsection{Landauer's Principle and the Cost of Erasure}

Landauer~\cite{landauer_1961} established the fundamental link between information and thermodynamics: \emph{erasing one bit of information requires dissipating at least $\kB T \ln 2$ joules of energy}. At room temperature (300K), this is approximately $2.87 \times 10^{-21}$ J/bit.

Bennett~\cite{bennett_1982} clarified that computation itself can be thermodynamically reversible---only the erasure of information is irreducibly irreversible. This distinction is crucial: forward inference in a neural network can in principle be reversible, but regularization (which discards information about the precise parameter values) cannot.

B\'erut et al.~\cite{berut_2012} experimentally verified Landauer's bound using a colloidal particle in a double-well potential, confirming the physical reality of information-energy equivalence.

\textbf{Connection to BEDS:} The Energy-Precision Bound (Corollary 4.4) is a direct application of Landauer's principle:
\begin{equation}
P_{\min} = \kB T \ln 2 \cdot D_{\KL}(q \| q^*)
\end{equation}
Regularization erases information (moving from a sharp posterior toward a broader prior), and this erasure has an unavoidable thermodynamic cost.

\subsubsection{Dissipative Structures}

Prigogine~\cite{prigogine_1977_nobel,prigogine_stengers_1984} developed the theory of \emph{dissipative structures}: ordered patterns that exist far from thermodynamic equilibrium and are maintained by continuous energy dissipation. The canonical example is B\'enard convection---when a fluid layer is heated from below beyond a critical threshold, it spontaneously organizes into hexagonal convection cells.

Four conditions characterize dissipative structures:
\begin{enumerate}
    \item \textbf{Openness}: Exchange of energy/matter with environment
    \item \textbf{Far from equilibrium}: Maintained state differs from equilibrium
    \item \textbf{Nonlinearity}: Feedback mechanisms
    \item \textbf{Dissipation}: Continuous entropy export
\end{enumerate}

Schr\"odinger~\cite{schrodinger_1944} anticipated these ideas in \emph{What is Life?}, proposing that living systems maintain order by ``feeding on negative entropy''---exporting disorder to maintain internal structure.

\textbf{Connection to BEDS:} Neural networks during training satisfy all four conditions: (1) data streams provide flux, (2) trained weights differ from random initialization, (3) nonlinear activations and attention provide feedback, (4) regularization exports entropy. BEDS makes this analogy precise and quantitative.

\subsubsection{Maximum Entropy Inference}

Jaynes~\cite{jaynes_1957} established maximum entropy as a principle for rational inference: given constraints, one should choose the probability distribution that maximizes entropy subject to those constraints. This is not arbitrary---it is the unique distribution that makes no assumptions beyond the given constraints.

For neural network representations constrained only to have finite variance, the maximum entropy distribution is Gaussian. This justifies Assumption A2: the reference state $\theta^* \sim \Normal(0, I)$ is the least presumptuous prior.

\subsection{Bridging Machine Learning and Thermodynamics}
\label{subsec:bridging}

\subsubsection{Historical Connections}

The connection between statistical mechanics and machine learning is not new. Hopfield~\cite{hopfield_1982} introduced associative memories with energy functions, and Hinton and Sejnowski~\cite{hinton_sejnowski_1983,ackley_hinton_sejnowski_1985} developed Boltzmann machines that explicitly frame learning as sampling from thermal distributions.

More recently, Sohl-Dickstein et al.~\cite{sohl_dickstein_2015} developed diffusion models by drawing on non-equilibrium thermodynamics, showing that iterative denoising can be understood as reversing a diffusion process. This perspective has proven remarkably fruitful, leading to state-of-the-art generative models.

\subsubsection{Recent Thermodynamic Bounds}

Bhattacharyya et al.~\cite{bhattacharyya_2024}, in ``Thermodynamic bounds on energy use in quasi-static Deep Neural Networks,'' map feedforward architectures onto physical Hamiltonians and derive bounds on training energy. They show that inference can be thermodynamically reversible while training is irreducibly dissipative.

\textbf{Distinction from BEDS:} Their work focuses on \emph{physical implementation}---the actual energy consumed by hardware. BEDS instead develops a \emph{conceptual framework} where thermodynamic language guides algorithm design. The two perspectives are complementary: Bhattacharyya et al.\ ask ``How much energy does this computation cost?'' while BEDS asks ``What regularization strategy is thermodynamically optimal?''

Whitelam~\cite{whitelam_2024} explores generative thermodynamic computing, using Langevin dynamics and thermal fluctuations for generation. This provides another perspective on thermodynamically-aware computation.

\subsubsection{The Free Energy Principle}

Friston~\cite{friston_2010} proposed the Free Energy Principle in neuroscience: biological systems minimize variational free energy, connecting Bayesian inference to thermodynamics. While influential, this framework has not been systematically applied to derive regularization strategies for machine learning.

BEDS can be seen as making the Friston connection explicit and actionable for artificial neural networks: the variational free energy decomposes into data fit (accuracy) and KL divergence from prior (complexity), exactly paralleling our fundamental equation.

\paragraph{Relation to Optimal Transport.}
BEDS is closely related to unbalanced optimal transport geometries such as
Wasserstein--Fisher--Rao, which combine mass transport with local creation and
destruction. While these frameworks focus on optimal paths between fixed distributions, BEDS instead studies viable, dissipative belief trajectories driven by continual learning and observation, with optimal transport arising only as a limiting case.

\subsubsection{Self-Supervised Learning Methods}

Modern self-supervised learning provides empirical motivation for BEDS. The evolution of these methods reveals a convergence toward thermodynamically sensible designs:

\begin{itemize}
    \item \textbf{SimCLR}~\cite{chen_2020_simclr}: Contrastive learning with large batches
    \item \textbf{BYOL}~\cite{grill_2020}: No negatives, uses EMA teacher
    \item \textbf{DINO}~\cite{caron_2021}: EMA + self-distillation + centering
    \item \textbf{VICReg}~\cite{bardes_2022}: Explicit variance-invariance-covariance regularization
    \item \textbf{SIGReg}~\cite{garrido_2023}: Regularization toward $\Normal(0, I)$
    \item \textbf{I-JEPA}~\cite{assran_2023}: Masked prediction in latent space
\end{itemize}

BEDS provides a unified interpretation:
\begin{itemize}
    \item SIGReg directly implements Fisher--Rao regularization toward the maximum entropy distribution
    \item EMA in BYOL/DINO controls temporal coherence: $\kappa \approx 1/(1-m)$ where $m$ is the momentum
    \item Centering in DINO exports entropy, preventing mode collapse
    \item Masking in I-JEPA creates information flux driving learning
\end{itemize}

These methods, developed through empirical trial-and-error, are revealed as approximations to thermodynamically optimal regularization.

\subsubsection{Reinforcement Learning and Entropy Regularization}

Haarnoja et al.~\cite{haarnoja_2018} introduced Soft Actor-Critic (SAC), adding entropy regularization to the RL objective:
\begin{equation}
J(\pi) = \E\left[\sum_t r_t + \alpha \mathcal{H}(\pi(\cdot|s_t))\right]
\end{equation}
The temperature $\alpha$ (RL temperature, distinct from $\alpha(t)$ in GNC, Section~\ref{subsec:gnc}) controls the exploration-exploitation tradeoff.

BEDS interprets SAC within our framework: the entropy bonus corresponds to low temporal coherence ($\kappa = 1/\alpha$), maintaining the system in an exploratory regime where phase $\phi$ is dispersed rather than locked.

\subsection{What BEDS Contributes Beyond Prior Work}
\label{subsec:novel-contributions}

To clarify precisely what is novel, we summarize the contributions that did not exist before this work:

\begin{table}[htbp]
\centering
\caption{\textbf{Novel contributions of BEDS.} Each row identifies a specific result and its epistemic status.}
\label{tab:novel-contributions}
\begin{tabular}{p{5cm}p{3cm}p{5.5cm}}
\toprule
\textbf{Contribution} & \textbf{Status} & \textbf{What prior work provided} \\
\midrule
Conditional Optimality Theorem (Thm.~4.3) & \proven & FR dominance (Liang), but not uniqueness \\
Energy-Precision Bound (Cor.~4.4) & \proven & Landauer bound, but not for ML \\
Euclidean suboptimality (Cor.~4.5) & \proven & Empirical observation, not formal proof \\
$\HH^2 \times \MvM$ state space & \proven & $\HH^2$ only (Amari) \\
Temporal coherence $\kappa$ & \textsc{Novel} & EMA as heuristic (Grill et al.) \\
Crystallization index $\mathcal{C} = \tau \cdot \kappa$ & \textsc{Novel} & None \\
Six-class problem taxonomy (Prop.~\ref{prop:taxonomy}) & \proven & Ad hoc problem classification \\
Unified equation $\mathcal{L} = \dF^2 + \lambda \mathcal{L}_{\text{data}}$ & \textsc{Novel} & Separate loss functions \\
\bottomrule
\end{tabular}
\end{table}

The central novelty is the \emph{synthesis}: combining \v{C}encov's uniqueness theorem (A1), maximum entropy inference (A2), and quasi-static optimality (A3) to derive a single equation that unifies regularization across supervised, self-supervised, and reinforcement learning paradigms.

\begin{keyresult}[Summary]
BEDS does not claim to discover Fisher--Rao geometry---this is classical (\v{C}encov, Rao, Amari). It does not claim to discover thermodynamic costs of computation---this is Landauer. What BEDS contributes is:
\begin{enumerate}
    \item The \emph{derivation} of Fisher--Rao regularization as the unique optimal strategy under explicit assumptions
    \item The \emph{extension} to temporal dynamics via the von Mises component
    \item The \emph{unification} of disparate heuristics into a single thermodynamic framework
\end{enumerate}
\end{keyresult}


\begin{table}[htbp]
\centering
\caption{\textbf{BEDS Interpretation of Machine Learning Concepts.}
Classical regularization and stabilization techniques find unified interpretation
within the thermodynamic framework. This ``Rosetta Stone'' enables practitioners to
translate familiar ML phenomena into the BEDS language.}
\label{tab:beds-rosetta}
\begin{tabular}{p{3.2cm}p{4.2cm}p{5.8cm}}
\toprule
\textbf{ML Concept} & \textbf{Classical View} & \textbf{BEDS Interpretation} \\
\midrule
\multicolumn{3}{l}{\textit{Regularization Techniques}} \\[2pt]
Weight decay & Prevents large weights & Dissipation toward prior ($\gamma > 0$) \\
Dropout & Prevents co-adaptation & Stochastic entropy injection \\
Batch normalization & Stabilizes activations & Entropy export via standardization \\
Early stopping & Prevents overfitting & Halts before over-crystallization ($\tau \to \infty$) \\
\midrule
\multicolumn{3}{l}{\textit{Training Pathologies}} \\[2pt]
Overfitting & Memorization of noise & Over-crystallization ($\tau \to \infty$ prematurely) \\
Mode collapse (GANs) & Generator ignores diversity & Premature crystallization ($\kappa \to \infty$) \\
Catastrophic forgetting & New learning erases old & Insufficient dissipation structure in prior \\
Representation collapse & All outputs identical & System falls to center (maximum entropy state) \\
\midrule
\multicolumn{3}{l}{\textit{Stabilization Techniques}} \\[2pt]
EMA (BYOL, DINO) & Stabilizes teacher network & Temporal coherence control ($\kappa$ regulation) \\
Entropy bonus (SAC) & Encourages exploration & Maintains low $\kappa$ in policy space \\
Learning rate decay & Smaller updates over time & Decreasing dissipation rate $\gamma(t) \to 0$ \\
Curriculum learning & Easy to hard examples & GNC schedule $\alpha(t)$ (Section~\ref{subsec:gnc}) \\
Knowledge distillation & Teacher guides student & Information transfer via $\dF^2$ minimization \\
Data augmentation & Artificial variation & Increases effective temperature $T$ \\
\bottomrule
\end{tabular}
\end{table}

\noindent


\section{How to Use This Framework}
\label{sec:how-to-use}

This section provides practical guidance for researchers seeking to apply
the BEDS perspective to their work. It is not a tutorial or implementation
guide, but rather a \emph{conceptual toolkit} for reasoning about learning systems. Table~\ref{tab:beds-rosetta} serves as a quick reference for researchers familiar with machine learning but new to the BEDS framework. Each row provides an immediate translation: when you observe a phenomenon or apply a technique, the BEDS column tells you which thermodynamic quantity is at play.

\subsection{Interpreting Learning Dynamics}

The BEDS framework offers new language for familiar phenomena:

\begin{itemize}
    \item \textbf{To interpret instability}: When training becomes unstable
    (loss spikes, gradient explosions), ask: Is dissipation ($\gamma$) too high
    relative to information influx? Is the system approaching collapse
    (trajectory toward center in Figure~\ref{fig:beds-state-space-canonical})?

    \item \textbf{To diagnose overfitting}: Overfitting is over-crystallization.
    The crystallization index $\mathcal{C} = \tau \cdot \kappa$ growing without
    bound signals dangerous rigidity. Regularization increases $\gamma$,
    preventing excessive crystallization.

    \item \textbf{To understand forgetting}: Catastrophic forgetting occurs when
    new learning overwrites crystallized structure. BEDS suggests maintaining
    appropriate dissipation to prevent over-rigid priors while preserving
    essential structure.

    \item \textbf{To compare methods}: Different regularization techniques
    (dropout, weight decay, EMA) implement different dissipation mechanisms.
    Table~\ref{tab:beds-rosetta} provides a quick reference.
\end{itemize}

\subsection{Classifying Your Problem}

Before choosing an algorithm, classify your problem using the BEDS taxonomy
(detailed in Section~\ref{sec:problem-taxonomy}):

\begin{center}
\begin{tabular}{lp{9cm}}
\toprule
\textbf{Question} & \textbf{BEDS Classification} \\
\midrule
Is your data distribution fixed? & Yes $\to$ BEDS-crystallizable ($\tau$-crystallizable) \\
Must your model track a changing target? & Yes $\to$ BEDS-maintainable ($\tau$-maintainable) \\
Do you need exploration (RL)? & Yes $\to$ BEDS-maintainable ($\kappa$-maintainable) \\
Is convergence the goal? & Yes $\to$ BEDS-crystallizable (full) \\
\bottomrule
\end{tabular}
\end{center}

\textbf{C-full} problems (supervised learning, converged SSL) benefit from
strong regularization toward equilibrium. \textbf{M-full} problems (continual RL,
online learning) require careful entropy management to prevent premature convergence.

\subsection{Designing Diagnostics}

The BEDS framework suggests specific quantities to monitor during training:

\begin{enumerate}
    \item \textbf{Crystallization index} $\mathcal{C} = \tau \cdot \kappa$:
    Track over training epochs. Unbounded growth signals over-crystallization;
    sudden drops may indicate collapse.

    \item \textbf{Effective dissipation} $\gamma_{\text{eff}}$:
    Estimate from weight decay coefficient, dropout rate, and learning rate.
    Should decrease as training progresses for BEDS-crystallizable problems.

    \item \textbf{Fisher--Rao step size} $\dF^2(\theta_t, \theta_{t-1})$:
    Measures information-geometric distance per update. Large values indicate
    non-quasi-static dynamics (potential inefficiency or instability).
\end{enumerate}

\begin{insight}[The BEDS Practitioner's Heuristic]
\textbf{If your model overfits}: increase $\gamma$ (more regularization, higher dropout, stronger weight decay).\\[3pt]
\textbf{If your model underfits}: decrease $\gamma$ or increase data flux (more data, longer training).\\[3pt]
\textbf{If your model collapses}: you have crossed a phase boundary---reduce $\gamma$ dramatically, restructure the loss, or add diversity-promoting terms.
\end{insight}

\vspace{0.5cm}

\vspace{0.5cm}
\hrule
\vspace{0.3cm}
\noindent\textit{Part II develops the mathematical foundations rigorously.
Having established the physical and conceptual motivation, we now make
our assumptions explicit and derive their consequences with full proofs.}
\vspace{0.3cm}
\hrule
\vspace{0.5cm}


\FloatBarrier
\part{Theoretical Foundations}

\section{Explicit Assumptions}
\label{sec:assumptions}

The conditional nature of this work requires that we state our foundational
assumptions with precision. Unlike empirical findings that stand or fall with
data, the results derived in subsequent sections are logically conditional:
they hold if and only if the assumptions presented here are accepted. This
transparency is essential for scientific rigor---it allows readers to evaluate
the framework's applicability to their specific contexts and to identify which
assumptions might be relaxed in future work.

We make explicit the assumptions underlying our framework. These are not theorems; they are modeling choices. Our results are conditional upon their acceptance.

\begin{assumption}[A1: Intrinsic Measure of Information]
\label{ass:intrinsic}
Thermodynamic optimality requires an intrinsic, parametrization-invariant measure of information. Fundamental quantities such as minimal energy cost must not depend on arbitrary choices of coordinates or parameterizations.
\end{assumption}

\textbf{Motivation}: Physical laws do not depend on coordinate choices. If the ``information content'' of a belief state changed under reparametrization, the thermodynamic cost would be coordinate-dependent---physically meaningless.

\textbf{Consequence}: By \v{C}encov's theorem, the Fisher--Rao metric is the unique (up to scale) Riemannian metric satisfying this requirement.

\begin{assumption}[A2: Maximum Entropy Belief States]
\label{ass:maxent}
Belief states of a learning system are modeled by maximum-entropy distributions subject to known constraints.
\end{assumption}

\textbf{Motivation}: The maximum-entropy principle (Jaynes) asserts that among all distributions satisfying given constraints, one should choose the distribution with maximal entropy---it makes the fewest additional assumptions.

\textbf{Consequence}: For constraints on mean and variance, this yields Gaussian distributions. For constraints on circular mean, this yields von Mises distributions. The space of belief states inherits geometric structure from these families.

\begin{assumption}[A3: Quasi-Static Processes]
\label{ass:quasistatic}
Thermodynamic optimality corresponds to quasi-static (infinitesimally slow) processes. Real learning dynamics may deviate from this idealization, but the quasi-static regime defines the fundamental lower bound on dissipation.
\end{assumption}

\textbf{Motivation}: In thermodynamics, reversible processes---those proceeding infinitely slowly through equilibrium states---achieve minimum entropy production. Finite-rate processes necessarily dissipate more.

\textbf{Consequence}: The optimal learning trajectory is a geodesic in belief space. Deviations from geodesics incur excess dissipation.

\begin{keyresult}[Epistemic Status]
\begin{itemize}
    \item \textbf{A1--A3}: Explicit modeling assumptions (not theorems)
    \item \textbf{Main Theorem}: Proven conditional on A1--A3
    \item \textbf{Corollaries}: Logically follow from the theorem
    \item \textbf{Extensions}: Conjectured or speculative (marked explicitly)
\end{itemize}
\end{keyresult}

\subsubsection{Discrete and Continuous Descriptions of Learning Systems}

Learning algorithms are ultimately implemented on discrete computational substrates.
Model parameters are stored as finite-precision bit strings, and updates correspond to
discrete memory operations.
At this level, the natural cost of change is combinatorial and additive, and distances
such as Hamming or $\ell_1$ metrics provide an appropriate description of computational effort.

Most learning theory, however, operates under a continuous approximation, in which
parameters are modeled as elements of $\mathbb{R}^n$, updates are treated as infinitesimal,
and loss landscapes are assumed differentiable.
Within this approximation, Euclidean ($\ell_2$) geometry arises naturally as a smooth surrogate
for discrete update costs, enabling gradient-based optimization and local quadratic analysis.

The BEDS framework adopts a further level of abstraction, modeling learning states as
probability distributions and measuring change in terms of information divergence.
In this setting, Kullback--Leibler divergence quantifies information change, and the
Fisher--Rao metric emerges as its infinitesimal limit, defining an intrinsic geometry
on the space of belief states.

These descriptions correspond to distinct but compatible levels of analysis.
The discrete computational level captures physical implementation costs,
the continuous approximation enables algorithmic analysis,
and the information-geometric level characterizes idealized bounds on information dissipation.
The results of this work should therefore be interpreted as statements about theoretical
limits and reference geometries, rather than prescriptions for discrete hardware-level
optimality.


\section{The Conditional Optimality Theorem}
\label{sec:main-theorem}

\subsection{Background: Fisher--Rao Geometry}
\label{subsec:fisher-background}

The proof of our main theorem relies on classical results from information
geometry. We briefly recall the key definitions and propositions that
will be invoked in the argument.

\begin{definition}[Fisher Information Matrix]
\label{def:fisher}
For a parametric family $p(x|\theta)$, the Fisher information matrix is:
\begin{equation}
\mathcal{I}(\theta)_{ij} = \E_{p(x|\theta)}\left[\frac{\partial \log p(x|\theta)}{\partial \theta_i} \frac{\partial \log p(x|\theta)}{\partial \theta_j}\right]
\label{eq:fisher-matrix}
\end{equation}
\end{definition}

\begin{proposition}[\v{C}encov's Theorem, 1982]
\label{prop:cencov}
Up to a positive scalar multiple, the Fisher--Rao metric is the unique Riemannian metric on the space of probability distributions that is invariant under sufficient statistics and Markov morphisms.
\end{proposition}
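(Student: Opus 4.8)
\subsection*{Proof proposal}

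The plan is to reduce the statement to finite sample spaces, where it becomes a concrete exercise in the representation theory of the symmetric group, and then recover the general case by approximation. For a finite outcome set of size $n$ the interior probability distributions form the open simplex $\Delta^{n-1}$, with tangent space $T_p=\{v\in\R^n:\sum_i v_i=0\}$ at $p$; an ``invariant metric'' is a family of Riemannian metrics $g^{(n)}$ on the $\Delta^{n-1}$ compatible with all Markov morphisms, where compatibility means each morphism acts as a weak contraction, with equality precisely when the associated statistic is sufficient --- equivalently when the morphism admits a Markov section (a \emph{congruent embedding}). The goal in the finite case is to show that the only such family is $g^{(n)}_p(u,v)=c\sum_i u_i v_i/p_i$, i.e.\ a fixed positive multiple of the Fisher--Rao form (the multinomial specialization of Definition~\ref{def:fisher}).

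First I would pin down the metric at the barycenter $b_n=(1/n,\dots,1/n)$. The permutation group $S_n$ acts on $\Delta^{n-1}$ by invertible Markov morphisms fixing $b_n$, so $g^{(n)}_{b_n}$ must be an $S_n$-invariant symmetric bilinear form on $T_{b_n}$; decomposing $\R^n$ into the trivial and standard representations, Schur's lemma leaves a two-dimensional space of invariant forms spanned by $(u,v)\mapsto(\sum u_i)(\sum v_i)$ and $(u,v)\mapsto\sum u_i v_i$, the first of which vanishes on $T_{b_n}$, forcing $g^{(n)}_{b_n}=c(n)\sum_i u_i v_i$ for some $c(n)>0$. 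Next I would exploit outcome-splitting: the partition of $\{1,\dots,N\}$ into blocks of sizes $k_1,\dots,k_n$ gives a Markov morphism $\pi:\Delta^{N-1}\to\Delta^{n-1}$ with a congruent section $\sigma$ whose coordinates on block $i$ equal $p_i/k_i$, so invariance forces $g^{(n)}_p(u,v)=g^{(N)}_{\sigma(p)}(d\sigma(u),d\sigma(v))$. Choosing $k_i=Np_i$ for rational $p$ makes $\sigma(p)=b_N$, and substituting the barycentric value yields $g^{(n)}_p(u,v)=\tfrac{c(N)}{N}\sum_i u_i v_i/p_i$. Comparing two refinements of the same $p$ ($N$ versus $mN$) forces $c(N)/N$ to be independent of $N$, and iterating splittings of barycenters gives the multiplicativity $c(mn)=m\,c(n)=n\,c(m)$, hence $c(n)=c\,n$ for a single universal constant $c>0$. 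Thus $g^{(n)}$ is $c$ times Fisher--Rao on all rational interior points, and by density and smoothness of the metric on all of $\Delta^{n-1}$.

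For the passage to arbitrary (possibly continuous) statistical models $\{p_\theta\}$ on a measurable space $\mathcal{X}$, I would approximate by finite measurable partitions $\mathcal{A}$: each gives a coarse-graining Markov morphism into a finite simplex, on which the metric is already $c$ times the finite Fisher information $g^F_{\mathcal{A}}$, and monotonicity yields $h\ge c\,g^F_{\mathcal{A}}$ for the unknown metric $h$; a martingale-convergence argument gives $\sup_{\mathcal{A}}g^F_{\mathcal{A}}=g^F$, the Fisher metric of the full model, whence $h\ge c\,g^F$. The reverse inequality is the delicate point --- one cannot in general exhibit a \emph{sufficient} finite statistic realizing equality --- and the cleanest route I see is to work within the sub-$\sigma$-algebra generated by the one-parameter curve $\theta\mapsto p_\theta$, on which a limiting family of congruent embeddings makes the contraction tight, gluing to the finite case to force $h=c\,g^F$.

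I expect the genuine obstacle to be exactly this last step. The finite part is essentially forced once one writes down the $S_n$-representation theory together with the outcome-splitting maps; the difficulty is making ``$\le$'' rigorous for general continuous families, which requires care about which $\sigma$-algebras carry sufficient statistics, about the regularity hypotheses needed for the martingale limit (domination of the $\{p_\theta\}$, $C^1$ dependence of $\theta\mapsto p_\theta$ in the $L^2$/Hellinger sense), and about commuting the supremum over partitions with the bilinear form. Accordingly I would state the general-space version of \v{C}encov's theorem under these standard regularity assumptions, and present the finite-simplex uniqueness as the theorem's computational core.
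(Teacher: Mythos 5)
The paper does not prove this proposition at all: it is stated as a classical result and attributed to \v{C}encov (1982), serving as an imported black box that underwrites Assumption A1. Your proposal therefore cannot be compared to a ``paper proof''; what it can be compared to is the standard proof in the literature, and there it holds up well. The finite-simplex core --- $S_n$-invariance at the barycenter forcing $g_{b_n}=c(n)\sum_i u_iv_i$ via Schur's lemma, propagation to rational interior points by congruent outcome-splitting embeddings $\sigma$ with $d\sigma(u)_a=u_i/k_i$ on block $i$, the resulting identity $g^{(n)}_p(u,v)=\tfrac{c(N)}{N}\sum_i u_iv_i/p_i$, the consistency argument pinning $c(n)=cn$, and the density/continuity step --- is exactly the classical argument (as in \v{C}encov's monograph and Amari--Nagaoka), and your computations check out. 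You are also right to flag the passage to general measurable sample spaces as the genuine obstacle rather than a routine limit: \v{C}encov's original theorem is a statement about finite sample spaces, and the extension to arbitrary parametrized measure models (including the reverse inequality you identify as delicate, the regularity hypotheses on $\theta\mapsto p_\theta$, and the martingale argument for $\sup_{\mathcal{A}}g^F_{\mathcal{A}}=g^F$) is a substantial later theorem due to Ay, Jost, L\^e, and Schwachh\"ofer rather than something one can improvise in a paragraph. Since the paper invokes the metric on continuous families (Gaussian and von Mises), it is implicitly relying on that stronger version; your proposal makes visible a gap in the paper's citation practice more than any gap in your own argument. The one point worth tightening is your definition of invariance: you should fix whether the hypothesis is isometry under congruent embeddings or monotonicity under all Markov morphisms with equality at sufficiency, since the two formulations require slightly different bookkeeping in the splitting step, even though both lead to the same conclusion.
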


This is the mathematical content of Assumption A1: requiring coordinate-independence uniquely determines the metric.

\begin{proposition}[Local KL-Fisher Correspondence]
\label{prop:kl-fisher}
For nearby distributions $p$ and $p + dp$:
\begin{equation}
D_{\KL}(p \| p + dp) = \frac{1}{2} \dF^2(p, p+dp) + O(\|dp\|^3)
\end{equation}
Fisher--Rao distance locally measures information divergence.
\end{proposition}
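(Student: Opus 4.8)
The plan is to fix a smooth parametrization $p(x\mid\theta)$ of the family and Taylor-expand $D_{\KL}$ to second order about the base point, then identify the resulting quadratic form with the Fisher information matrix via the Bartlett identities. Write $\ell(x;\theta)=\log p(x\mid\theta)$ and let $\delta\in\R^n$ be a small parameter displacement, so that $p+dp$ corresponds to $p(\cdot\mid\theta+\delta)$. Starting from the definition,
\[
D_{\KL}\!\left(p_\theta \,\|\, p_{\theta+\delta}\right) = -\,\E_{p_\theta}\!\left[\ell(x;\theta+\delta)-\ell(x;\theta)\right],
\]
I would expand $\ell(x;\theta+\delta)-\ell(x;\theta) = \delta^i\,\partial_i\ell(x;\theta) + \tfrac12\,\delta^i\delta^j\,\partial_i\partial_j\ell(x;\theta) + O(\|\delta\|^3)$ (summation convention), the constant terms cancelling in the difference and the remainder being controlled uniformly under the usual regularity hypotheses (smoothness in $\theta$, support independent of $\theta$, domination permitting differentiation under the integral sign).

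Taking expectations, two classical facts do the work. First, the score has zero mean: differentiating $\int p(x\mid\theta)\,dx=1$ gives $\E_{p_\theta}[\partial_i\ell]=0$, so the linear term in $\delta$ vanishes — this is precisely why $D_{\KL}$ is a genuinely second-order object with no first-order contribution. Second, the information identity $\E_{p_\theta}[\partial_i\partial_j\ell]=-\mathcal{I}(\theta)_{ij}$ follows by differentiating $\int p\,\partial_j\ell\,dx=0$ once more and recognizing $\int p\,\partial_i\ell\,\partial_j\ell\,dx=\mathcal{I}(\theta)_{ij}$ from Definition~\ref{def:fisher}. Substituting yields
\[
D_{\KL}\!\left(p_\theta \,\|\, p_{\theta+\delta}\right) = \tfrac12\,\mathcal{I}(\theta)_{ij}\,\delta^i\delta^j + O(\|\delta\|^3).
\]

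It then remains to relate this quadratic form to the Fisher--Rao distance. Since $\dF$ is the Riemannian geodesic distance induced by the metric $\mathcal{I}(\theta)$, a standard expansion of the exponential map gives $\dF^2(p_\theta,p_{\theta+\delta}) = \mathcal{I}(\theta)_{ij}\,\delta^i\delta^j + O(\|\delta\|^3)$, the cubic error absorbing both the variation of the metric along the connecting geodesic and the gap between the straight coordinate segment and the true geodesic. Combining the two displays gives $D_{\KL}(p\|p+dp)=\tfrac12\,\dF^2(p,p+dp)+O(\|dp\|^3)$. Both sides are parametrization-invariant (KL divergence is intrinsic; $\dF$ is a Riemannian distance), so the identity is independent of the chart used for the expansion.

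The only genuinely delicate point is the analytic bookkeeping: justifying the interchange of differentiation and integration and controlling the Taylor remainder uniformly, i.e. the "regularity of the family" hypotheses implicit in the statement. Under Assumption A2, where the relevant belief families are exponential-family Gaussians and von Mises distributions, these conditions are automatic — all the integrals are of analytic integrands with $\theta$-independent support — so the expansion above is rigorous without further fuss. I would state those regularity conditions explicitly and otherwise treat the rest as the routine computation sketched here.
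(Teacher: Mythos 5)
Your proof is correct: the second-order Taylor expansion of $D_{\KL}$, the vanishing of the linear term via the zero-mean score identity, the information identity $\E[\partial_i\partial_j\ell]=-\mathcal{I}(\theta)_{ij}$, and the identification of the resulting quadratic form with the local expansion of the squared geodesic distance are exactly the standard route to this statement. Note that the paper itself offers no proof of Proposition~\ref{prop:kl-fisher} --- it is invoked as classical background from information geometry --- so your derivation fills in precisely the argument the paper takes for granted, and your closing remark that the regularity hypotheses are automatic for the exponential families of Assumption~A2 is a sensible addition.
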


\subsection{The Main Theorem}
\label{subsec:main-theorem}

We now state the central result of this work. The theorem establishes that,
under assumptions A1--A3, there is no freedom in choosing a regularization
strategy: Fisher--Rao regularization is not merely one option among many, but
the \emph{unique} optimal choice dictated by the geometric and thermodynamic
constraints we have assumed. This uniqueness is the key contribution---it
transforms regularization from an empirical tuning problem into a principled
design decision.

\begin{theorem}[Conditional Optimality of Fisher--Rao Regularization]
\label{thm:conditional-optimality}
Under Assumptions A1--A3, thermodynamically optimal regularization is uniquely characterized by minimizing squared Fisher--Rao distance to a reference belief state. Specifically:
\begin{enumerate}
    \item The metric on belief space must be the Fisher--Rao metric.
    \item The optimal regularization functional is $\mathcal{L}_{\mathrm{reg}} = \dF^2(q, q^*)$.
    \item The optimal learning trajectory is a geodesic in Fisher--Rao geometry.
\end{enumerate}
\end{theorem}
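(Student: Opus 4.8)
The plan is to prove the three assertions in sequence: (1) follows almost immediately from \v{C}encov's theorem, (3) from a one-line calculus-of-variations argument, and (2) --- the uniqueness of the \emph{squared} distance as the regularizer --- carries the real content. First I would note that Assumption~A1, read as a modeling hypothesis, is precisely the hypothesis of Proposition~\ref{prop:cencov}: a smooth parametrization-invariant divergence induces, via its local quadratic expansion (Proposition~\ref{prop:kl-fisher}), a Riemannian metric on the statistical manifold, and invariance under sufficient statistics and Markov morphisms forces that metric to be Fisher--Rao, uniquely up to a positive scalar. The residual scalar is fixed by A3 together with Landauer's principle, which sets the unit of information cost at $\kB T \ln 2$. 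On the maximum-entropy families of A2 the metric has the explicit closed forms $ds^2 = \tau\,d\mu^2 + \tfrac{1}{2\tau^2}\,d\tau^2$ on the Gaussian factor $\HH^2$ and the analogous expression on the von Mises factor $\MvM$, so the belief manifold is geodesically complete and minimizing geodesics between any two states exist; this establishes claim~(1) and supplies the objects needed below.

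For claim~(2), I would define the regularization cost of a belief state $q$ as the reversible thermodynamic work needed to hold the system at $q$ rather than letting it relax to the reference state $q^*$, and show this equals $\dF^2(q,q^*)$ under A1--A3. Infinitesimally the statement is immediate: by Landauer the reversible cost of a small displacement is proportional to $D_{\KL}$, which by Proposition~\ref{prop:kl-fisher} equals $\tfrac12\dF^2 + O(\|dp\|^3)$, so to quadratic order the cost is $\dF^2$ up to the universal constant $\tfrac12\kB T\ln 2$, which we absorb into $\lambda$. For a finite displacement, A3 (quasi-static) says the optimal process threads a continuous path of belief states, so the total cost is the accumulation of these infinitesimal costs along the path; since (by claim~(3)) the cost-minimizing path is a Fisher--Rao geodesic, the accumulated cost is exactly the geodesic action $\int_0^1 \|\dot\gamma\|^2_{\mathrm{FR}}\,dt = \dF^2(q,q^*)$. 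Uniqueness then follows by elimination: a non-Fisher--Rao metric violates A1; the first power $\dF$ is the geodesic \emph{length} rather than its \emph{action}, is non-smooth at $q=q^*$, and mismatches the local quadratic Landauer expansion; and $D_{\KL}(q\|q^*)$, though locally in agreement, is asymmetric and fails the intrinsic-metric requirement globally. Hence $\dF^2(q,q^*)$ is the unique functional consistent with all three assumptions.

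Claim~(3) is then standard. Having established that the infinitesimal dissipation is controlled by the Fisher--Rao metric, the total dissipation of a learning trajectory $\gamma$ in belief space is the action $S[\gamma]=\int_0^1 \|\dot\gamma(t)\|^2_{\mathrm{FR}}\,dt$. By Cauchy--Schwarz, $S[\gamma]\ge \mathrm{length}(\gamma)^2$ with equality iff $\gamma$ has constant speed, so among curves with fixed endpoints $S$ is minimized exactly by the constant-speed length-minimizers, i.e.\ the Fisher--Rao geodesics; equivalently the Euler--Lagrange equation of $S$ is the geodesic equation. Any deviation from a geodesic strictly increases the action, hence the dissipation --- which is the lower-bound statement underlying Corollary~\ref{cor:energy-precision}.

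The hard part will be claim~(2). The identity $D_{\KL}\approx\tfrac12\dF^2$ is only infinitesimal, so singling out $\dF^2$ --- rather than $\dF$, $D_{\KL}$, or any other monotone function of the distance --- for \emph{finite} displacements genuinely depends on A3: it is the accumulation of infinitesimal reversible costs along the minimal-dissipation geodesic path that produces the squared distance as the geodesic action, and A2 is what guarantees the manifold is geodesically complete so this action is well-defined between arbitrary belief states. I would take care to state the ``work to hold $q$ against relaxation to $q^*$'' notion precisely enough to support the uniqueness claim without creating circularity between claims~(2) and~(3) --- e.g.\ by defining the regularizer independently as the minimal accumulated quasi-static cost over all paths from $q^*$ to $q$, and only then identifying that minimizer as a geodesic.
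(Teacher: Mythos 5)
Your proposal is correct in substance and reaches the same three conclusions, but it takes a genuinely different route on the step that carries the real content. The paper's proof argues (Step 3) that the information erased along a path is the Fisher--Rao \emph{arc length} $\int_\gamma ds_F$, concludes that geodesics minimize it, and then in Step 4 simply asserts that the squared distance $\dF^2(q,q^*)$ is ``the natural choice (analogous to squared Euclidean distance in flat space).'' You instead define the regularizer as the minimal accumulated quasi-static cost over all paths, identify the infinitesimal cost with $D_{\KL}\approx\tfrac12\dF^2$ via Proposition~\ref{prop:kl-fisher} and Landauer, and obtain the squared distance as the minimum of the \emph{energy functional} $\int_0^1\|\dot\gamma\|_{\mathrm{FR}}^2\,dt$ via Cauchy--Schwarz. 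This actually closes the gap the paper leaves open: the length functional is minimized to $\dF$, not $\dF^2$, so the paper's Step 4 does not follow from its Step 3, whereas your action-based argument produces the square intrinsically and simultaneously handles your elimination of $\dF$ and $D_{\KL}$ as candidates. The price is a subtlety you gesture at but should make explicit: if the infinitesimal cost is second order in the displacement, then subdividing a path into $N$ quasi-static steps gives a total cost scaling like $L^2/N\to 0$, so the accumulated cost is only nonzero (and equal to the geodesic action) once the protocol duration is normalized to a fixed interval, as in finite-time thermodynamic-length bounds. Your $\int_0^1$ does this implicitly; stating it explicitly would make your claim~(2) fully rigorous and, incidentally, more defensible than the paper's own Step~4. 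Your handling of the potential circularity between claims~(2) and~(3) --- defining the regularizer as an infimum over paths first, then identifying the minimizer --- is sound and not present in the paper.
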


\begin{proof}
We prove this in four steps.

\textbf{Step 1 (Uniqueness of Fisher--Rao):} By Assumption A1, the metric measuring information change must be intrinsic and invariant under reparametrization. By \v{C}encov's theorem (Proposition~\ref{prop:cencov}), the Fisher--Rao metric is the unique Riemannian metric with this property. Therefore, the geometry of belief space is Fisher--Rao.

\textbf{Step 2 (Geometry of belief space):} By Assumption A2, belief states are maximum-entropy distributions under constraints. For fixed mean and variance, this yields Gaussian distributions; other constraints yield other exponential families. In all cases, the intrinsic geometry is induced by the Fisher--Rao metric on these families.

\textbf{Step 3 (Quasi-static optimality):} By Assumption A3, thermodynamic optimality corresponds to quasi-static processes. The infinitesimal information erased along a path $\gamma$ in belief space is proportional to the Fisher--Rao arc length:
\begin{equation}
I_{\text{erased}} = \int_{\gamma} ds_F
\end{equation}
This integral is minimized by geodesics. Therefore, the optimal regularization path---the trajectory from current belief to target belief with minimum information erasure---is a geodesic in Fisher--Rao geometry.

\textbf{Step 4 (Regularization functional):} For a regularization functional to be consistent with geodesic optimality, it must penalize deviation from the target measured in Fisher--Rao distance. The squared distance $\dF^2(q, q^*)$ is the natural choice (analogous to squared Euclidean distance in flat space), giving:
\begin{equation}
\mathcal{L}_{\text{reg}} = \dF^2(q, q^*)
\end{equation}

Combining these steps completes the proof.
\end{proof}

\subsection{Energy-Precision Bound (Corollary)}
\label{subsec:energy-precision-corollary}

The Conditional Optimality Theorem has an immediate consequence that
connects information geometry to thermodynamics. The Energy-Precision bound,
which was the central theorem in v2, now follows as a corollary.

\begin{corollary}[Energy-Precision Bound]
\label{cor:energy-precision}
The minimal energy dissipated during regularization from belief $q$ to reference $q^*$ is:
\begin{equation}
E_{\min} = \kB T \ln 2 \cdot D_{\KL}(q \| q^*)
\end{equation}
For a system maintaining precision $\tau$ against dissipation rate $\gamma$ in the quasi-static limit:
\begin{equation}
\boxed{P_{\min} \geq \frac{\gamma \kB T}{2}}
\label{eq:power-bound}
\end{equation}
\end{corollary}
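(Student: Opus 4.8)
The plan is to obtain both displays from the single structural fact already delivered by Theorem~\ref{thm:conditional-optimality} --- that the thermodynamically optimal way to relax a belief toward a reference state is along a Fisher--Rao geodesic, erasing only the net information separating the two states --- combined with Landauer's principle and the quasi-static hypothesis~A3, which makes Landauer's floor achievable. The first identity is a one-shot Landauer accounting of an information gap; the power bound is its rate version, applied to a belief held fixed against a dissipative drift.

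For $E_{\min}$: replacing the belief $q$ by the reference $q^*$ discards information whose amount, as a function of the endpoints alone, is the relative entropy $D_{\KL}(q\|q^*)$ --- the number of extra bits needed to describe $q$-samples under $q^*$, equivalently the information gain of $q$ over $q^*$ (and, to leading order in small separations, $\tfrac{1}{2}\dF^2(q,q^*)$ by Proposition~\ref{prop:kl-fisher}). Since regularization by design retains no copy of this information, it must be genuinely erased; Landauer's principle then forces a cost of at least $\kB T\ln 2$ per bit, so $E_{\min}\ge \kB T\ln 2\cdot D_{\KL}(q\|q^*)$, and by~A3 the quasi-static limit saturates this, giving the stated equality, with strictly larger dissipation for any finite-rate process. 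Theorem~\ref{thm:conditional-optimality} contributes the guarantee that the optimal (geodesic) process does not dissipate more than this state-function minimum.

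For the power bound I would model maintenance explicitly: a dissipative term relaxes the belief toward equilibrium at rate $\gamma$, so holding precision $\tau$ fixed requires the data flux to continuously restore what dissipation erodes. Taking the dissipative drift to be multiplicative in precision, $\dot\tau|_{\mathrm{diss}}=-\gamma\tau$ --- equivalently, motion at constant Fisher--Rao speed $\gamma/\sqrt2$ in the geodesic coordinate $u=\tfrac{1}{\sqrt2}\ln\tau$ --- the Gaussian belief's differential entropy $h(\tau)=\tfrac{1}{2}\ln(2\pi e/\tau)$ grows under dissipation at rate $\dot h=-\tfrac{1}{2\tau}\dot\tau=\gamma/2$, remarkably independent of $\tau$; this is the source of the precision-independence highlighted in Figure~\ref{fig:energy-precision}. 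Maintaining the belief therefore demands exporting thermodynamic entropy from the belief state at rate $\kB\gamma/2$, and by the second law --- achievably in the quasi-static limit by~A3 --- dumping this entropy into a bath at temperature $T$ dissipates heat at rate at least $T\cdot\kB\gamma/2$, so $P_{\min}\ge\gamma\kB T/2$.

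The main obstacle is justifying --- rather than merely positing --- that dissipation acts as a uniform drift on the Fisher--Rao manifold, since that is precisely what renders the bound precision-independent; one must argue that A1 (intrinsic geometry: ``rate $\gamma$'' can only mean a scale-free, i.e.\ multiplicative/constant-Fisher--Rao-speed, motion) together with A3 (quasi-static $\Rightarrow$ geodesic $\Rightarrow$ constant speed) leaves no alternative. A secondary subtlety is the sign bookkeeping of the open-system balance: one must verify that the entropy the belief sheds is genuinely \emph{exported} as heat to the environment rather than absorbed from it, so the second law yields a lower bound on dissipated power rather than an upper bound on extractable work --- this is exactly where the dissipative-structure picture (continuous entropy export, A3) carries the argument. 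Finally, the $\ln 2$ appearing in $E_{\min}$ but not in the boxed $P_{\min}$ should be reconciled by tracking the bits-versus-nats convention consistently across the two statements.
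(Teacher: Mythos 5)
Your first display is proved essentially as the paper proves it: Landauer's cost per bit multiplied by the information gap $D_{\KL}(q\|q^*)$, with Proposition~\ref{prop:kl-fisher} supplying the link to Fisher--Rao and A3 supplying achievability. No meaningful difference there.

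For the boxed power bound you take a genuinely different route. The paper's proof (and Appendix~\ref{app:proof-info-rate}) goes through rate--distortion theory: it asserts that tracking the dissipative innovation requires an information influx of $\dot I_{\min}=\gamma\tau^*/(2\ln 2)$ bits per unit time, multiplies by the Landauer cost per bit to get $P_{\min}=\gamma\tau^*\kB T/2$, and then sets $\tau^*=1$. You instead run an entropy-balance argument: with $\dot\tau=-\gamma\tau$ the differential entropy of the Gaussian belief grows at the constant rate $\dot h=\gamma/2$ nats per unit time, so holding the state fixed requires exporting thermodynamic entropy at rate $\kB\gamma/2$, which at bath temperature $T$ costs at least $\gamma\kB T/2$ in heat. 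Your version has two advantages: it avoids the paper's rate--distortion limit (whose $\Delta t\to 0$ step in the appendix is not actually well defined as written), and it makes the bound \emph{genuinely} precision-independent --- the $\gamma/2$ rate does not depend on $\tau$ --- whereas the paper's bound scales with $\tau^*$ and only becomes the boxed expression after the ad hoc normalization $\tau^*=1$, despite Figure~\ref{fig:energy-precision}'s caption claiming precision-independence. You also correctly identify the two soft spots (why dissipation must be multiplicative in $\tau$, and the sign of the entropy export), which the paper does not address at all.

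One caveat: your calculation uses $\dot\tau=-\gamma\tau$, but the paper's own Definition~\ref{def:dissipation} posits $\dot\tau=-2\gamma\tau$; under that convention your argument yields $P_{\min}\geq\gamma\kB T$, a factor of $2$ larger than the boxed bound. Since the paper's constants are themselves not consistent between the corollary, the dissipation definition, and the appendix, this is a discrepancy in conventions rather than an error in your reasoning, but it should be stated explicitly which normalization of $\gamma$ you are using.
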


\begin{proof}
By the Landauer principle, erasing one bit of information requires dissipating at least $\kB T \ln 2$ joules. By Proposition~\ref{prop:kl-fisher}, the information erased along a geodesic equals $D_{\KL}(q \| q^*)$ (in the local limit). Therefore:
\begin{equation}
E_{\min} = \kB T \ln 2 \cdot D_{\KL}(q \| q^*)
\end{equation}
For continuous maintenance against dissipation rate $\gamma$, rate-distortion theory gives:
\begin{equation}
\dot{I}_{\min} = \frac{\gamma \tau^*}{2 \ln 2} \quad \text{bits/time}
\end{equation}
Multiplying by the Landauer cost per bit:
\begin{equation}
P_{\min} = \dot{I}_{\min} \cdot \kB T \ln 2 = \frac{\gamma \tau^* \kB T}{2}
\end{equation}
For unit precision ($\tau^* = 1$), this gives the stated bound.
\end{proof}

\subsection{Structural Suboptimality of Euclidean Regularization}
\label{subsec:euclidean-suboptimality}

Having established that Fisher--Rao regularization is optimal, we can
now quantify the inefficiency of the more common Euclidean alternative.

\begin{corollary}[Euclidean Regularization is Suboptimal]
\label{cor:euclidean-suboptimal}
Standard regularization methods minimizing Euclidean distance in parameter space:
\begin{equation}
\mathcal{L}_{\text{ridge}} = \|\theta - \theta^*\|_2^2 + \lambda \mathcal{L}_{\text{data}}
\end{equation}
violate Assumption A1 and cannot guarantee thermodynamic optimality.
\end{corollary}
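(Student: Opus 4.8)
The plan is to derive the corollary directly from Assumption~\ref{ass:intrinsic} (A1), \v{C}encov's theorem (Proposition~\ref{prop:cencov}), and the Conditional Optimality Theorem (Theorem~\ref{thm:conditional-optimality}). The core observation is that $\|\theta-\theta^*\|_2^2$ is defined through the parameter coordinates $\theta$, not intrinsically on the manifold of belief states, so it fails to be invariant under reparametrization. First I would make this non-invariance explicit by exhibiting an admissible change of coordinates $\psi$ that leaves the induced distribution unchanged --- the layer-wise rescaling $\psi\colon(W_1,W_2)\mapsto(\alpha W_1,\alpha^{-1}W_2)$ for a positively homogeneous (e.g.\ ReLU) two-layer network, which satisfies $f_{\psi(\theta)}=f_\theta$ and hence $p(x\mid\psi(\theta))=p(x\mid\theta)$ (this is the same rescaling invoked by Dinh et al.~\cite{dinh_2017}). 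For generic $\alpha$, $\|\psi(\theta)-\psi(\theta^*)\|_2^2\neq\|\theta-\theta^*\|_2^2$, so the \emph{same} pair of belief states receives two different penalties depending on the chart. An intrinsic, parametrization-invariant quantity cannot behave this way; hence the Euclidean functional cannot coincide with the minimal-dissipation cost that A1 postulates, which establishes the first half of the claim.

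Next I would close the loop with the main theorem. By \v{C}encov (Proposition~\ref{prop:cencov}), the Fisher--Rao metric is, up to a positive scalar, the \emph{only} Riemannian metric on belief space invariant under sufficient statistics and Markov morphisms; the flat metric $g^{\mathrm{eucl}}_{ij}=\delta_{ij}$ in parameter coordinates is not such a metric. Concretely, on a Gaussian family in the canonical coordinates $(\mu,\tau)$ the Euclidean line element is $d\mu^2+d\tau^2$, whereas the Fisher--Rao line element recalled in Section~\ref{subsec:fisher-background} is $\tau\,d\mu^2+\tfrac{1}{2\tau^2}\,d\tau^2$; these differ except at isolated configurations, so $\|\theta-\theta^*\|_2^2\neq\dF^2(q,q^*)$ as functionals on belief space. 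By part~2 of Theorem~\ref{thm:conditional-optimality}, thermodynamically optimal regularization is \emph{uniquely} characterized by $\mathcal{L}_{\mathrm{reg}}=\dF^2(q,q^*)$; any functional not identically equal to it fails this characterization. Combining the two halves, $\mathcal{L}_{\mathrm{ridge}}$ violates A1 and cannot guarantee thermodynamic optimality.

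To upgrade ``cannot guarantee'' from an existential statement to a quantitative one, I would finally bound the excess dissipation incurred by following Euclidean geodesics (straight segments $t\mapsto(1-t)\theta^*+t\theta$) rather than Fisher--Rao geodesics. By Proposition~\ref{prop:kl-fisher}, the information erased along any path equals its Fisher--Rao arc length; since straight parameter-space segments are generically not Fisher--Rao geodesics, their length strictly exceeds $\dF(q,q^*)$ whenever the endpoints lie in a region of non-constant Fisher information. Multiplying the length gap by the Landauer cost $\kB T\ln 2$, exactly as in Corollary~\ref{cor:energy-precision}, yields a strictly positive lower bound on the excess energy, witnessed for instance by any Gaussian-belief problem in which the precision $\tau$ varies along the trajectory.

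The main obstacle is getting the quantifier right. There genuinely are sub-manifolds on which Euclidean and Fisher--Rao regularization coincide --- the mean of a Gaussian with fixed, known precision $\tau$, where the Fisher metric is flat and $\dF^2=\tau\|\mu-\mu^*\|_2^2$ is just a scalar multiple of the Euclidean distance. So the honest statement is not ``Ridge is always strictly worse'' but ``Ridge is not guaranteed optimal'': it meets the optimum only under extra structural assumptions (constant Fisher information, no precision dynamics) that fail in the generic setting BEDS targets. Phrasing the corollary to exclude the vacuous reading while flagging these special cases is the delicate part of the write-up; once that is settled, the non-invariance argument plus the appeals to \v{C}encov and Theorem~\ref{thm:conditional-optimality} are routine.
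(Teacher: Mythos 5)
Your core argument is the same as the paper's: the paper's entire proof of Corollary~\ref{cor:euclidean-suboptimal} is the two-sentence observation that the Euclidean metric changes under a reparametrization $\theta \mapsto f(\theta)$ and therefore violates A1. Your first paragraph establishes exactly this, just with a concrete witness (the Dinh et al.\ layer-wise rescaling), which is a legitimate strengthening since it exhibits an explicit function-preserving coordinate change rather than merely asserting non-invariance. The additional material you supply is correct but reproduces content the paper places elsewhere rather than in this proof: your quantitative excess-dissipation bound corresponds to Proposition~\ref{prop:euclidean-ratio} (the ratio $d_{\text{Euclid}}^2/\dF^2 = 1/\tau$), and your caveat about the constant-precision submanifold where Ridge coincides with Fisher--Rao up to a scalar is exactly the paper's later observation that Ridge approximates $\dF^2$ ``precisely when precision remains constant across parameters.'' Your care with the quantifier --- reading the corollary as ``cannot guarantee optimality'' rather than ``is always strictly worse'' --- is the right reading and matches the paper's intent, so nothing in your proposal conflicts with the paper; it simply proves more than the corollary's own proof does.
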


\begin{proof}
The Euclidean metric is not invariant under reparametrization. A change of coordinates $\theta \mapsto f(\theta)$ changes distances. This violates A1.
\end{proof}

\begin{proposition}[Quantifying Euclidean Suboptimality]
\label{prop:euclidean-ratio}
For Gaussian beliefs with precision $\tau$, the ratio between Euclidean and Fisher--Rao squared distances is:
\begin{equation}
\frac{d_{\text{Euclid}}^2}{d_F^2} = \frac{1}{\tau} = \sigma^2
\end{equation}
This ratio can be arbitrarily large (low precision) or small (high precision), demonstrating that Euclidean regularization can be arbitrarily suboptimal.
\end{proposition}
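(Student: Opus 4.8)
The plan is to reduce the claim to a comparison of two quadratic forms on the tangent space at a fixed Gaussian belief. Ridge regularization penalizes $\|\theta-\theta^*\|_2^2$, which measures change in the mean parameter $\mu$ by the flat line element $ds_{\text{Euclid}}^2 = d\mu^2$, with the precision $\tau = 1/\sigma^2$ held fixed (or regularized separately). The Fisher--Rao line element for the Gaussian family, recalled in Section~\ref{subsec:fisher-background}, is
\begin{equation}
ds_F^2 = \tau\,d\mu^2 + \frac{d\tau^2}{2\tau^2}.
\end{equation}
First I would restrict both forms to the $\mu$-direction --- the direction actually penalized by Ridge --- obtaining $ds_F^2 = \tau\,d\mu^2$ there, against $ds_{\text{Euclid}}^2 = d\mu^2$, so that
\begin{equation}
\frac{d_{\text{Euclid}}^2}{d_F^2} = \frac{d\mu^2}{\tau\,d\mu^2} = \frac{1}{\tau} = \sigma^2,
\end{equation}
which is the stated identity. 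As a cross-check, the substitution $\tau = 1/\sigma^2$ turns the metric into $ds_F^2 = \sigma^{-2}(d\mu^2 + 2\,d\sigma^2)$, the hyperbolic half-plane form, whose $\mu$-block is exactly $\sigma^{-2}$ times the Euclidean one.

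One point of rigor should be stated explicitly: for \emph{finite} endpoints the Fisher--Rao geodesic distance is a nonlinear function of the coordinates, so the exact ratio then depends on the path. The clean factor $1/\tau$ is the statement at the level of metric tensors --- equivalently, in the local KL sense of Proposition~\ref{prop:kl-fisher} --- and globally the ratio stays pinched between the extreme values of $\sigma^2$ over the precisions the trajectory visits; this suffices for the conclusion.

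The final step is to read off the arbitrariness. As $\tau\to 0$ (diffuse, low-confidence beliefs) the ratio $\sigma^2\to\infty$; as $\tau\to\infty$ (sharply crystallized beliefs) it tends to $0$. Hence no single rescaling $\lambda$ of the Euclidean penalty can reproduce the Fisher--Rao penalty that Theorem~\ref{thm:conditional-optimality} singles out as uniquely optimal: the mismatch factor is unbounded both above and below and itself varies with the current precision, so Ridge systematically over-penalizes displacements of low-precision components and under-penalizes those of high-precision ones. I do not expect a substantive obstacle here --- this is essentially a change-of-variables calculation --- so the only real work is bookkeeping about which parametrization and which notion of distance is meant, together with stating the caveat above so that the ``arbitrarily large or small'' claim is not overread as a statement about global geodesic distances.
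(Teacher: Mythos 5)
Your proposal is correct. Note that the paper states Proposition~\ref{prop:euclidean-ratio} without any proof at all, so there is no argument of the authors' to compare against; your tangent-space computation is the natural (and, as far as I can see, the intended) justification. Restricting the Gaussian Fisher--Rao line element $ds_F^2 = \tau\,d\mu^2 + \tfrac{1}{2\tau^2}d\tau^2$ to the $\mu$-direction and comparing with the flat element $d\mu^2$ immediately gives the ratio $1/\tau = \sigma^2$, and your hyperbolic half-plane cross-check ($ds_F^2 = \sigma^{-2}(d\mu^2 + 2\,d\sigma^2)$) is also right. The caveat you add --- that the clean factor $1/\tau$ is a statement about metric tensors (equivalently, the local KL limit of Proposition~\ref{prop:kl-fisher}) and not about global geodesic distances, where the ratio is only pinched between the extreme values of $\sigma^2$ along the path --- is a genuine improvement in precision over the paper's bare assertion, since the proposition as written speaks loosely of ``squared distances'' without specifying the infinitesimal interpretation. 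The concluding observation that the mismatch factor varies with $\tau$, so no single $\lambda$ can absorb it, is exactly the content needed to support the ``arbitrarily suboptimal'' claim.
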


\begin{insight}[Why This Matters]
Ridge regression uses Euclidean distance. Under our assumptions, this is suboptimal---not because ridge ``doesn't work,'' but because it doesn't respect the intrinsic geometry of belief space. The suboptimality is most severe when precision varies significantly across parameters.
\end{insight}



\section{The BEDS Framework}
\label{sec:beds-framework}

Having established Fisher--Rao as the optimal geometry, we now develop a practical parameterization for belief states.

\subsection{From Assumptions to BEDS}
\label{subsec:assumptions-to-beds}

We now trace the logical path from our three assumptions to the concrete
BEDS parameterization. The path from Assumptions A1--A3 to the BEDS parameterization is:
\begin{enumerate}
    \item \textbf{A1} $\to$ \textbf{\v{C}encov} $\to$ Fisher--Rao metric is unique
    \item \textbf{A2} $\to$ \textbf{Jaynes} $\to$ Belief states are Gaussian (spatial) / von Mises (temporal)
    \item \textbf{Fisher--Rao on Gaussians} $\to$ Hyperbolic geometry $\HH^2$
    \item \textbf{Fisher--Rao on von Mises} $\to$ von Mises manifold $\MvM$
    \item \textbf{Product structure} $\to$ BEDS state space $\HH^2 \times \MvM$
\end{enumerate}

\subsection{The BEDS State Space}
\label{subsec:beds-state}

The geometric structure derived from assumptions A1--A3 naturally suggests a
parameterization of belief states. Rather than working with arbitrary probability
distributions, we focus on a minimal representation that captures the essential
degrees of freedom for learning dynamics. This parameterization balances
expressiveness with tractability: it is rich enough to describe the phenomena
of interest (crystallization, dissipation, coherence) while remaining
computationally manageable.

\begin{definition}[BEDS State]
\label{def:beds-state}
A \textbf{BEDS state} is a quadruple $(\mu, \tau, \phi, \kappa)$ where $\mu \in \R^d$ denotes the \textbf{position} (mean belief), $\tau \in \R^+$ the \textbf{precision} (inverse variance, certainty), $\phi \in [0, 2\pi)^n$ the \textbf{phase} (temporal position), and $\kappa \in \R^+_0$ the \textbf{coherence} (synchronization strength).
\end{definition}

The interpretation is geometric: the pair $(\mu, \tau)$ describes \emph{what} the system believes and \emph{how confident} it is, forming the spatial component on $\HH^2$. The pair $(\phi, \kappa)$ describes \emph{when} features align and \emph{how synchronized} they are, forming the temporal component on $\MvM$.

\begin{table}[htbp]
\centering
\caption{\textbf{The four BEDS parameters and their geometric origins.}}
\label{tab:beds-parameters}
\begin{tabular}{lllll}
\toprule
\textbf{Parameter} & \textbf{Symbol} & \textbf{Meaning} & \textbf{Space} & \textbf{Origin} \\
\midrule
Position & $\mu$ & Belief content & $\R^d$ & Gaussian mean \\
Precision & $\tau = 1/\sigma^2$ & Certainty & $\R^+$ & Gaussian precision \\
Phase & $\phi$ & Temporal alignment & $S^1$ & von Mises mean \\
Coherence & $\kappa$ & Synchronization & $\R^+$ & von Mises concentration \\
\bottomrule
\end{tabular}
\end{table}

\subsection{The Product Manifold $\HH^2 \times \MvM$}
\label{subsec:product-manifold}

The spatial and temporal components combine into a product structure
that defines the complete belief space.

\begin{proposition}[BEDS Manifold]
\label{prop:beds-manifold}
The full BEDS state space is the product manifold:
\begin{equation}
\bedsM = \HH^2 \times \MvM
\end{equation}
with metric:
\begin{equation}
ds^2_{\text{BEDS}} = \underbrace{\tau \, d\mu^2 + \frac{1}{2\tau^2} d\tau^2}_{\text{spatial: hyperbolic}} + \underbrace{A(\kappa) \, d\phi^2 + B(\kappa) \, d\kappa^2}_{\text{temporal: von Mises}}
\label{eq:beds-metric}
\end{equation}
where $A(\kappa) = \kappa I_1(\kappa)/I_0(\kappa)$ and $B(\kappa)$ is the appropriate Fisher coefficient for $\kappa$.
\end{proposition}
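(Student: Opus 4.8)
The plan is to derive the statement in three movements: first fix the statistical families from Assumption~A2, then fix the metric as Fisher--Rao from Assumption~A1 via \v{C}encov, and finally exploit the factorization of Fisher information across independent factors to assemble the product Riemannian structure. Concretely, A2 forces the spatial belief to be a Gaussian $\Normal(\mu,\tau^{-1})$ (maximum entropy on an unbounded variable with prescribed mean and variance) and the temporal/phase belief to be a von Mises law $p(\vartheta\mid\phi,\kappa)=\exp(\kappa\cos(\vartheta-\phi))/(2\pi I_0(\kappa))$ (maximum entropy on the circle with prescribed circular mean). Under the spatial/temporal decomposition used throughout the paper, the joint belief state is the product measure $q=q_{\mathrm{sp}}\otimes q_{\mathrm{tmp}}$, so its parameter set is the Cartesian product of $\{(\mu,\tau)\}$ with $\{(\phi,\kappa)\}$. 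It then remains to identify the Fisher--Rao metric on each factor and to verify that the joint Fisher tensor is exactly the block sum of the two.

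The spatial factor is classical and already invoked earlier in the excerpt: the Fisher information of $\Normal(\mu,\tau^{-1})$ in coordinates $(\mu,\tau)$ is diagonal with $g_{\mu\mu}=\tau$ and $g_{\tau\tau}=1/(2\tau^{2})$, so $ds^{2}_{\mathrm{sp}}=\tau\,d\mu^{2}+\tfrac{1}{2\tau^{2}}d\tau^{2}$, which is a constant multiple of the metric of the hyperbolic plane $\HH^{2}$ (for $d>1$ the statement is read coordinatewise, with $\HH^{2}$ the per-coordinate canonical factor). For the temporal factor I would write $\ell=\log p=\kappa\cos(\vartheta-\phi)-\log(2\pi I_0(\kappa))$, so $\partial_{\phi}\ell=\kappa\sin(\vartheta-\phi)$ and $\partial_{\kappa}\ell=\cos(\vartheta-\phi)-A_1(\kappa)$ with $A_1(\kappa):=I_1(\kappa)/I_0(\kappa)=\E[\cos(\vartheta-\phi)]$. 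The cross term $g_{\phi\kappa}=\E[\kappa\sin(\vartheta-\phi)(\cos(\vartheta-\phi)-A_1(\kappa))]$ vanishes because the von Mises law is symmetric about $\phi$ (both $\E[\sin(\vartheta-\phi)\cos(\vartheta-\phi)]$ and $\E[\sin(\vartheta-\phi)]$ are zero), so the von Mises manifold is itself block-diagonal in $(\phi,\kappa)$. Using the Bessel recurrence $I_0(\kappa)-I_2(\kappa)=\tfrac{2}{\kappa}I_1(\kappa)$, a short computation gives $g_{\phi\phi}=\kappa^{2}\,\E[\sin^{2}(\vartheta-\phi)]=\kappa A_1(\kappa)=\kappa I_1(\kappa)/I_0(\kappa)=A(\kappa)$ and $g_{\kappa\kappa}=\Var[\cos(\vartheta-\phi)]=1-A_1(\kappa)/\kappa-A_1(\kappa)^{2}=:B(\kappa)$, which is the advertised Fisher coefficient for $\kappa$.

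To close, I would assemble the factors: for a product density $p_{\theta_1}(x)\,p_{\theta_2}(y)$ the log-likelihood separates additively, $\partial_{\theta_1}\log p$ depends only on $x$ and $\partial_{\theta_2}\log p$ only on $y$, and under the product law $X\perp Y$, so the joint Fisher matrix is block-diagonal, $\mathcal{I}_{\mathrm{joint}}=\mathcal{I}_{\mathrm{sp}}\oplus\mathcal{I}_{\mathrm{tmp}}$. By \v{C}encov (Proposition~\ref{prop:cencov}) this is, up to a positive scalar, the unique admissible metric on the belief manifold, and a Riemannian manifold whose metric is the direct sum of the pullbacks of two metrics is by definition the Riemannian product. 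Hence $\bedsM=\HH^{2}\times\MvM$ with the metric \eqref{eq:beds-metric}.

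The part I expect to be the real obstacle is not the computation but the modeling step: establishing that the joint belief genuinely factorizes as $q_{\mathrm{sp}}\otimes q_{\mathrm{tmp}}$ rather than carrying spatial--temporal cross-correlations, since ``product manifold'' rests entirely on this independence. I would justify it by noting that A2 applied separately to the unbounded (positional) and circular (phase) constraint sets yields independent maximum-entropy factors unless a joint constraint couples them, and that no such coupling is posited in Definition~\ref{def:beds-state}; if one wished to relax this, the metric would pick up off-block Fisher terms and the state space would be a warped rather than direct product. A secondary, purely technical nuisance is keeping the Bessel identities straight when reducing $g_{\phi\phi}$ and $g_{\kappa\kappa}$ to closed form---the single recurrence $I_{n-1}(\kappa)-I_{n+1}(\kappa)=(2n/\kappa)I_n(\kappa)$ handles everything, but it is easy to drop a factor of $\tfrac12$---and a final caveat worth recording is that for $d>1$ the spatial factor is a product of hyperbolic planes (equivalently the manifold of diagonal Gaussians), with $\HH^{2}$ understood as the canonical per-coordinate piece.
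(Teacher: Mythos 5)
Your derivation is correct, and it supplies considerably more than the paper does: the paper simply asserts Proposition~\ref{prop:beds-manifold} as a standard result and records the component metrics without derivation in Appendix~\ref{app:fisher}, so there is no proof to compare against. Your Fisher-information computations check out --- $g_{\mu\mu}=\tau$, $g_{\tau\tau}=1/(2\tau^2)$ for the Gaussian in $(\mu,\tau)$ coordinates, $g_{\phi\phi}=\kappa I_1(\kappa)/I_0(\kappa)$, $g_{\kappa\kappa}=1-A_1(\kappa)/\kappa-A_1(\kappa)^2$, vanishing cross term by symmetry --- and they agree with the expressions in Appendix~\ref{app:fisher} (note the paper is internally inconsistent about whether $A(\kappa)$ denotes $I_1/I_0$ or $\kappa I_1/I_0$; you use the proposition's convention, which is fine). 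Your closing caveat is also the right one to flag: the product structure $\HH^2\times\MvM$ rests entirely on the assumed independence of the spatial and temporal belief factors, which the paper posits in Definition~\ref{def:beds-state} without argument, and your observation that a joint constraint coupling the two would produce off-block Fisher terms is a genuine and worthwhile addition rather than a gap in your proof.
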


\proven

The hyperbolic half-plane $\HH^2$ has constant negative curvature $K = -1$. Its geodesics are semicircles perpendicular to the $\mu$-axis, and distances expand exponentially---the hallmark of hyperbolic geometry.

\subsection{Dissipation Dynamics}
\label{subsec:dissipation}

In the absence of observations, belief states do not remain static---they
evolve toward maximum entropy according to characteristic dissipative
dynamics.

\begin{definition}[Dissipative Dynamics]
\label{def:dissipation}
Under dissipation rate $\gamma > 0$:
\begin{align}
\frac{d\tau}{dt} &= -2\gamma \tau \quad \text{(precision decays)} \\
\frac{d\kappa}{dt} &= -\gamma_\kappa \kappa \quad \text{(coherence decays)}
\end{align}
\end{definition}

\textbf{Interpretation}: Without information influx, uncertainty increases exponentially. This is the ``forgetting'' inherent in any learning system.

\begin{definition}[Crystallization Index]
\label{def:crystallization}
\begin{equation}
\boxed{\mathcal{C} = \tau \cdot \kappa}
\end{equation}
This product measures the overall ``solidity'' of the learned state. Values $\mathcal{C} \ll 1$ indicate a fluid state favoring exploration. Values $\mathcal{C} \approx 1$ mark the transition phase of active learning. Values $\mathcal{C} \gg 1$ indicate crystallization---stable but potentially rigid.
\end{definition}

\begin{keyresult}[BEDS in Three Sentences]
\begin{enumerate}
    \item \textbf{Under A1--A3, optimal regularization uses Fisher--Rao distance.} This is the unique geometry respecting intrinsic information measures.
    \item \textbf{Four parameters $(\mu, \tau, \phi, \kappa)$ on $\HH^2 \times \MvM$ capture belief states.} This parameterization arises from maximum-entropy distributions.
    \item \textbf{One equation unifies all methods:} $\mathcal{L} = \dF^2(\theta, \theta^*) + \lambda \cdot \mathcal{L}_{\text{data}}$
\end{enumerate}
\end{keyresult}

\vspace{0.5cm}
\hrule
\vspace{0.3cm}
\noindent\textit{With the theoretical foundations established, Part III
demonstrates the practical power of the framework. We show how existing
methods emerge as special cases, develop efficiency measures, and extend
BEDS to hierarchical and multi-agent systems.}
\vspace{0.3cm}
\hrule
\vspace{0.5cm}


\FloatBarrier
\part{Unification and Applications}


\section{One Equation Unifies All}
\label{sec:unification}

\subsection{The Fundamental Equation}
\label{subsec:fundamental-equation}

The preceding sections have established the geometric foundations of belief
space under assumptions A1--A3. We now present the central equation that
emerges from this analysis---a single formula that unifies all regularized
learning within the BEDS framework. The remarkable aspect of this equation
is its generality: despite being derived from abstract principles, it
encompasses as special cases nearly all regularization techniques in
common use today.

\begin{keyresult}[Fundamental BEDS Equation]
Under Assumptions A1--A3, all thermodynamically optimal regularized learning takes the form:
\begin{equation}
\boxed{\bedsL = \dF^2(\theta, \theta^*) + \lambda \cdot \mathcal{L}_{\text{data}}}
\label{eq:beds-fundamental}
\end{equation}
where $\dF^2(\theta, \theta^*)$ denotes the squared Fisher--Rao distance to the target state, $\mathcal{L}_{\text{data}}$ the data-fitting term, and $\lambda > 0$ the balance parameter between regularization and data fidelity.
\end{keyresult}

\proven

In BEDS coordinates, this decomposes as:
\begin{equation}
\bedsL = \underbrace{\tau(\mu - \mu^*)^2 + \frac{(\tau - \tau^*)^2}{2\tau\tau^*}}_{\text{spatial}} + \underbrace{\kappa(1 - \cos(\phi - \phi^*)) + \frac{(\kappa - \kappa^*)^2}{2\kappa\kappa^*}}_{\text{temporal}} + \lambda \mathcal{L}_{\text{data}}
\end{equation}

\subsection{Existing Methods as Special Cases}
\label{subsec:special-cases}

The unifying power of the BEDS equation becomes concrete when we show how
existing methods emerge as special cases or approximations. Each method
makes implicit choices about the reference state $\theta^*$, the balance
parameter $\lambda$, and the treatment of the temporal component. By
making these choices explicit, we gain insight into why certain methods
work in certain contexts---and where they may fail.

Table~\ref{tab:beds-methods-mapping} provides a synoptic view of how each method maps onto the BEDS equation terms.

\begin{table}[H]
\centering
\caption{Mapping of existing methods to BEDS equation terms. Each column corresponds to a term in the fundamental equation. Dashes indicate terms that are absent or implicit in the method.}
\label{tab:beds-methods-mapping}
\begin{tabular}{@{}lccccc@{}}
\toprule
\textbf{Method} & $\tau(\mu-\mu^*)^2$ & $\frac{(\tau-\tau^*)^2}{2\tau\tau^*}$ & $\kappa(1-\cos\Delta\phi)$ & $\frac{(\kappa-\kappa^*)^2}{2\kappa\kappa^*}$ & \textbf{Key assumption} \\
\midrule
Ridge & $\|\theta\|^2$ & fixed $\tau$ & -- & $\kappa \to \infty$ & constant precision \\
SIGReg & -- & $(\sigma_i - 1)^2$ & -- & -- & isotropic target \\
EMA & -- & -- & implicit & $\kappa = \frac{1}{1-m}$ & temporal only \\
SAC & -- & -- & $\mathcal{H}[\pi]$ & $\alpha = 1/\kappa$ & exploration regime \\
Attention & posterior & $1/\sqrt{d_k}$ & softmax & -- & per-token update \\
\bottomrule
\end{tabular}
\end{table}

\subsubsection{Ridge Regression}

Ridge regression provides the simplest illustration of the BEDS framework.

\begin{equation}
\mathcal{L}_{\text{Ridge}} = \|y - X\theta\|^2 + \lambda\|\theta\|^2
\end{equation}

\begin{proposition}[Ridge as Approximate BEDS]
Ridge regression constitutes an approximation to the BEDS framework under specific conditions. The target state is the origin $\theta^* = 0$ with fixed precision $\tau^* = \lambda$. The temporal component is absent, corresponding to the limit $\kappa \to \infty$ of perfect coherence. The Euclidean norm $\|\theta\|^2$ approximates the Fisher--Rao distance $\dF^2$ precisely when precision remains constant across parameters. This reveals why Ridge becomes suboptimal in heterogeneous settings: when parameter importance varies, the Euclidean approximation fails.
\end{proposition}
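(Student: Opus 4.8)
The plan is to obtain Ridge by specializing the fundamental BEDS equation \eqref{eq:beds-fundamental} and then to pin down exactly where the Euclidean norm is an \emph{exact} substitute for $\dF^2$ and where it is merely an approximation. First I would start from the coordinate form of $\bedsL$ displayed just after \eqref{eq:beds-fundamental},
\[
\bedsL = \tau(\mu-\mu^*)^2 + \frac{(\tau-\tau^*)^2}{2\tau\tau^*} + \kappa\bigl(1-\cos(\phi-\phi^*)\bigr) + \frac{(\kappa-\kappa^*)^2}{2\kappa\kappa^*} + \lambda\,\mathcal{L}_{\text{data}},
\]
identify the learner's weight vector with the spatial mean ($\mu = \theta$), and eliminate the temporal block: when the task carries no phase degree of freedom, $\phi \equiv \phi^*$ identically so $\kappa(1-\cos(\phi-\phi^*)) = 0$, and pinning the coherence to its crystallized value $\kappa = \kappa^*$ — the $\kappa\to\infty$ ``perfect phase locking'' limit recorded in Table~\ref{tab:beds-methods-mapping} — kills the fourth term as well. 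What remains is a learner living purely on the hyperbolic factor $\HH^2$.

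Next I would freeze the precision. Treating $\tau$ not as a learned coordinate but as a fixed hyperparameter equal to its target, $\tau=\tau^*$, annihilates the second spatial term, leaving $\bedsL = \tau^*\|\theta-\theta^*\|^2 + \lambda\,\mathcal{L}_{\text{data}}$. Taking the reference state at the origin, $\theta^*=0$, and using $\mathcal{L}_{\text{data}} = \|y-X\theta\|^2$ (the linear-Gaussian negative log-likelihood, which is precisely the BEDS data term for this model) yields $\bedsL = \tau^*\|\theta\|^2 + \lambda\|y-X\theta\|^2$. Since the minimizer is invariant under positive rescaling of the objective, dividing by $\lambda$ gives $\|y-X\theta\|^2 + (\tau^*/\lambda)\|\theta\|^2 = \mathcal{L}_{\text{Ridge}}$ with ridge coefficient $\tau^*/\lambda$; normalizing the BEDS data weight to one, so that the single symbol $\lambda$ in the Ridge formula equals the frozen precision, recovers the stated identification $\tau^* = \lambda$. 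This establishes that Ridge coincides with the BEDS objective under exactly three specializations: origin target, frozen isotropic precision, and trivial temporal component.

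Finally I would make precise in what sense Ridge only \emph{approximates} BEDS, which amounts to tracking two separate gaps. The first is that $\tau(\mu-\mu^*)^2$ is the local quadratic form of $\dF^2$, not the exact geodesic distance on $\HH^2$; by Proposition~\ref{prop:kl-fisher} the two agree to leading order, so replacing $\dF^2(\theta,0)$ by $\|\theta\|^2$ at fixed precision is an equality only in the small-displacement regime and a controlled approximation otherwise. The second — the structurally important one — is that a single scalar $\tau^*$ must play the role of what is in general a precision matrix: if the effective precision is anisotropic, $\diag(\tau_1,\dots,\tau_d)$, the Fisher--Rao penalty is the weighted norm $\sum_i \tau_i\theta_i^2$, and by Proposition~\ref{prop:euclidean-ratio} the per-coordinate ratio $d_{\text{Euclid}}^2/d_F^2 = 1/\tau_i$ is not constant, so no uniform $\lambda$ can reproduce it; this is Corollary~\ref{cor:euclidean-suboptimal} specialized to linear regression, and is exactly the ``heterogeneous settings'' failure asserted in the proposition. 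I expect the main conceptual obstacle to be not any calculation but the precise formulation of ``precision remains constant across parameters'': Ridge has no explicit $\tau$, so one must first say what the model's \emph{implicit} effective precision is (e.g.\ read off the posterior curvature $X^{\top}X + \lambda I$, or the prior covariance it corresponds to) before the isotropy condition, and hence the exact-versus-approximate dichotomy, can be stated sharply. The secondary bookkeeping point is the two roles of $\lambda$ in \eqref{eq:beds-fundamental} versus $\mathcal{L}_{\text{Ridge}}$ — harmless as an overall rescaling, but worth stating so the identification $\tau^* = \lambda$ is unambiguous; everything else is routine substitution into the already-proven Theorem~\ref{thm:conditional-optimality}.
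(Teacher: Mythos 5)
The paper states this proposition without any proof at all---it is presented as an interpretive claim, backed only by the correspondence row in Table~\ref{tab:beds-methods-mapping} (origin target, fixed $\tau$, absent phase term, $\kappa\to\infty$, ``constant precision''). Your proposal therefore does not so much take a different route as supply the missing derivation, and it does so correctly: the term-by-term specialization of the coordinate form of $\bedsL$ reproduces exactly the entries of that table, the rescaling argument resolving the double use of the symbol $\lambda$ is a genuine bookkeeping issue in the paper that you are right to flag (without normalizing the data weight to one, the identification would read $\tau^*/\lambda$ rather than $\tau^*=\lambda$), and the appeal to Proposition~\ref{prop:euclidean-ratio} for the per-coordinate ratio $1/\tau_i$ is precisely the mechanism behind the ``heterogeneous settings'' failure asserted in the last sentence. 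Your derivation is in one respect more careful than the paper itself: you correctly observe that $\tau(\mu-\mu^*)^2$ is only the local quadratic form of the $\HH^2$ geodesic distance (via Proposition~\ref{prop:kl-fisher}), whereas the paper presents its coordinate decomposition of $\dF^2$ as if it were exact; on the hyperbolic half-plane the true geodesic between two points at equal $\tau$ dips to lower precision and the distance is not $\sqrt{\tau}\,|\Delta\mu|$, so the ``approximate'' in the proposition's title hides two distinct gaps, and you are the only one separating them. The one point you flag as a conceptual obstacle---what Ridge's \emph{implicit} precision is---is indeed left undefined by the paper, so your proposed reading (posterior curvature or equivalent prior covariance) is a necessary addition rather than an optional refinement.
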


\subsubsection{SIGReg (Spectral Information Gain Regularization)}

Among modern SSL methods, SIGReg provides the most direct implementation
of Fisher--Rao principles.

\begin{equation}
\mathcal{L}_{\text{SIGReg}} = -\sum_i \log \sigma_i(Z) + \alpha \sum_i (\sigma_i - 1)^2
\end{equation}

\begin{proposition}[SIGReg as BEDS]
SIGReg directly implements Fisher--Rao regularization toward $\Normal(0, I)$. The log-determinant term prevents collapse ($\tau_i \to \infty$), while the variance term prevents explosion ($\tau_i \to 0$). The target is an isotropic unit Gaussian, making SIGReg the most direct realization of BEDS principles among current SSL methods.
\end{proposition}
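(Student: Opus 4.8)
The plan is to exhibit $\mathcal{L}_{\text{SIGReg}}$ as the centered, spatial-only restriction of the fundamental BEDS regularizer $\dF^2(\theta,\theta^\star)$ with reference $\theta^\star\sim\Normal(0,I)$ (dictated by A2 as the least-presumptuous unit-variance state), and to show that its two terms carry exactly the two opposing confinement roles named in the proposition. First I would fix the dictionary: a minibatch of representations $Z$ induces an empirical covariance whose eigenvalues I write $\sigma_i^2$, so coordinate $i$ of the belief state is the maximum-entropy marginal $\Normal(\mu_i,\sigma_i^2)$ with precision $\tau_i=\sigma_i^{-2}$, and the reference coordinate is $(\mu_i^\star,\tau_i^\star)=(0,1)$. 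Because SIGReg is applied to centered features, $\mu_i\approx 0=\mu_i^\star$ kills the position term $\tau_i(\mu_i-\mu_i^\star)^2$ in the decomposed form of Eq.~\ref{eq:beds-fundamental}, and the temporal $\kappa$-terms are simply absent in a purely spatial method; what survives is the precision contribution $\sum_i d_{\HH^2}^2\big((\mu_i^\star,\tau_i),(\mu_i^\star,1)\big)$. Restricted to the vertical $\tau$-fibre (a $\HH^2$-geodesic, since the hyperbolic metric has no cross term and is $\mu$-symmetric), this integrates to the closed form $2\sum_i(\log\sigma_i)^2$; equivalently, by Proposition~\ref{prop:kl-fisher} and to leading order it equals $2D_{\KL}\!\big(\Normal(0,\Sigma)\,\|\,\Normal(0,I)\big)=\sum_i(\sigma_i^2-\log\sigma_i^2-1)$, and all of these agree with the quadratic rendering $\sum_i\frac{(\sigma_i^2-1)^2}{2\sigma_i^2}$ to leading order near $\sigma_i=1$.

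Second, I would match this against $\mathcal{L}_{\text{SIGReg}}=-\sum_i\log\sigma_i+\alpha\sum_i(\sigma_i-1)^2$ term by term. The log-determinant barrier $-\sum_i\log\sigma_i=-\tfrac12\log\det\Sigma$ diverges to $+\infty$ precisely as some $\sigma_i\to 0$, i.e.\ $\tau_i\to\infty$ --- the regime where $\Normal(0,\sigma_i^2)$ degenerates to a point mass, which Fisher--Rao places at infinite distance from $\Normal(0,1)$; so this term is the BEDS barrier against dimensional collapse. The quadratic term $\alpha\sum_i(\sigma_i-1)^2$ diverges as $\sigma_i\to\infty$, i.e.\ $\tau_i\to 0$, confining the belief from the low-precision side --- the BEDS role of the $(\tau_i-\tau_i^\star)^2$ penalty against explosion. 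I would then compare stationary points and Hessians: the BEDS precision term is uniquely minimized at $\sigma_i=1$ for every $i$ (covariance $=I$, i.e.\ the belief \emph{equals} the reference $\Normal(0,I)$), while SIGReg's stationarity condition $-1/\sigma_i+2\alpha(\sigma_i-1)=0$ has the unique positive root $\sigma_i^\star=\tfrac12\big(1+\sqrt{1+2/\alpha}\big)=1+\tfrac1{2\alpha}+O(\alpha^{-2})$, so the two minimizers coincide up to $O(1/\alpha)$ and exactly as $\alpha\to\infty$; moreover both Hessians are diagonal and positive definite there, and matching curvature in $\sigma$-coordinates (the Fisher value is $4$ for $2(\log\sigma_i)^2$ and for the quadratic surrogate alike, against SIGReg's $1+2\alpha$, so $\alpha=\tfrac32$) makes the second-order behaviour agree as well. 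This is the precise content of ``SIGReg implements Fisher--Rao regularization toward $\Normal(0,I)$'': it is a consistent surrogate matching target, role assignment, and leading-order well shape.

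Finally, I would justify the maximality claim by contrast with the other entries of Section~\ref{subsec:special-cases}: Ridge uses the Euclidean surrogate (suboptimal by Corollary~\ref{cor:euclidean-suboptimal}) and implicitly drives $\tau$ rather than targeting unit precision; EMA/DINO and SAC act only on the temporal component $(\phi,\kappa)$; whereas SIGReg alone writes an explicit two-sided penalty whose fixed point is the A2 maximum-entropy state $\Normal(0,I)$, so it is the tightest available realization of the spatial half of Eq.~\ref{eq:beds-fundamental}. The hard part, I expect, is exactly this non-literal equality: the argument must make rigorous --- through the term-by-term role assignment, the $O(1/\alpha)$ agreement of minimizers, and the curvature match --- the sense in which SIGReg \emph{approximates} rather than coincides with $\dF^2$, and it must be explicit about the centering hypothesis needed to discard the position term and about reading $\sigma_i$ consistently (covariance eigenvalue vs.\ standard deviation) throughout.
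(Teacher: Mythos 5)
Your proposal is correct, but it is worth noting that the paper offers no proof of this proposition at all: the statement is asserted on the strength of the informal reading already contained in its own wording and in Table~\ref{tab:beds-methods-mapping}. What you supply is therefore a genuine derivation where the paper has only an interpretation. Your route --- computing the exact squared Fisher--Rao distance $2\sum_i(\log\sigma_i)^2$ along the totally geodesic $\tau$-fibre of $\HH^2$, checking its leading-order agreement with both $2D_{\KL}(\Normal(0,\Sigma)\,\|\,\Normal(0,I))$ and the quadratic precision term of Eq.~\eqref{eq:beds-fundamental}, and then matching SIGReg term by term via divergence directions, the stationary point $\sigma_i^\star=1+\tfrac{1}{2\alpha}+O(\alpha^{-2})$, and the curvature condition $1+2\alpha=4$ --- is sound, and it buys something the paper's assertion does not: a precise sense (consistent surrogate; matching barriers, minimizer up to $O(1/\alpha)$, and Hessian at $\alpha=\tfrac32$) in which ``directly implements'' can be defended. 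It also exposes a small tension with the proposition's literal wording: SIGReg's actual fixed point is not $\Normal(0,I)$ but a slightly inflated Gaussian, coinciding with the A2 reference only as $\alpha\to\infty$, so the honest conclusion is ``asymptotically targets'' rather than ``targets.'' Your closing caveats about the centering hypothesis and the covariance-eigenvalue versus standard-deviation reading of $\sigma_i(Z)$ are exactly the ambiguities the paper leaves unresolved, and flagging them strengthens rather than weakens the argument.
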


\subsubsection{Exponential Moving Average (EMA)}

The exponential moving average, ubiquitous in self-supervised learning,
controls the temporal component of BEDS.

\begin{equation}
\theta_{\text{target}} \leftarrow m \cdot \theta_{\text{target}} + (1-m) \cdot \theta_{\text{online}}
\end{equation}

\begin{proposition}[EMA as Temporal BEDS]
The momentum coefficient controls temporal coherence:
\begin{equation}
\kappa \approx \frac{1}{1-m}
\end{equation}
High momentum ($m \to 1$) implies high temporal coherence ($\kappa \to \infty$).
\end{proposition}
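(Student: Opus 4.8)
The plan is to read the momentum coefficient as the single parameter of an exponentially weighted circular average, and to identify its effective integration window with the von Mises concentration. First I would unroll the recursion: iterating $\theta^{\mathrm{tgt}}_t = m\,\theta^{\mathrm{tgt}}_{t-1} + (1-m)\,\theta^{\mathrm{on}}_t$ gives the exponentially weighted average $\theta^{\mathrm{tgt}}_t = (1-m)\sum_{k\ge 0} m^{k}\,\theta^{\mathrm{on}}_{t-k}$ with weights $w_k=(1-m)m^k$ summing to $1$. Its effective sample size is $N_{\mathrm{eff}} = \big(\sum_k w_k\big)^2\big/\sum_k w_k^2 = (1+m)/(1-m) = \Theta\!\big(1/(1-m)\big)$. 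Equivalently, the continuous-time limit of the update is the relaxation $\dot\theta^{\mathrm{tgt}} = -(1-m)\big(\theta^{\mathrm{tgt}} - \theta^{\mathrm{on}}\big)$, a low-pass filter of memory timescale $1/(1-m)$; this matches the coherence-decay rate $\gamma_\kappa = 1-m$ of Definition~\ref{def:dissipation}, so the EMA teacher is exactly the dissipative smoother of the temporal coordinate.

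Next, under Assumption~A2 I model the temporal component by a von Mises law: write $\phi^{\mathrm{on}}_t = \phi^*_t + \epsilon_t$ with $\phi^*_t$ drifting slowly (timescale $\gg 1/(1-m)$) and $\epsilon_t$ i.i.d.\ per-step noise of concentration $\kappa_0 = \Theta(1)$. Complexifying, $z_t = e^{i\phi^{\mathrm{on}}_t}$ and $\phi^{\mathrm{tgt}} = \arg\!\big((1-m)\sum_k m^k z_{t-k}\big)$; averaging over $N_{\mathrm{eff}}$ effectively independent draws divides the circular variance of the resultant direction by $N_{\mathrm{eff}}$, so $\phi^{\mathrm{tgt}}\sim\vM(\phi^*,\kappa)$ with $\kappa \approx N_{\mathrm{eff}}\kappa_0$ in the small-noise regime. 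I would then convert through the mean-resultant ratio $A_1(\kappa) = I_1(\kappa)/I_0(\kappa) = 1 - 1/(2\kappa) + O(\kappa^{-2})$, which enters the Fisher coefficient $A(\kappa) = \kappa A_1(\kappa)$ of Proposition~\ref{prop:beds-manifold}: circular variance $1 - A_1(\kappa) \approx 1/(2\kappa)$, which divides by $N_{\mathrm{eff}}$. Absorbing the $O(1)$ prefactor $\kappa_0(1+m)$ into the approximation sign yields $\kappa \approx N_{\mathrm{eff}} \approx 1/(1-m)$, and since $1/(1-m)\to\infty$ as $m\to1$, high momentum forces high coherence. This mirrors the precision--dissipation trade-off of Corollary~\ref{cor:energy-precision} and Proposition~\ref{prop:euclidean-ratio} under the identification $\gamma\leftrightarrow 1-m$.

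The hard part will be the normalization that pins the multiplicative constant to $1$ rather than, say, $2$: there is no canonical per-step phase-noise scale, so the honest content is the order-of-magnitude identity $\kappa = \Theta\!\big(1/(1-m)\big)$, and the displayed ``$\approx$'' must be read asymptotically ($m\to1$, large-$\kappa$ regime). A secondary subtlety is that $\theta^{\mathrm{on}}$ is a genuine learning trajectory rather than an i.i.d.\ sequence; one controls this either by conditioning on a quasi-stationary segment---where the hypothesis $\mathrm{timescale}(\phi^*)\gg 1/(1-m)$ is exactly what makes the block average coherent---or by passing to the Ornstein--Uhlenbeck limit above, where the claim drops out of the stationary variance of the smoother.
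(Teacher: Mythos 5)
The paper offers no proof of this proposition at all: it is asserted as a definitional correspondence (``the momentum coefficient controls temporal coherence''), restated in the related-work section and the appendix correspondence tables, and used numerically once ($m \approx 0.99 \Rightarrow \kappa \approx 100$), but never derived. Your sketch therefore supplies an argument where the paper supplies only an identification. The route you take --- unrolling the EMA into an exponentially weighted average, computing the effective sample size $N_{\mathrm{eff}} = (1+m)/(1-m)$, and invoking the standard fact that the concentration of a circular mean of $N_{\mathrm{eff}}$ effectively independent von Mises draws scales like $N_{\mathrm{eff}}\kappa_0$ --- is a sound heuristic derivation consistent with Assumption A2 and with the dissipation dynamics of Definition~\ref{def:dissipation} (your identification $\gamma_\kappa \leftrightarrow 1-m$ matches the continuous-time low-pass reading the paper gestures at elsewhere). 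Your honest caveat that only the scaling $\kappa = \Theta\!\big(1/(1-m)\big)$ is derivable, the $O(1)$ constant depending on an unpinned per-step phase-noise scale $\kappa_0$, is in fact more careful than the paper's bare ``$\approx$''; the paper's own usage (e.g.\ $\kappa \approx 1/\alpha$ for SAC, $\kappa \approx 100$ for DINO) treats the constant as exactly $1$ without justification. The only caution is that your argument assumes the online trajectory is quasi-stationary over the window $1/(1-m)$, which you flag; this is precisely the regime in which the proposition is meant to apply, so there is no genuine gap relative to what the statement claims.
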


\subsubsection{Soft Actor-Critic (SAC)}

Soft Actor-Critic illustrates how BEDS applies to reinforcement learning.

\begin{equation}
J(\pi) = \sum_t \E\left[r(s_t, a_t) + \alpha \mathcal{H}[\pi(\cdot|s_t)]\right]
\end{equation}

\begin{proposition}[SAC as Low-$\kappa$ BEDS]
SAC operates in the exploration regime:
\begin{itemize}
    \item Temperature $\alpha = 1/\kappa$
    \item Policy entropy corresponds to phase dispersion
    \item Low $\kappa$: exploration dominates
\end{itemize}
\end{proposition}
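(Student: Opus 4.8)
The plan is to exhibit SAC's maximum-entropy objective as an instance of the Fundamental BEDS Equation~\eqref{eq:beds-fundamental} restricted to the temporal sector, with the SAC temperature $\alpha$ playing the role reciprocal to the coherence $\kappa$, exactly as the momentum $m$ did in the EMA proposition. First I would fix the correspondence: at each state, identify the policy $\pi(\cdot\mid s_t)$ with a belief state over actions, so that its modal direction is the phase $\phi$ and its concentration is the coherence $\kappa$; under Assumption~A2 the maximum-entropy policy with a prescribed mean direction is von Mises $\mathrm{vM}(\phi,\kappa)$ (Gaussian in the unconstrained case), placing $\pi$ on $\MvM$. The reward term $\sum_t\E[r(s_t,a_t)]$ is then read as $-\lambda\,\mathcal{L}_{\mathrm{data}}$ (sign flipped because SAC maximizes while BEDS minimizes), and the entropy bonus $\alpha\,\mathcal{H}[\pi(\cdot\mid s_t)]$ as the temporal part of $\dF^2$.

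Second, I would make the temperature–coherence identification precise via the soft-greedy form of the SAC optimum. Solving the per-state inner problem $\max_{\pi(\cdot\mid s)}\E_{a\sim\pi}[Q(s,a)] + \alpha\,\mathcal{H}[\pi(\cdot\mid s)]$ yields the Boltzmann policy $\pi^\star(a\mid s)\propto\exp(Q(s,a)/\alpha)$, and the same computation rewrites the objective as $-\alpha\,D_{\KL}\!\big(\pi\,\|\,\pi^\star\big)$ up to a state-dependent constant. This is exactly a BEDS-type regularizer: by the local KL–Fisher correspondence (Proposition~\ref{prop:kl-fisher}), $D_{\KL}(\pi\|\pi^\star) = \tfrac12\dF^2(\pi,\pi^\star) + O(\|\cdot\|^3)$, so SAC minimizes $\alpha$ times a squared Fisher–Rao distance to the soft-greedy reference — the temporal analogue of $\dF^2(\theta,\theta^\star)$ with the balance coefficient fixed by $\alpha$. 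To extract $\kappa$, I would expand $Q(s,\cdot)$ to second order (resp. keep its leading Fourier mode) about its maximizer; then $\pi^\star$ is Gaussian (resp. von Mises) with precision/concentration proportional to $1/\alpha$, giving $\kappa\propto 1/\alpha$, i.e. $\alpha = 1/\kappa$ up to the curvature constant of $Q$. The three bulleted claims then follow: (i) is this identification; (ii) holds because the von Mises entropy is strictly decreasing in $\kappa$, so maximizing $\mathcal{H}[\pi]$ is literally minimizing coherence, i.e. dispersing $\phi$; (iii) follows from Definition~\ref{def:crystallization}, since large $\alpha$ forces small $\kappa$, hence small crystallization index $\mathcal{C}=\tau\kappa$ (fluid/exploratory regime), while $\alpha\to 0$ drives $\kappa\to\infty$ and the deterministic greedy (exploitation) limit.

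The main obstacle is that the clean relation $\alpha = 1/\kappa$ is exact only when $Q(s,\cdot)$ has the canonical quadratic (Gaussian) or cosine (von Mises) profile near its mode; for general $Q$ the Boltzmann policy $\exp(Q/\alpha)$ is not a member of the maximum-entropy families singled out by A2, so the identification is really a statement about the Laplace / leading-Fourier approximation of $\pi^\star$, valid in the near-deterministic (small-$\alpha$) or locally-quadratic regime. I would therefore state the proposition with that caveat, or equivalently define $\kappa$ as the \emph{effective} coherence obtained by matching the resultant length (resp. variance) of $\pi^\star$ to its von Mises/Gaussian projection — which is the natural BEDS coordinate in any case. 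A secondary, purely bookkeeping point is reconciling the discounted infinite sum $\sum_t$ in $J(\pi)$ with the single-step BEDS loss; this is resolved by reading the BEDS equation per environment step (per Bellman backup), exactly as the EMA proposition reads it per update step.
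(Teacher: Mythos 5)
Your proposal is correct in its essentials, but it goes well beyond what the paper actually does: the paper offers no proof of this proposition at all. The correspondence $\alpha = 1/\kappa$ is simply asserted there (and restated informally in the related-work section as ``the entropy bonus corresponds to low temporal coherence ($\kappa = 1/\alpha$)''), with the three bullets functioning as an interpretive dictionary rather than a derived result. Your route --- solving the per-state inner problem to get the Boltzmann policy $\pi^\star(a\mid s)\propto\exp(Q(s,a)/\alpha)$, rewriting the objective as $-\alpha\,D_{\KL}(\pi\,\|\,\pi^\star)$ up to a constant, invoking Proposition~\ref{prop:kl-fisher} to convert this into $\tfrac{\alpha}{2}\dF^2(\pi,\pi^\star)$, and then extracting $\kappa\propto 1/\alpha$ via a Laplace (resp.\ leading-Fourier) expansion of $Q$ about its mode --- is a genuine derivation that the paper lacks, and each step is sound. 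Your self-identified caveat is also the right one and is worth keeping: $\exp(Q/\alpha)$ is generally not a member of the maximum-entropy families singled out by Assumption~A2, so the exact equality $\alpha = 1/\kappa$ only holds up to the curvature constant of $Q$ in the locally-quadratic or small-$\alpha$ regime, and otherwise $\kappa$ must be read as the effective coherence of the von Mises/Gaussian projection of $\pi^\star$. Strictly speaking this means you have proved $\kappa\propto 1/\alpha$ rather than $\alpha = 1/\kappa$ with unit constant, which is arguably all the proposition can legitimately claim; the paper's bald statement glosses over exactly this point. The per-step reading of the discounted sum is the same bookkeeping convention the paper implicitly uses for the EMA proposition, so no issue there.
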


\subsubsection{Transformers and Attention}

The Transformer architecture, now ubiquitous in both language and vision,
implements a Bayesian belief update mechanism that maps directly onto BEDS coordinates.

\paragraph{Attention as Posterior Computation}
The core attention mechanism:
\begin{equation}
\text{Attention}(Q, K, V) = \text{softmax}\left(\frac{QK^\top}{\sqrt{d_k}}\right) V
\label{eq:attention-beds}
\end{equation}
admits a precise BEDS interpretation. The compatibility scores $QK^\top$ compute
relative log-likelihoods; the softmax normalizes these into a posterior distribution
over attention targets; the weighted combination with $V$ performs the Bayesian update.

\begin{proposition}[Attention as BEDS Update]
The attention mechanism implements a Bayesian belief update in BEDS coordinates:
\begin{itemize}
    \item \textbf{Temperature:} The scaling factor $\sqrt{d_k}$ acts as inverse
          temperature, controlling attention sharpness. This corresponds to
          effective coherence: $\kappa_{\text{eff}} \propto \sqrt{d_k}$.
    \item \textbf{Phase:} Positional encodings---particularly sinusoidal
          embeddings---explicitly represent phase $\phi$ at multiple frequencies,
          corresponding to Fourier moments of temporal SIGReg.
    \item \textbf{Precision:} Each attention head refines the representation,
          increasing effective precision $\tau$ through successive layers.
\end{itemize}
\end{proposition}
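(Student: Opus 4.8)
The plan is to establish the claimed correspondence in three parts, one per bullet point, by exhibiting the exponential-family structure hidden in the attention operation of Equation~\eqref{eq:attention-beds}. First I would fix the latent-variable model against which attention performs inference: at query position $i$, introduce a latent ``source index'' $s_i \in \{1,\dots,n\}$ with uniform prior $p(s_i = j) = 1/n$ and an exponential-family likelihood $p(\mathrm{obs}_i \mid s_i = j) \propto \exp(\langle q_i, k_j\rangle / \sqrt{d_k})$, whose natural parameter is the scaled query and whose sufficient statistic is the key. Bayes' rule then yields the posterior $p(s_i = j \mid \mathrm{obs}_i) = \mathrm{softmax}_j(\langle q_i, k_j\rangle / \sqrt{d_k})$, and the attention output $\sum_j p(s_i = j \mid \mathrm{obs}_i)\, v_j$ is precisely the posterior expectation $\E[v_{s_i} \mid \mathrm{obs}_i]$. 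This is the exact sense in which attention is a Bayesian update: a posterior mean under a uniform-prior exponential-family model, reproducing the Bayesian step $p(\cdot)\mapsto p(\cdot\mid\mathrm{obs})$ that underlies the BEDS framework.

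Second, for the temperature/coherence identification I would model queries and keys as vectors with coordinates of magnitude $O(1)$, so that $\|q_i\| \approx \|k_j\| \approx \sqrt{d_k}$ and $\langle q_i, k_j\rangle \approx d_k \cos\theta_{ij}$, where $\theta_{ij}$ is the angle between query and key. The scaled logit is then $\langle q_i, k_j\rangle/\sqrt{d_k} \approx \sqrt{d_k}\,\cos\theta_{ij}$, so the posterior weights are $\propto \exp(\sqrt{d_k}\,\cos\theta_{ij})$ --- precisely the von Mises kernel $\exp(\kappa\cos\theta)$ with concentration $\kappa = \sqrt{d_k}$. Thus $\sqrt{d_k}$ is not an arbitrary normalizer but an effective coherence parameter: larger $d_k$ sharpens the posterior over directions exactly as larger von Mises $\kappa$ sharpens a circular distribution, giving $\kappa_{\mathrm{eff}} \propto \sqrt{d_k}$ as claimed, consistent with the Bessel-function coefficients appearing in the BEDS metric (Proposition~\ref{prop:beds-manifold}).

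Third, for the phase identification in part three I would expand the sinusoidal positional encoding, $\mathrm{PE}(m)_{2\ell} = \sin(m\omega_\ell)$ and $\mathrm{PE}(m)_{2\ell+1} = \cos(m\omega_\ell)$, and recognize each coordinate pair as the direction vector $(\cos\phi_\ell, \sin\phi_\ell)$ with $\phi_\ell = m\omega_\ell$ --- the canonical $\R^2$-embedding of a von Mises location parameter at frequency $\omega_\ell$. The positional contribution to an attention logit, $\langle \mathrm{PE}(m), \mathrm{PE}(n)\rangle = \sum_\ell \cos(\omega_\ell(m-n))$, is then a sum of von Mises log-likelihood kernels $\cos(\phi_\ell - \phi_\ell^*)$ at multiple frequencies, i.e.\ a discrete multi-frequency temporal SIGReg term. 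For the precision-through-layers claim I would invoke the Gaussian conjugate update: if the residual stream at depth $\ell$ carries a belief $\Normal(\mu^{(\ell)}, (\tau^{(\ell)})^{-1})$ and each attention block contributes, via the residual addition $\mu^{(\ell+1)} = \mu^{(\ell)} + \text{(attention output)}$, a likelihood of nonnegative precision $\Delta\tau^{(\ell)}$, then $\tau^{(\ell+1)} = \tau^{(\ell)} + \Delta\tau^{(\ell)}$ is monotone nondecreasing in $\ell$.

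The main obstacle is making this last step genuinely rigorous. Part one and the phase identification are essentially definitional once the exponential-family model and the $\R^2$-embedding of phases are fixed, and the $\sqrt{d_k}$ scaling in part two is a short variance/norm computation. But ``each attention layer increases precision'' holds only under extra structural hypotheses: that the residual stream carries an approximately Gaussian belief, that successive blocks act like conditionally independent likelihoods, and that layer normalization and the pointwise nonlinearities do not export entropy. I would therefore prove the precision-monotonicity conclusion only \emph{under the conjugate-update approximation}, and explicitly note that in the general regime attention heads may instead inject entropy (lowering $\tau$), a behaviour governed by the dissipation dynamics of Definition~\ref{def:dissipation} rather than by pure Bayesian updating. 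Delimiting that boundary precisely --- when a Transformer layer crystallizes versus dissipates --- is where the real difficulty lies.
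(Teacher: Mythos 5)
Your proposal follows the same interpretive route as the paper---softmax as posterior over attention targets, $\sqrt{d_k}$ as an effective von Mises concentration, sinusoidal positional encodings as multi-frequency phases---but you should know that the paper offers no proof at all for this proposition: it is asserted through two sentences of prose (``compatibility scores compute relative log-likelihoods; the softmax normalizes these into a posterior; the weighted combination performs the Bayesian update'') and the observation that Eq.~\eqref{eq:attention-posterior} is ``von Mises-like.'' Your write-up therefore supplies genuinely new content on every bullet: the explicit latent-variable model with uniform prior and exponential-family likelihood makes the Bayesian claim precise rather than metaphorical; the norm computation $\langle q,k\rangle/\sqrt{d_k}\approx\sqrt{d_k}\cos\theta_{ij}$ actually derives the $\kappa_{\text{eff}}\propto\sqrt{d_k}$ identification that the paper merely states; and the $\R^2$-embedding of $(\sin m\omega_\ell,\cos m\omega_\ell)$ as a von Mises location parameter substantiates the phase correspondence. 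Most importantly, you correctly isolate the one claim that does \emph{not} follow from the setup---that precision $\tau$ is monotone nondecreasing through layers---and you are right that it holds only under a conjugate-update approximation with conditionally independent likelihoods and entropy-neutral normalization; the paper asserts this bullet without qualification, so your explicit delimitation of when a layer crystallizes versus dissipates is a strengthening, not a deviation. The only caution is rhetorical: since the paper treats the proposition as a definitional correspondence rather than a theorem, your conditional proof of the third bullet should be framed as establishing the proposition \emph{under stated hypotheses}, lest it appear to claim more than the source does.
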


The attention posterior over positions takes the explicit form:
\begin{equation}
p(\text{attend}_i | q) = \frac{\exp(q \cdot k_i / \sqrt{d_k})}{\sum_j \exp(q \cdot k_j / \sqrt{d_k})}
\label{eq:attention-posterior}
\end{equation}
This is a von Mises-like distribution when positions are interpreted as phases,
with concentration parameter $\kappa \propto \sqrt{d_k}$.

\paragraph{Implications for LLMs}
In large language models, context position plays the role of phase $\phi$:
\begin{itemize}
    \item Causal attention enforces phase ordering (past $\to$ future)
    \item Context window limits define coherence bounds ($\kappa_{\max}$)
    \item Generation temperature $T$ directly controls output entropy via $\kappa = 1/T$
\end{itemize}
This interpretation suggests that scaling laws may have thermodynamic underpinnings:
larger models achieve higher precision $\tau$ through deeper processing,
while longer contexts extend coherence $\kappa$.

\subsubsection{Diffusion Models}

Diffusion models provide perhaps the most explicit realization of BEDS dynamics:
the forward process is pure dissipation, the reverse process is reconstruction
guided by learned observations.

\paragraph{Forward Process as Dissipation}
The forward diffusion process follows an Ornstein-Uhlenbeck equation:
\begin{equation}
dx_t = -\gamma x_t \, dt + \sigma \, dW_t
\label{eq:diffusion-forward-beds}
\end{equation}
This is explicit BEDS dissipation with transparent parameter correspondences:
\begin{itemize}
    \item Precision decreases: $\tau(t) \propto 1/\sigma_t^2 \to 0$
    \item Entropy is injected via Brownian noise $dW_t$
    \item Final state $x_T \sim \mathcal{N}(0, I)$ represents thermal equilibrium
          (maximum entropy, minimum precision)
\end{itemize}

\begin{proposition}[Diffusion as BEDS Cycle]
Diffusion models implement a complete BEDS cycle:
\begin{enumerate}
    \item \textbf{Forward (dissipation):} Information flows from data to noise,
          $\tau \to 0$, entropy increases
    \item \textbf{Reverse (reconstruction):} Learned score injects information,
          $\tau$ increases, structure re-emerges
\end{enumerate}
\end{proposition}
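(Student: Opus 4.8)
The plan is to verify each half of the claimed cycle separately and then show the two halves compose into a closed orbit on $\bedsM$.

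First I would treat the \textbf{forward half (dissipation)}. Solving the Ornstein--Uhlenbeck equation~\eqref{eq:diffusion-forward-beds} gives $p(x_t \mid x_0) = \Normal(e^{-\gamma t} x_0,\; \sigma_t^2 I)$ with $\sigma_t^2 = \tfrac{\sigma^2}{2\gamma}(1 - e^{-2\gamma t})$. The informativeness of $x_t$ about the data — the Fisher information of this kernel in $x_0$, equivalently the precision the data-posterior $p(x_0 \mid x_t)$ would have against a flat prior — equals $\tau(t) = e^{-2\gamma t}/\sigma_t^2$, which is strictly decreasing with $\tau(t) \to 0$. For the exponential (variance-exploding) schedule this yields exactly $\dot\tau = -2\gamma\tau$, recovering Definition~\ref{def:dissipation}; for the standard schedule it holds asymptotically. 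Entropy increase is then immediate: the conditional differential entropy $\tfrac{d}{2}\log(2\pi e\,\sigma_t^2)$ is strictly increasing, and $D_{\KL}(p_t \,\|\, \Normal(0,I))$ decreases monotonically along the flow, so the terminal law is the maximum-entropy Gaussian claimed in the proposition. Thus the forward process \emph{is} the dissipation dynamics of Section~\ref{subsec:dissipation}, with the spatial belief relaxing to equilibrium.

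Next I would treat the \textbf{reverse half (reconstruction)}. Applying Anderson's time reversal, the reverse process is $dx_t = [-\gamma x_t - \sigma^2 \nabla\log p_t(x_t)]\,dt + \sigma\,d\bar W_t$, and the key observation is that the added score term $\nabla\log p_t$ is exactly the Bayesian correction that raises precision: an infinitesimal reverse step carries a Gaussian belief $\Normal(\mu_t, \tau_t^{-1})$ to $\Normal(\mu_{t-dt}, \tau_{t-dt}^{-1})$ with $\tau_{t-dt} > \tau_t$, which in BEDS coordinates is a move on $\HH^2$ in the direction of increasing $\tau$. To connect this with the fundamental equation~\eqref{eq:beds-fundamental}, I would show that the denoising score-matching objective $\E\,\|s_\theta(x_t,t) - \nabla\log p_t(x_t)\|^2$ plays the role of $\mathcal{L}_{\text{data}}$, while the Gaussian transition kernel defining each reverse step contributes precisely the $\dF^2$ penalty between consecutive belief states; the learned reverse chain therefore descends $\bedsL$ step by step.

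Finally I would \textbf{close the loop}: the forward trajectory moves monotonically downward in $\tau$ from the data manifold to $\Normal(0,I)$, the reverse trajectory moves monotonically back up and returns (in distribution) to the data, and their concatenation is a closed orbit through $\bedsM$ — a complete BEDS cycle. The hard part will be the reverse direction: making rigorous the claim that the \emph{learned} score effects a genuine Bayesian update rather than an approximate one. That identification is exact only in the idealized regime of a perfect score network and an infinitesimal time discretization — the quasi-static limit of Assumption~\ref{ass:quasistatic} — and away from it one obtains only that each reverse step decreases $\bedsL$ up to an error controlled by the score-matching gap and the step size. A secondary subtlety, already visible above, is that ``$\tau \to 0$'' and ``$\tau$ increases'' are statements about monotonicity and the limiting maximum-entropy state; the strict limits require the variance-exploding schedule together with an improper reference prior, and the proposition should be read in that sense.
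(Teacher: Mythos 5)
Your proposal is correct and follows the same conceptual decomposition the paper uses --- forward process as dissipation of the spatial precision $\tau$, reverse process as score-driven reconstruction --- but it is worth noting that the paper supplies no proof of this proposition at all: the claim is supported there only by a prose dictionary (precision $\tau \propto 1/\sigma_t^2 \to 0$, the score as an ``observation term,'' $x_T \sim \Normal(0,I)$ as the maximum-entropy state) together with a correspondence table. What you add is genuine mathematical content on the same skeleton: the explicit Ornstein--Uhlenbeck transition kernel, the Fisher-information computation $\tau(t) = e^{-2\gamma t}/\sigma_t^2$ that recovers the decay law of Definition~\ref{def:dissipation} (exactly for a geometric variance-exploding schedule, asymptotically for the OU form in Eq.~\eqref{eq:diffusion-forward-beds}), the monotone decay of $D_{\KL}(p_t \,\|\, \Normal(0,I))$ along the forward flow, and Anderson's time reversal for the backward half. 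Equally valuable is that you isolate precisely the two idealizations the paper leaves implicit: the identification of the learned reverse step with an exact Bayesian update (and of score matching with $\mathcal{L}_{\text{data}}$ plus a $\dF^2$ transition penalty) holds only for a perfect score network in the infinitesimal-step, quasi-static limit of Assumption~\ref{ass:quasistatic}, and the statements ``$\tau \to 0$'' and ``$\tau$ increases'' are monotonicity claims relative to a chosen noise schedule and reference prior. The one point to keep an eye on is your treatment of the reverse-time marginals as Gaussian beliefs $\Normal(\mu_t, \tau_t^{-1})$: the marginals $p_t$ are generally non-Gaussian (data convolved with a Gaussian), so the $\HH^2$ picture applies to the kernel-level precision you defined, not to $p_t$ itself; since your $\tau(t)$ is already defined at the kernel level this does not break the argument, but the phrasing should make the distinction explicit.
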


\paragraph{Reverse Process and Score Matching}
The reverse process reconstructs structure via:
\begin{equation}
dx_t = \left[\gamma x_t + \sigma^2 \nabla_x \ln p_t(x)\right] dt + \sigma \, d\tilde{W}_t
\label{eq:diffusion-reverse-beds}
\end{equation}
The score function $\nabla_x \ln p_t(x)$, learned through denoising score matching,
plays the role of observation in BEDS: it injects the information necessary to
reverse dissipation and reconstruct the original distribution.

\paragraph{BEDS-Diffusion Correspondences}
\begin{table}[h]
\centering
\begin{tabular}{ll}
\toprule
\textbf{Diffusion} & \textbf{BEDS} \\
\midrule
Forward process & Dissipation ($\tau \to 0$) \\
Noise level $\sigma_t$ & $1/\sqrt{\tau}$ \\
Score $\nabla \ln p_t$ & Observation term \\
$x_T \sim \mathcal{N}(0, I)$ & Maximum-entropy state \\
Reverse process & Reconstruction ($\tau$ increases) \\
Denoising steps & Geodesic return in belief space \\
\bottomrule
\end{tabular}
\end{table}

This interpretation reveals why diffusion models succeed: they explicitly implement
the thermodynamically optimal path---first dissipating to maximum entropy (erasing
structure), then reconstructing along the geodesic defined by the learned score.

Figure~\ref{fig:unification} synthesizes these mappings visually. The left branch shows spatial methods (weight decay, batch normalization) as approximations to Fisher--Rao distance on $\HH^2$; the right branch shows temporal methods (momentum, EMA) as controlling coherence on $\MvM$. Diffusion models, uniquely, traverse the \emph{full} BEDS cycle: the forward process dissipates structure ($\tau \to 0$), while the reverse process reconstructs it via learned score functions. This unification is not metaphorical---under A1--A3, these methods implement thermodynamically necessary operations.

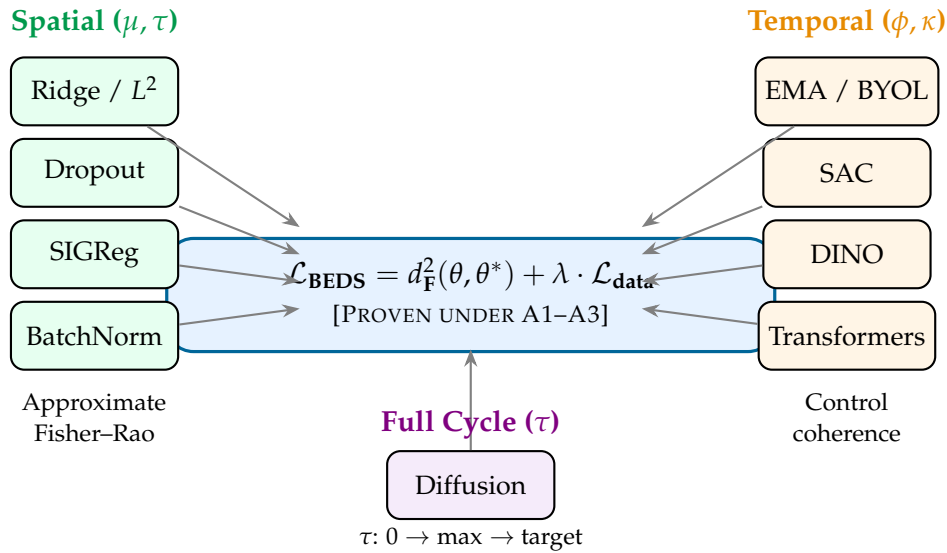
\begin{figure}[H]
\centering
\begin{tikzpicture}[
    scale=0.9,
    every node/.style={font=\small},
    method/.style={draw, rounded corners=5pt, minimum width=2.2cm, minimum height=0.9cm, thick},
    arrow/.style={-{Stealth}, thick, gray}
]

\node[draw, rounded corners=10pt, fill=lightblue, minimum width=8cm, minimum height=1.5cm,
      very thick, draw=bedsblue] (central) at (0,0) {};
\node[font=\bfseries] at (0,0.3) {$\bedsL = \dF^2(\theta, \theta^*) + \lambda \cdot \mathcal{L}_{\text{data}}$};
\node[font=\footnotesize] at (0,-0.3) {\proven};

\node[method, fill=lightgreen] (ridge) at (-5.5,3) {Ridge / $L^2$};
\node[method, fill=lightgreen] (dropout) at (-5.5,1.8) {Dropout};
\node[method, fill=lightgreen] (sigreg) at (-5.5,0.6) {SIGReg};
\node[method, fill=lightgreen] (batchnorm) at (-5.5,-0.6) {BatchNorm};

\node[method, fill=lightorange] (ema) at (5.5,3) {EMA / BYOL};
\node[method, fill=lightorange] (sac) at (5.5,1.8) {SAC};
\node[method, fill=lightorange] (dino) at (5.5,0.6) {DINO};
\node[method, fill=lightorange] (transformers) at (5.5,-0.6) {Transformers};

\node[method, fill=lightpurple] (diffusion) at (0,-2.8) {Diffusion};

\draw[arrow] (ridge) -- (-2.5,1);
\draw[arrow] (dropout) -- (-2.5,0.6);
\draw[arrow] (sigreg) -- (-2.5,0.2);
\draw[arrow] (batchnorm) -- (-2.5,-0.2);

\draw[arrow] (ema) -- (2.5,1);
\draw[arrow] (sac) -- (2.5,0.6);
\draw[arrow] (dino) -- (2.5,0.2);
\draw[arrow] (transformers) -- (2.5,-0.2);

\draw[arrow] (diffusion) -- (0,-0.8);

\node[font=\bfseries, bedsgreen] at (-5.5,4) {Spatial ($\mu, \tau$)};
\node[font=\bfseries, bedsorange] at (5.5,4) {Temporal ($\phi, \kappa$)};
\node[font=\bfseries, bedspurple] at (0,-1.9) {Full Cycle ($\tau$)};

\node[font=\footnotesize, text width=2.5cm, align=center] at (-5.5,-1.8)
    {Approximate\\Fisher--Rao};
\node[font=\footnotesize, text width=2.5cm, align=center] at (5.5,-1.8)
    {Control\\coherence};
\node[font=\footnotesize] at (0,-3.6)
    {$\tau$: $0 \to \max \to$ target};

\end{tikzpicture}
\caption{\textbf{Unification of ML methods under BEDS.} Existing regularization methods are special cases or approximations of the fundamental Fisher--Rao equation. Transformers implement Bayesian belief updates with temperature-controlled precision, while diffusion models traverse the full BEDS cycle---dissipation followed by reconstruction.}
\label{fig:unification}
\end{figure}


\section{Thermodynamic Efficiency of Learning}
\label{sec:efficiency}

\subsection{Definition of Efficiency}
\label{subsec:efficiency-definition}

The Energy-Precision Bound establishes a minimum energy cost; we now ask
how close real learning systems come to this theoretical limit.

\begin{definition}[Thermodynamic Efficiency]
\label{def:efficiency}
The thermodynamic efficiency of a learning process is:
\begin{equation}
\boxed{\eta = \frac{E_{\text{Landauer}}}{E_{\text{actual}}} = \frac{\kB T \ln 2 \cdot I}{E_{\text{actual}}} \in (0, 1]}
\end{equation}
where $I$ is the information erased during learning and $E_{\text{actual}}$ is the actual energy expended. Efficiency $\eta = 1$ corresponds to thermodynamic optimality.
\end{definition}

\subsection{Sources of Inefficiency}
\label{subsec:inefficiency-sources}

We decompose inefficiency multiplicatively:
\begin{equation}
\frac{1}{\eta} = \underbrace{\frac{E_{\text{hardware}}}{E_{\text{Landauer}}}}_{\text{hardware overhead}} \times \underbrace{\frac{E_{\text{algorithm}}}{E_{\text{optimal}}}}_{\text{algorithmic inefficiency}} \times \underbrace{\frac{E_{\text{dissipated}}}{E_{\text{necessary}}}}_{\text{dissipative overhead}}
\end{equation}

\begin{table}[htbp]
\centering
\caption{\textbf{Sources of inefficiency in learning.}}
\label{tab:inefficiency}
\begin{tabular}{lll}
\toprule
\textbf{Factor} & \textbf{Source} & \textbf{BEDS Relevance} \\
\midrule
Hardware overhead & GPU/TPU vs reversible computing & Independent of BEDS \\
Algorithmic inefficiency & Suboptimal optimization & Partially addressed \\
Dissipative overhead & Non-geodesic paths & \textbf{Directly addressed} \\
\bottomrule
\end{tabular}
\end{table}

The BEDS framework addresses the third factor: by using Fisher--Rao regularization, we minimize dissipative overhead.

\subsection{Efficiency of Different Regularization Schemes}
\label{subsec:efficiency-comparison}

Different regularization strategies achieve different efficiencies relative
to the thermodynamic optimum.

\begin{proposition}[Relative Efficiency]
For Gaussian beliefs, the efficiency ratio between Euclidean and Fisher--Rao regularization is:
\begin{equation}
\frac{\eta_{\text{Euclid}}}{\eta_{FR}} \leq 1
\end{equation}
with equality only when precision is constant across all parameters.
\end{proposition}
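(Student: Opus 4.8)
The plan is to reduce the claimed inequality to the geodesic-optimality already established in Theorem~\ref{thm:conditional-optimality}. First I would observe that in the efficiency ratio $\eta = E_{\text{Landauer}}/E_{\text{actual}}$ of Definition~\ref{def:efficiency}, the numerator $E_{\text{Landauer}} = \kB T \ln 2 \cdot I$ is a property of the \emph{task} — the irreducible information $I = D_{\KL}(q \| q^*)$ separating the current belief from the reference — and is therefore identical for both regularization schemes applied to the same problem. Hence
\[
\frac{\eta_{\text{Euclid}}}{\eta_{FR}} = \frac{E_{\text{actual}}^{FR}}{E_{\text{actual}}^{\text{Euclid}}},
\]
and it suffices to show $E_{\text{actual}}^{FR} \le E_{\text{actual}}^{\text{Euclid}}$ in the quasi-static regime of Assumption~A3.

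Next I would invoke Step~3 of the proof of Theorem~\ref{thm:conditional-optimality}: under A3 the energy dissipated along a regularization path $\gamma$ from $q$ to $q^*$ equals $\kB T \ln 2$ times the information erased along that path, which in turn is the Fisher--Rao arc length $\int_{\gamma} ds_F$. Fisher--Rao regularization follows the geodesic, so $E_{\text{actual}}^{FR} = \kB T \ln 2 \cdot \ell_F^{\text{geo}}$ with $\ell_F^{\text{geo}} = \dF(q, q^*)$ the minimal length; Euclidean regularization follows the straight segment $t \mapsto (1-t)\theta + t\theta^*$ in parameter coordinates, whose Fisher--Rao length $\ell_F^{\text{Euc}}$ satisfies $\ell_F^{\text{Euc}} \ge \ell_F^{\text{geo}}$ by the very definition of a length-minimizing geodesic. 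Multiplying by the common Landauer cost per bit yields $E_{\text{actual}}^{FR} \le E_{\text{actual}}^{\text{Euclid}}$, hence the ratio is $\le 1$.

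For the equality case I would specialize to the Gaussian belief manifold $\HH^2$ with metric $\tau\,d\mu^2 + \tfrac{1}{2\tau^2}\,d\tau^2$ and use Proposition~\ref{prop:euclidean-ratio}, which records $d_{\text{Euclid}}^2/d_F^2 = 1/\tau$. When precision is constant — $\tau \equiv \tau^*$ along the path and across parameters — the metric restricted to the $\mu$-coordinates is $\tau^*\,d\mu^2$, i.e.\ Euclidean up to the uniform scale $\tau^*$; a uniform conformal rescaling does not change which curve is length-minimizing, so the straight segment \emph{is} the geodesic, $\ell_F^{\text{Euc}} = \ell_F^{\text{geo}}$, and $\eta_{\text{Euclid}} = \eta_{FR}$. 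Conversely, if $\tau$ varies — whether because the path traverses precision or because distinct coordinates carry distinct precisions — the conformal factor $1/\tau$ is non-constant, the Euclidean straight line fails the geodesic equation of the hyperbolic metric, its Fisher--Rao length strictly exceeds $\dF(q, q^*)$, and the inequality becomes strict; this is precisely the suboptimality identified in Corollary~\ref{cor:euclidean-suboptimal} and quantified in Proposition~\ref{prop:euclidean-ratio}.

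The main obstacle I anticipate is the first reduction step: making rigorous that $E_{\text{Landauer}}$ should be taken as the endpoint quantity $\kB T \ln 2 \cdot D_{\KL}(q \| q^*)$ rather than a path-dependent ``information erased,'' so that the excess dissipation of non-geodesic paths is attributed entirely to the denominator $E_{\text{actual}}$. This convention is consistent with the multiplicative decomposition of Section~\ref{subsec:inefficiency-sources}, where the dissipative-overhead factor $E_{\text{dissipated}}/E_{\text{necessary}}$ absorbs exactly the geodesic-versus-straight-line gap, but it needs care to state cleanly — ideally via the data-processing inequality applied to the quasi-static sequence of intermediate beliefs, which shows the endpoint KL lower-bounds the path-integrated divergence with equality only along a geodesic. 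Once that accounting is pinned down, the remaining steps are routine applications of the geodesic variational principle and the explicit hyperbolic metric.
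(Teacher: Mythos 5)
Your proof is correct and follows the same route as the paper's: the paper's entire proof is the one-line observation that Euclidean regularization ``does not respect the natural geometry, hence follows non-geodesic paths, hence dissipates more than necessary,'' which is exactly the core of your second step. Your version is substantially more careful---in particular the accounting of which energy belongs in the numerator versus the denominator of $\eta$, and the conformal-factor analysis of the equality case, neither of which the paper's proof addresses---but the underlying argument is identical.
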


\begin{proof}
Euclidean regularization does not respect the natural geometry, hence follows non-geodesic paths, hence dissipates more than necessary.
\end{proof}

\subsection{Self-Supervised Learning and Efficiency}
\label{subsec:ssl-efficiency}

The thermodynamic perspective suggests a fundamental advantage for
self-supervised learning.

\begin{conjbox}[Conjecture: SSL Thermodynamic Advantage]
\label{conj:ssl}
Under the BEDS framework, self-supervised learning achieves higher thermodynamic efficiency than supervised learning:
\begin{equation}
\eta_{\text{SSL}} \geq \eta_{\text{supervised}}
\end{equation}
with equality if and only if labels provide no information beyond the data structure.
\end{conjbox}

\conjectured

\textbf{Argument}: Supervised learning injects external information (labels) that must be maintained against dissipation. SSL converges to equilibrium with the data distribution, requiring no maintenance energy for external bias.

\textbf{Empirical support}: DINO/DINOv2 produce better representations than supervised counterparts; SSL pre-training outperforms supervised pre-training on transfer tasks.

These efficiency considerations provide a thermodynamic lens for evaluating
and comparing learning methods. The next section extends BEDS to systems
with multiple interacting agents.


\section{Recursive Hierarchy and Multi-Agent Systems}
\label{sec:recursive-hierarchy}

The BEDS framework developed so far applies to individual learning systems.
However, many practical applications involve multiple interacting agents:
federated learning, ensemble methods, distributed optimization, and multi-sensor
fusion. This section extends BEDS to these settings, showing that the same
thermodynamic principles govern multi-agent dynamics. The key insight is that
BEDS possesses a natural recursive structure: the posterior at one level
becomes the prior at the next, enabling hierarchical organization of knowledge.

The BEDS framework possesses a recursive property: what is learned at one level crystallizes into assumed structure at the next. This section develops this recursive structure, connects it to Markov Random Fields and Energy-Based Models, and establishes the formal relationship between BEDS and joint embedding architectures like JEPA.
This section is conceptual and exploratory in nature. Its purpose is not to propose concrete multi-agent algorithms, but to show that the dissipative constraints formalized by BEDS apply recursively to any system maintaining collective belief states under finite resources.
\subsection{The Recursive Nature of BEDS}
\label{subsec:recursive-nature}

The fundamental principle underlying BEDS hierarchy is crystallization
propagation across levels.

\begin{principle}[Recursive Crystallization]
\label{principle:recursive}
The posterior at level $n$ becomes the prior at level $n+1$:
\begin{equation}
\boxed{p_{n+1}(\theta) = p_n(\theta | D_n)}
\label{eq:recursive-prior}
\end{equation}
What is learned at level $n$ crystallizes into assumed structure at level $n+1$.
\end{principle}

\proven

This principle has profound implications. Consider the analogy of a river and a mill from Part I. The riverbed crystallizes first---water learns the path to the sea. Once stable, this crystallized structure becomes the \emph{prior} for the mill: ``water flows here, at this rate.'' The mill doesn't need to learn this; it inherits it as assumed structure. Its own learning operates on top of this inherited certainty.

\begin{figure}[H]
\centering
\begin{tikzpicture}[
    scale=0.9,
    level/.style={rectangle, draw, thick, minimum width=3.5cm, minimum height=1.2cm, rounded corners=3pt},
    arrow/.style={->, thick, >=stealth},
    crystal/.style={->, thick, >=stealth, bedsgreen, dashed}
]

\node[level, fill=lightblue] (l0) at (0, 4) {Level 0: $(\mu_0, \tau_0)$};
\node[right=0.3cm of l0, font=\small\itshape, text=gray] {Raw features};

\draw[crystal] (0, 3.3) -- (0, 2.7) node[midway, right, font=\scriptsize, text=bedsgreen] {crystallize};

\node[level, fill=lightorange] (l1) at (0, 2) {Level 1: $(\mu_1, \tau_1)$};
\node[right=0.3cm of l1, font=\small\itshape, text=gray] {Edges, textures};

\draw[arrow, bedsorange] (-1.75, 3.3) to[out=-90, in=90] node[midway, left, font=\scriptsize] {$p_1 = p_0(\cdot|D_0)$} (-1.75, 2.7);

\draw[crystal] (0, 1.3) -- (0, 0.7) node[midway, right, font=\scriptsize, text=bedsgreen] {crystallize};

\node[level, fill=lightgreen] (l2) at (0, 0) {Level 2: $(\mu_2, \tau_2)$};
\node[right=0.3cm of l2, font=\small\itshape, text=gray] {Parts, objects};

\draw[arrow, bedsorange] (-1.75, 1.3) to[out=-90, in=90] node[midway, left, font=\scriptsize] {$p_2 = p_1(\cdot|D_1)$} (-1.75, 0.7);

\draw[crystal] (0, -0.7) -- (0, -1.3) node[midway, right, font=\scriptsize, text=bedsgreen] {crystallize};

\node[level, fill=lightpurple] (l3) at (0, -2) {Level 3: $(\mu_3, \tau_3)$};
\node[right=0.3cm of l3, font=\small\itshape, text=gray] {Semantic concepts};

\draw[arrow, bedsorange] (-1.75, -0.7) to[out=-90, in=90] node[midway, left, font=\scriptsize] {$p_3 = p_2(\cdot|D_2)$} (-1.75, -1.3);

\node[font=\footnotesize, text=gray] at (4.5, 1.5) {Each level inherits};
\node[font=\footnotesize, text=gray] at (4.5, 1.0) {crystallized structure};
\node[font=\footnotesize, text=gray] at (4.5, 0.5) {from below};

\end{tikzpicture}
\caption{\textbf{Recursive crystallization in BEDS.} The posterior at each level becomes the prior for the next. Level 0 processes raw features; Level 1 builds on crystallized edges and textures; Level 2 assembles parts and objects; Level 3 forms semantic concepts. This mirrors the hierarchy in deep neural networks.}
\label{fig:recursive-levels}
\end{figure}
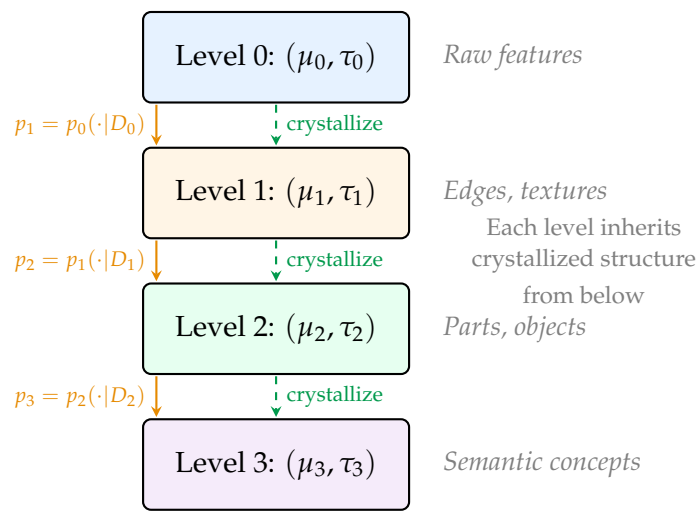

This structure appears naturally in deep learning. In a convolutional network:
\begin{itemize}
    \item \textbf{Level 0}: Raw pixels (maximum uncertainty about structure)
    \item \textbf{Level 1}: Edge detectors crystallize (inherited as ``edges exist'')
    \item \textbf{Level 2}: Texture patterns crystallize (inherited as ``textures compose'')
    \item \textbf{Level 3}: Object parts crystallize (inherited as ``parts combine'')
    \item \textbf{Level 4}: Objects crystallize (inherited as ``objects exist'')
\end{itemize}

Each layer doesn't re-learn what the previous layers established. It \emph{inherits} those crystallized beliefs as its prior, then learns its own structure on top.

\subsection{From Linear Chain to Emergent Graph}
\label{subsec:chain-to-graph}

The linear formulation generalizes naturally to networks of interacting
agents. The recursive formulation presented so far is \emph{linear}---a single chain of crystallizing structures. But in practice, at each level, multiple agents observe, interact, and crystallize simultaneously.

\begin{figure}[H]
\centering
\begin{tikzpicture}[scale=0.85]

\begin{scope}[xshift=-4cm]
\node[font=\bfseries\small, text=bedsblue] at (0, 3.5) {LINEAR BEDS};
\node[font=\scriptsize, text=gray] at (0, 3.0) {(simplified)};

\node[draw, circle, fill=lightblue, minimum size=0.8cm] (s0) at (0, 2) {$S_0$};
\node[draw, circle, fill=lightorange, minimum size=0.8cm] (s1) at (0, 0.5) {$S_1$};
\node[draw, circle, fill=lightgreen, minimum size=0.8cm] (s2) at (0, -1) {$S_2$};

\draw[->, thick] (s0) -- (s1);
\draw[->, thick] (s1) -- (s2);

\node[font=\scriptsize, text=gray] at (0, -2) {One structure per level};
\end{scope}

\draw[->, very thick, bedsblue] (-1, 0.5) -- (1, 0.5);
\node[above, font=\scriptsize] at (0, 0.6) {generalize};

\begin{scope}[xshift=4cm]
\node[font=\bfseries\small, text=bedsblue] at (0, 3.5) {BEDS AS GRAPH};
\node[font=\scriptsize, text=gray] at (0, 3.0) {(emergent)};

\node[draw, circle, fill=lightblue, minimum size=0.6cm, font=\tiny] (a0) at (-1.5, 2) {$S_0^a$};
\node[draw, circle, fill=lightblue, minimum size=0.6cm, font=\tiny] (b0) at (0, 2) {$S_0^b$};
\node[draw, circle, fill=lightblue, minimum size=0.6cm, font=\tiny] (c0) at (1.5, 2) {$S_0^c$};

\draw[thick, gray] (a0) -- (b0) node[midway, above, font=\tiny] {$\psi$};
\draw[thick, gray] (b0) -- (c0) node[midway, above, font=\tiny] {$\psi$};

\node[draw, circle, fill=lightorange, minimum size=0.6cm, font=\tiny] (a1) at (-0.75, 0.5) {$S_1^a$};
\node[draw, circle, fill=lightorange, minimum size=0.6cm, font=\tiny] (b1) at (0.75, 0.5) {$S_1^b$};

\draw[thick, gray] (a0) -- (a1);
\draw[thick, gray] (b0) -- (a1);
\draw[thick, gray] (b0) -- (b1);
\draw[thick, gray] (c0) -- (b1);

\draw[very thick, bedsblue] (a1) -- (b1) node[midway, above, font=\tiny] {$\psi$};

\node[draw, circle, fill=lightgreen, minimum size=0.6cm, font=\tiny] (s2g) at (0, -1) {$S_2$};
\draw[thick, gray] (a1) -- (s2g);
\draw[thick, gray] (b1) -- (s2g);

\node[font=\scriptsize, text=gray, text width=3cm, align=center] at (0, -2.2) {N agents per level\\Local interactions};
\end{scope}

\end{tikzpicture}
\caption{\textbf{From linear chain to emergent graph.} The linear BEDS formulation (left) generalizes to a multi-agent graph (right) where multiple agents at each level interact through learned potentials $\psi_{ij}$.}
\label{fig:linear-to-graph}
\end{figure}
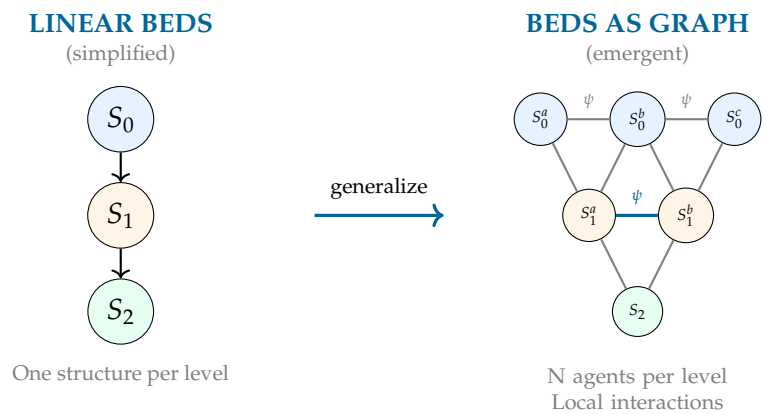

This generalization is natural and necessary. Figure~\ref{fig:linear-to-graph} illustrates the transition: the linear chain (left) assumes each agent interacts only with immediate neighbors, while the multi-agent graph (right) allows arbitrary interaction patterns. The interaction potentials $\psi_{ij}$ are not fixed \emph{a priori}---they emerge from learning, allowing the network to discover which agents should trust which others. In reality:
\begin{itemize}
    \item Multiple sensors observe different aspects of the same phenomenon
    \item Multiple models make predictions about overlapping domains
    \item Multiple agents learn from different data subsets
\end{itemize}

The structure of their interactions---who communicates with whom, and how strongly---\emph{emerges} from learning.

\subsection{Three Levels of Learning}
\label{subsec:three-learning-levels}

Multi-agent BEDS systems learn at three distinct timescales. In a classical Markov Random Field (MRF), the graph structure and interaction potentials are \emph{fixed}. Inference finds the beliefs; learning adjusts parameters. In BEDS, \emph{everything is learned}, operating at three distinct timescales:

\begin{table}[htbp]
\centering
\caption{\textbf{Three levels of learning in BEDS networks.} Each level operates at a different timescale with a distinct mechanism.}
\label{tab:three-levels}
\begin{tabular}{llll}
\toprule
\textbf{Level} & \textbf{What is Learned} & \textbf{Timescale} & \textbf{Mechanism} \\
\midrule
1. Beliefs & $(\mu_i, \tau_i)$ per agent & Fast & Bayesian fusion \\
2. Potentials & $\psi_{ij}$ between agents & Medium & $\Delta\psi_{ij} \propto \text{agreement}(i,j) - \text{baseline}$ \\
3. Topology & Structure of graph $G$ & Slow & Pruning weak connections ($\psi_{ij} \to 0$) \\
\bottomrule
\end{tabular}
\end{table}

\textbf{Level 1 (Beliefs):} Each agent maintains and updates its belief state $(\mu_i, \tau_i)$ through Bayesian fusion. When agent $i$ receives a message from agent $j$, precisions add and means are precision-weighted averaged---exactly the standard BEDS update.

\textbf{Level 2 (Potentials):} The interaction strengths $\psi_{ij}$ are learned from agreement history. When agents $i$ and $j$ consistently agree, their coupling strengthens. When they consistently disagree, it weakens:
\begin{equation}
\Delta\psi_{ij} \propto \mathbb{E}\left[\text{agreement}(i,j)\right] - \text{baseline}
\end{equation}

\textbf{Level 3 (Topology):} Over longer timescales, weak connections are pruned entirely ($\psi_{ij} \to 0$), and the graph structure itself emerges. This is the slowest form of learning---structural crystallization.

\subsection{Energy-Based Formulation: BEDS as Markov Random Field}
\label{subsec:mrf-formulation}

The multi-agent BEDS system admits a natural interpretation as a Markov
Random Field. Consider $N$ agents with belief states $\{S_1, \ldots, S_N\}$ and graph structure $G = (V, E)$.

\begin{proposition}[MRF Structure]
\label{prop:mrf-structure}
The joint distribution over agent states follows the Gibbs form:
\begin{equation}
P(\{S\}) = \frac{1}{Z} \exp\left(-\frac{E(\{S\})}{T}\right)
\label{eq:gibbs}
\end{equation}
where the total energy decomposes into three terms:
\begin{equation}
\boxed{E = \underbrace{\sum_i D_{\KL}(q_i \| p_i^{\text{data}})}_{\text{data fidelity } E_{\text{data}}} + \underbrace{\sum_{(i,j) \in E} \psi_{ij} \cdot \dF^2(q_i, q_j)}_{\text{coherence } E_{\text{interact}}} + \underbrace{\sum_i D_{\KL}(q_i \| \pi_i)}_{\text{prior } E_{\text{prior}}}}
\label{eq:beds-energy}
\end{equation}
\end{proposition}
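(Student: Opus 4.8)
The plan is to treat this as a structural result rather than a deep analytic one: the three-term energy is assembled from the single-agent BEDS theory, the Gibbs form then follows from the maximum-entropy principle (A2), and the Markov property follows from the Hammersley--Clifford theorem. First I would start from the per-agent objective supplied by the Fundamental BEDS Equation~\eqref{eq:beds-fundamental}, namely $\dF^2(q_i,\pi_i)+\lambda\,\mathcal{L}_{\text{data}}$, and rewrite it using the local KL--Fisher correspondence (Proposition~\ref{prop:kl-fisher}): the prior penalty becomes $D_{\KL}(q_i\|\pi_i)$ (up to the factor $2$, to be absorbed into $T$), and the data penalty is the log-likelihood discrepancy $D_{\KL}(q_i\|p_i^{\text{data}})$. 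Then, for $N$ agents coupled along the edges of $G$, I would argue that collective consistency requires penalising disagreement between linked beliefs, and that by Assumption~A1 together with \v{C}encov's theorem (Proposition~\ref{prop:cencov}) the \emph{only} parametrisation-invariant penalty on the pair $(q_i,q_j)$ is a multiple of $\dF^2(q_i,q_j)$; calling that multiple the learned coupling $\psi_{ij}\ge 0$ and summing over $E$ produces the middle term of~\eqref{eq:beds-energy}. Collecting the three contributions gives exactly the boxed energy.

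Next I would pass to the joint configuration $\{S\}=(S_1,\dots,S_N)$, where each $S_i$ is the belief point $q_i\in\bedsM$. By Assumption~A2 applied at this level, the maximum-entropy distribution with a prescribed expected energy $\langle E\rangle$ is the Boltzmann form $P(\{S\})=Z^{-1}\exp(-E(\{S\})/T)$ with $Z=\int_{\bedsM^{\,N}}\exp(-E/T)$---this is~\eqref{eq:gibbs}. For the Markov structure I would observe that every summand of $E$ is supported on a clique of $G$ of size at most two: $D_{\KL}(q_i\|p_i^{\text{data}})$ and $D_{\KL}(q_i\|\pi_i)$ are singleton potentials on $\{i\}$, while $\psi_{ij}\dF^2(q_i,q_j)$ is a pair potential on the edge $\{i,j\}\in E$. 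Hence $P$ factorises as $Z^{-1}\prod_{i\in V}\phi_i(S_i)\prod_{(i,j)\in E}\Psi_{ij}(S_i,S_j)$ with all factors strictly positive, and the Hammersley--Clifford theorem gives that $P$ is a Markov random field with respect to $G$: for any node $i$ and any $j\notin\{i\}\cup\partial i$, $S_i\perp S_j\mid\{S_k:k\in\partial i\}$.

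The step I expect to be the main obstacle is the bookkeeping around the two divergences: the boxed formula writes the data and prior terms in $D_{\KL}$ and the coherence term in $\dF^2$, but Proposition~\ref{prop:kl-fisher} only equates $\tfrac12\dF^2$ with $D_{\KL}$ locally, to $O(\|dp\|^3)$. I would handle this by restricting to the Gaussian and von Mises belief families mandated by A2, where one can either work strictly in the quadratic regime (the cubic remainder vanishing in the quasi-static limit of A3, and the factor $2$ absorbed into $T$) or simply \emph{adopt} the mixed form as the definition of $E$, noting it is legitimate because both $D_{\KL}$ and $\dF^2$ are intrinsic by A1; either way the genuine content of the proposition---the factorisation into node and edge potentials---is exact and parametrisation-independent. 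A secondary point worth stating is finiteness of $Z$: on $\bedsM=\HH^2\times\MvM$ each prior term $D_{\KL}(q_i\|\pi_i)$ and each coherence term diverges as $q_i$ approaches the boundary (precision or coherence tending to $0$ or $\infty$), so $\exp(-E/T)$ is integrable and the Gibbs measure is well-defined.
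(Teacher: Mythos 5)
The paper does not actually prove this proposition: immediately after the statement it is stamped \conjectured, and the epistemic-status table in Section~\ref{subsec:proven-vs-conjectured} lists ``Multi-agent MRF structure'' as conjectured with basis ``formal analogy.'' The paper's entire support for the claim consists of a term-by-term interpretation of $E_{\text{data}}$, $E_{\text{interact}}$, $E_{\text{prior}}$ and a figure; Appendix~\ref{app:multiagent-details} restates the energy without argument. So your proposal supplies an argument where the paper offers none, and the overall architecture you choose --- assemble the node potentials from the single-agent theory, invoke maximum entropy at fixed $\langle E\rangle$ to get the Boltzmann form, and read off the Markov property from the clique factorisation via Hammersley--Clifford --- is a sensible way to upgrade the conjecture to a conditional derivation. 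Your handling of the $D_{\KL}$ versus $\tfrac12\dF^2$ bookkeeping is also more careful than anything in the paper.

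That said, one step of your argument claims more than it can deliver. You assert that by A1 and \v{C}encov's theorem the \emph{only} parametrisation-invariant pair penalty is a multiple of $\dF^2(q_i,q_j)$. \v{C}encov's theorem is a uniqueness statement about Riemannian \emph{metrics} (infinitesimal objects); it says nothing about which global functional of a pair of distributions one must use. Any monotone function of $\dF$, or $D_{\KL}(q_i\|q_j)$ itself, is equally invariant under sufficient statistics --- indeed the boxed energy mixes $D_{\KL}$ for the unary terms with $\dF^2$ for the pairwise term, which already shows invariance alone does not single out one form. The choice of $\psi_{ij}\dF^2$ is a modeling decision, consistent with the paper's own decision to leave the proposition as a conjecture rather than a theorem. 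Similarly, the maximum-entropy step presupposes that expected energy is the only constraint on the joint law, and your integrability claim for $Z$ on $\HH^2\times\MvM$ needs the energy to diverge fast enough to beat the hyperbolic volume growth, which you assert but do not check. None of this breaks the structural content (the factorisation into node and edge potentials), but you should present the derivation as conditional on these additional modeling choices rather than as forced by A1--A3.
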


\conjectured

Each term has a clear interpretation:

\begin{itemize}
    \item $\mathbf{E_{\text{data}}}$ (Data Fidelity): How well does each agent's belief $q_i$ match its local observations? This term penalizes beliefs that diverge from the data.

    \item $\mathbf{E_{\text{interact}}}$ (Coherence): How consistent are neighboring beliefs? The Fisher--Rao distance $\dF^2(q_i, q_j)$ measures the geodesic distance between belief distributions. Strong coupling ($\psi_{ij}$ large) heavily penalizes disagreement.

    \item $\mathbf{E_{\text{prior}}}$ (Prior): How far has each agent drifted from its inherited prior $\pi_i$? This term anchors beliefs to crystallized structure from lower levels.
\end{itemize}

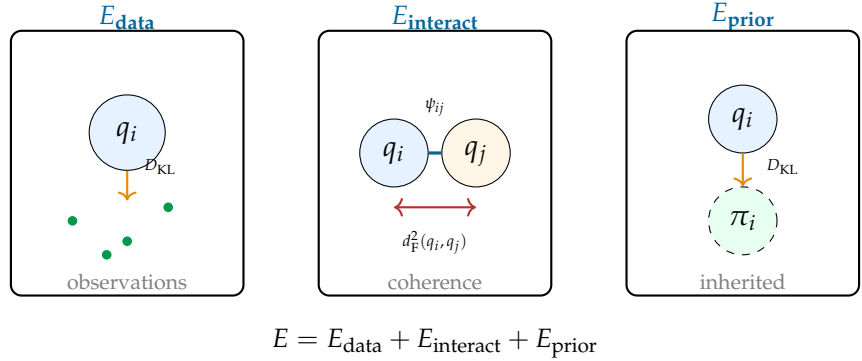
\begin{figure}[H]
\centering
\begin{tikzpicture}[scale=0.9]

\begin{scope}[xshift=-4.5cm]
\node[font=\bfseries\small, text=bedsblue] at (0, 2.5) {$E_{\text{data}}$};
\draw[thick, rounded corners] (-1.7, -1.6) rectangle (1.7, 2.3);

\node[draw, circle, fill=lightblue, minimum size=1cm] (agent) at (0, 0.8) {$q_i$};

\foreach \x/\y in {-0.8/-0.5, 0/-0.8, 0.6/-0.3, -0.3/-1.0} {
    \fill[bedsgreen] (\x, \y) circle (2pt);
}
\node[font=\scriptsize, text=gray] at (0, -1.4) {observations};

\draw[->, thick, bedsorange] (agent) -- (0, -0.2);
\node[right, font=\tiny] at (0.1, 0.3) {$D_{\KL}$};
\end{scope}

\begin{scope}[xshift=0cm]
\node[font=\bfseries\small, text=bedsblue] at (0, 2.5) {$E_{\text{interact}}$};
\draw[thick, rounded corners] (-1.7, -1.6) rectangle (1.7, 2.3);

\node[draw, circle, fill=lightblue, minimum size=0.9cm] (ai) at (-0.6, 0.5) {$q_i$};
\node[draw, circle, fill=lightorange, minimum size=0.9cm] (aj) at (0.6, 0.5) {$q_j$};

\draw[very thick, bedsblue] (ai) -- (aj);
\node[above, font=\tiny] at (0, 0.9) {$\psi_{ij}$};

\draw[<->, thick, bedsred] (-0.6, -0.3) -- (0.6, -0.3);
\node[below, font=\tiny] at (0, -0.5) {$\dF^2(q_i, q_j)$};

\node[font=\scriptsize, text=gray] at (0, -1.4) {coherence};
\end{scope}

\begin{scope}[xshift=4.5cm]
\node[font=\bfseries\small, text=bedsblue] at (0, 2.5) {$E_{\text{prior}}$};
\draw[thick, rounded corners] (-1.7, -1.6) rectangle (1.7, 2.3);

\node[draw, circle, fill=lightblue, minimum size=0.9cm] (qi) at (0, 1.0) {$q_i$};

\node[draw, circle, fill=lightgreen, minimum size=0.9cm, dashed] (pi) at (0, -0.5) {$\pi_i$};

\draw[->, thick, bedsorange] (qi) -- (pi);
\node[right, font=\tiny] at (0.2, 0.3) {$D_{\KL}$};

\node[font=\scriptsize, text=gray] at (0, -1.4) {inherited};
\end{scope}

\node[font=\small] at (0, -2.3) {$E = E_{\text{data}} + E_{\text{interact}} + E_{\text{prior}}$};

\end{tikzpicture}
\caption{\textbf{BEDS energy decomposition.} The total energy consists of three terms: data fidelity (fit to observations), coherence (consistency with neighbors), and prior (fidelity to inherited structure).}
\label{fig:energy-decomposition}
\end{figure}

\subsection{Belief Propagation and the BEDS Protocol}
\label{subsec:bp-beds}

Inference in graphical models connects directly to BEDS update rules. Figure~\ref{fig:energy-decomposition} illustrates the three-term energy structure: each agent $i$ balances fidelity to data ($E_{\text{data}}$), consistency with neighbors via interaction potentials ($E_{\text{interact}}$), and adherence to inherited priors ($E_{\text{prior}}$). Minimizing total energy while respecting the dissipation constraint yields the BEDS update equations.
In graphical models, inference is performed via \emph{message passing}. The canonical algorithm is Belief Propagation (BP):
\begin{equation}
m_{i \to j}(x_j) = \sum_{x_i} \psi_{ij}(x_i, x_j) \cdot b_i(x_i) \cdot \prod_{k \in N(i) \setminus j} m_{k \to i}(x_i)
\label{eq:bp-general}
\end{equation}

For Gaussian beliefs, this dramatically simplifies. The sum-product rule becomes precision-weighted averaging:

\begin{proposition}[Gaussian Belief Propagation]
\label{prop:gaussian-bp}
For Gaussian beliefs $q_i = \Normal(\mu_i, \tau_i^{-1})$ and $q_j = \Normal(\mu_j, \tau_j^{-1})$, belief propagation implements:
\begin{align}
\tau_j^{\text{new}} &= \tau_j + \tau_i \\
\mu_j^{\text{new}} &= \frac{\tau_j \mu_j + \tau_i \mu_i}{\tau_j^{\text{new}}}
\end{align}
This is exactly the BEDS fusion rule.
\end{proposition}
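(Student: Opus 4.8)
The plan is to substitute the Gaussian forms into the general sum--product update~\eqref{eq:bp-general}, exploit closure of the Gaussian family under the only two operations belief propagation performs on it---pointwise multiplication and Gaussian marginalization---and then read off the natural parameters of the resulting message by completing the square. First I would note that for continuous state variables the outer sum in~\eqref{eq:bp-general} is an integral, and collect everything the receiving agent $j$ already knows---its inherited prior $\pi_j$ together with all incoming messages $m_{k\to j}$ for $k \ne i$---into a single Gaussian factor $b_j = \Normal(\mu_j,\tau_j^{-1})$. This is legitimate precisely because a product of Gaussian densities is again proportional to a Gaussian density (under Assumption A2 the belief states are Gaussian to begin with), so no normal form outside the family is ever produced.

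Second I would handle the pairwise potential $\psi_{ij}$. The coherence term in the BEDS energy~\eqref{eq:beds-energy} couples $q_i$ and $q_j$ through $\dF^2$, which for fixed precisions reduces to a quadratic form $\propto \tau(x_i-x_j)^2$; the corresponding soft potential is $\psi_{ij}(x_i,x_j) \propto \exp\!\big(-\tfrac{w_{ij}}{2}(x_i-x_j)^2\big)$. The $i\to j$ message is then the Gaussian convolution $m_{i\to j}(x_j) \propto \int \psi_{ij}(x_i,x_j)\,\Normal(x_i;\mu_i,\tau_i^{-1})\,dx_i$, which is again Gaussian with mean $\mu_i$ and variance $\tau_i^{-1}+w_{ij}^{-1}$. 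In the strong-coupling regime $w_{ij}\to\infty$ the potential becomes a soft-equality constraint, $\psi_{ij}\to\delta(x_i-x_j)$, and the message collapses to $\Normal(x_j;\mu_i,\tau_i^{-1})$---agent $i$ simply relays its belief. This is the regime in which the clean rule of the proposition holds, and I would state that restriction explicitly.

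Third comes the computation that does the actual work: multiplying the incoming Gaussian into $b_j$,
\[
b_j^{\text{new}}(x) \;\propto\; \Normal(x;\mu_j,\tau_j^{-1})\,\Normal(x;\mu_i,\tau_i^{-1})
\;\propto\; \exp\!\Big(-\tfrac{\tau_i+\tau_j}{2}\,x^2 + (\tau_i\mu_i+\tau_j\mu_j)\,x\Big),
\]
and completing the square reads off $\tau_j^{\text{new}} = \tau_j + \tau_i$ and $\mu_j^{\text{new}} = (\tau_j\mu_j + \tau_i\mu_i)/\tau_j^{\text{new}}$. Finally I would observe that this coincides term for term with the belief-fusion rule used at Level~1 of Table~\ref{tab:three-levels}---precisions add, means are precision-weighted averages---which is itself just Gaussian conjugacy under Assumption~A2; that identification completes the argument.

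The main obstacle is not the algebra, which is routine, but pinning down \emph{which} instance of the general recursion the proposition refers to so that no residual factors survive: one must (i) absorb the prior and all other incoming messages into $b_i$ and $b_j$, and (ii) commit to the soft-equality reading of $\psi_{ij}$, since under any finite coupling the relayed precision is $(\tau_i^{-1}+w_{ij}^{-1})^{-1}$ rather than $\tau_i$ and the stated identity holds only in the limit. A secondary point to check is positivity, $\tau_i,\tau_j>0$, which guarantees the convolution and the product are well defined and normalizable; and for loopy graphs I would add the standard caveat that this is the Gaussian-BP fixed-point update rather than an exact marginal, though that does not affect the algebraic identity being claimed.
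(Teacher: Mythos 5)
Your derivation is correct, and it is worth noting that the paper itself offers no proof of this proposition at all --- it is asserted bare, and the surrounding ``BP--BEDS Equivalence'' box is explicitly tagged as conjectured. What you have written is the standard Gaussian belief-propagation computation (product of Gaussians in natural parameters, completing the square, precisions add, means precision-weight), and it is the right way to establish the claim. The most valuable part of your argument is precisely the step the paper silently skips: the message $m_{i\to j}$ carries precision $\tau_i$ only if the pairwise potential $\psi_{ij}$ is taken in the hard-constraint limit $\psi_{ij}\to\delta(x_i-x_j)$; for any finite Gaussian coupling $w_{ij}$ the relayed precision is $(\tau_i^{-1}+w_{ij}^{-1})^{-1}<\tau_i$, so the proposition as stated is the limiting case, not the generic one. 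Your other qualifications --- absorbing the prior and the remaining incoming messages into $b_j$ so that no residual factors survive, positivity of the precisions, and the loopy-graph caveat that this is a fixed-point update rather than an exact marginal --- are all appropriate and strengthen a statement the paper leaves under-specified. In short: correct, more careful than the source, and it makes explicit an assumption the paper needs but does not state.
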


\begin{keyresult}[BP-BEDS Equivalence]
The BEDS P2P protocol is Belief Propagation on a learned graph with Gaussian beliefs. Standard results from graphical models (convergence conditions, loopy BP analysis) apply directly.
\end{keyresult}

\conjectured

This equivalence has practical implications: decades of research on efficient message-passing inference transfer directly to BEDS networks.

\subsection{Emergent Properties}
\label{subsec:emergent-properties}

When interaction potentials are learned rather than fixed, several
structural properties emerge spontaneously. When potentials are learned (rather than fixed), four remarkable properties emerge:

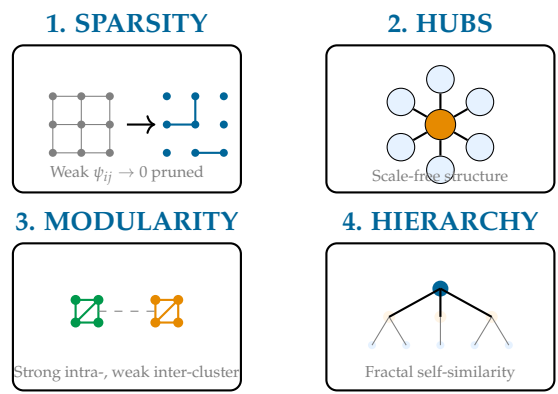
\begin{figure}[H]
\centering
\begin{tikzpicture}[scale=0.75]

\begin{scope}[xshift=-5.5cm, yshift=2cm]
\node[font=\bfseries\small, text=bedsblue] at (0, 1.8) {1. SPARSITY};
\draw[thick, rounded corners] (-2, -1.2) rectangle (2, 1.4);

\begin{scope}[xshift=-0.8cm]
\foreach \i in {0,0.5,1} {
    \foreach \j in {0,0.5,1} {
        \fill[gray] (\i-0.5, \j-0.5) circle (2pt);
    }
}
\draw[gray, thin] (-0.5,-0.5) grid[step=0.5] (0.5,0.5);
\end{scope}

\draw[->, thick] (0, 0) -- (0.5, 0);

\begin{scope}[xshift=1.2cm]
\foreach \i in {0,0.5,1} {
    \foreach \j in {0,0.5,1} {
        \fill[bedsblue] (\i-0.5, \j-0.5) circle (2pt);
    }
}
\draw[bedsblue, thick] (-0.5,0) -- (0,0) -- (0,0.5);
\draw[bedsblue, thick] (0,-0.5) -- (0.5,-0.5);
\end{scope}

\node[font=\tiny, text=gray] at (0, -0.9) {Weak $\psi_{ij} \to 0$ pruned};
\end{scope}

\begin{scope}[xshift=0cm, yshift=2cm]
\node[font=\bfseries\small, text=bedsblue] at (0, 1.8) {2. HUBS};
\draw[thick, rounded corners] (-2, -1.2) rectangle (2, 1.4);

\node[draw, circle, fill=bedsorange, minimum size=0.4cm] (hub) at (0, 0) {};
\foreach \angle in {30, 90, 150, 210, 270, 330} {
    \node[draw, circle, fill=lightblue, minimum size=0.25cm] (n\angle) at (\angle:0.8) {};
    \draw[thick] (hub) -- (n\angle);
}

\node[font=\tiny, text=gray] at (0, -0.9) {Scale-free structure};
\end{scope}

\begin{scope}[xshift=-5.5cm, yshift=-1.5cm]
\node[font=\bfseries\small, text=bedsblue] at (0, 1.8) {3. MODULARITY};
\draw[thick, rounded corners] (-2, -1.2) rectangle (2, 1.4);

\begin{scope}[xshift=-0.9cm]
\foreach \i in {0,0.4} {
    \foreach \j in {0,0.4} {
        \fill[bedsgreen] (\i, \j) circle (2.5pt);
    }
}
\draw[bedsgreen, thick] (0,0) -- (0.4,0) -- (0.4,0.4) -- (0,0.4) -- (0,0);
\draw[bedsgreen, thick] (0,0) -- (0.4,0.4);
\end{scope}

\begin{scope}[xshift=0.5cm]
\foreach \i in {0,0.4} {
    \foreach \j in {0,0.4} {
        \fill[bedsorange] (\i, \j) circle (2.5pt);
    }
}
\draw[bedsorange, thick] (0,0) -- (0.4,0) -- (0.4,0.4) -- (0,0.4) -- (0,0);
\draw[bedsorange, thick] (0,0) -- (0.4,0.4);
\end{scope}

\draw[gray, dashed, thin] (-0.5, 0.2) -- (0.5, 0.2);

\node[font=\tiny, text=gray] at (0, -0.9) {Strong intra-, weak inter-cluster};
\end{scope}

\begin{scope}[xshift=0cm, yshift=-1.5cm]
\node[font=\bfseries\small, text=bedsblue] at (0, 1.8) {4. HIERARCHY};
\draw[thick, rounded corners] (-2, -1.2) rectangle (2, 1.4);

\foreach \x in {-1.2, -0.6, 0, 0.6, 1.2} {
    \fill[lightblue] (\x, -0.4) circle (2pt);
}
\foreach \x in {-0.9, 0, 0.9} {
    \fill[lightorange] (\x, 0.1) circle (3pt);
}
\fill[bedsblue] (0, 0.6) circle (4pt);

\draw[gray, thin] (-1.2,-0.4) -- (-0.9,0.1);
\draw[gray, thin] (-0.6,-0.4) -- (-0.9,0.1);
\draw[gray, thin] (0,-0.4) -- (0,0.1);
\draw[gray, thin] (0.6,-0.4) -- (0.9,0.1);
\draw[gray, thin] (1.2,-0.4) -- (0.9,0.1);

\draw[thick] (-0.9,0.1) -- (0,0.6);
\draw[thick] (0,0.1) -- (0,0.6);
\draw[thick] (0.9,0.1) -- (0,0.6);

\node[font=\tiny, text=gray] at (0, -0.9) {Fractal self-similarity};
\end{scope}

\end{tikzpicture}
\caption{\textbf{Emergent properties of learned BEDS networks.} When potentials are learned rather than fixed, the network spontaneously develops: (1) sparsity through pruning, (2) hub structure (scale-free), (3) modular clustering, and (4) hierarchical organization.}
\label{fig:emergent-properties}
\end{figure}

These properties are not designed in---they \emph{emerge} from the learning dynamics. Figure~\ref{fig:emergent-properties} illustrates the four emergent structures: (1) sparsity emerges because weak connections ($\psi_{ij} \approx 0$) waste energy without improving coordination; (2) hubs emerge because reliable agents attract trust from many others; (3) modularity emerges because tightly-coupled subgroups minimize internal communication cost; (4) hierarchy emerges recursively within modules. These patterns mirror biological neural networks---not by design, but by thermodynamic necessity:
\begin{enumerate}
    \item \textbf{Sparsity}: Weak connections ($\psi_{ij} \to 0$) are pruned. The graph becomes sparse, reducing communication cost.

    \item \textbf{Hubs}: Some agents become highly connected because they are reliable---others learn to trust them. This produces scale-free structure.

    \item \textbf{Modularity}: Groups of agents that consistently agree form tightly connected clusters. Inter-cluster connections remain weak.

    \item \textbf{Hierarchy}: Within clusters, sub-clusters form. The structure is fractal---self-similar at multiple scales.
\end{enumerate}

\subsection{BEDS versus Classical Frameworks}
\label{subsec:beds-vs-classical}

BEDS integrates and extends several classical frameworks from machine
learning and statistical physics:

\begin{table}[htbp]
\centering
\caption{\textbf{BEDS in relation to classical frameworks.} Each framework contributes essential ingredients; BEDS unifies them.}
\label{tab:frameworks}
\begin{tabular}{lll}
\toprule
\textbf{Framework} & \textbf{What It Provides} & \textbf{BEDS Adds} \\
\midrule
MRF & Local interactions, Gibbs distribution & Learned potentials, emergent topology \\
EBM (LeCun) & Energy minimization, contrastive & Hierarchical structure, dissipation \\
JEPA & Joint embedding, latent prediction & Distributed agents, crystallization \\
FEP (Friston) & Variational inference, active & Recursive priors, P2P implementation \\
Boltzmann Machine & Learned potentials, stochastic & Open system, entropy export \\
\bottomrule
\end{tabular}
\end{table}

A key distinction from classical Energy-Based Models:

\begin{keyresult}[EBM as Non-Dissipative BEDS]
Energy-Based Models are BEDS without dissipation ($\gamma = 0$). They crystallize immediately and permanently, which explains their failure under distribution shift---they cannot ``melt'' and re-adapt.
\end{keyresult}

This insight is practical: EBMs work when the distribution is stationary (crystallization is appropriate). They fail when the distribution drifts (dissipation is required for re-adaptation).

\subsection{Connection to JEPA and LeJEPA}
\label{subsec:jepa-connection}

The Joint Embedding Predictive Architecture (JEPA) and its theoretical foundation LeJEPA (Balestriero \& LeCun, 2025) have a precise interpretation in BEDS:

\begin{table}[htbp]
\centering
\caption{\textbf{LeJEPA $\to$ BEDS mapping.} Each LeJEPA component corresponds to a BEDS mechanism.}
\label{tab:lejepa-mapping}
\begin{tabular}{lll}
\toprule
\textbf{LeJEPA} & \textbf{BEDS} & \textbf{Role} \\
\midrule
Embedding $z$ & $\mu$ & Latent position \\
Variance of $z$ & $1/\tau$ & Uncertainty \\
Student vs. Teacher & $\phi$ (phase difference) & Synchronization state \\
EMA coefficient $\alpha$ & $\kappa$ controller & Coherence maintenance \\
SIGReg regularization & $\gamma_\tau$ & Precision dissipation \\
Prediction loss & $I_{\text{obs}}$ & Information injection \\
\bottomrule
\end{tabular}
\end{table}

\begin{keyresult}[JEPA-BEDS Interpretation]
JEPA prescribes \emph{what} to build (joint embeddings, predictive architecture). BEDS explains \emph{why} it works:
\begin{itemize}
    \item The isotropic Gaussian $\Normal(0, I)$ target of LeJEPA is the \emph{maximum entropy distribution} under fixed variance---the dissipative equilibrium of BEDS.
    \item SIGReg implements dissipation ($\gamma_\tau$), preventing precision explosion.
    \item EMA maintains coherence ($\kappa$), preventing student-teacher decoupling.
\end{itemize}
\end{keyresult}

This interpretation explains why LeJEPA is sensitive to its hyperparameters: they control the $(\tau, \kappa)$ balance. Removing any component disrupts the equilibrium, causing collapse.

\subsubsection{Gaussian Representations as Native Belief States}

The BEDS framework suggests that any representation naturally parameterized by
$(\mu, \tau)$---position and precision---constitutes a native belief-space encoding.
Gaussian splatting provides a concrete instantiation: a scene or image represented as
\begin{equation}
\mathcal{S} = \left\{ \mathcal{N}(\mu_i, \Sigma_i) \right\}_{i=1}^N
\label{eq:gaussian-splat-set}
\end{equation}
where each Gaussian primitive carries explicit position $\mu_i$ and precision
$\tau_i = \Sigma_i^{-1}$.

\paragraph{Gaussian Primitives as Belief States}
Under Assumption A2 (maximum-entropy distributions under constraints), each Gaussian
splat \emph{already is} a belief state---not a learned approximation to one. This yields:

\begin{proposition}[Gaussian Splatting as BEDS Representation]
A Gaussian splatting representation implements BEDS coordinates natively:
\begin{itemize}
    \item \textbf{Position $\mu$:} Gaussian mean encodes spatial belief
    \item \textbf{Precision $\tau$:} Inverse covariance $\Sigma^{-1}$ encodes certainty
    \item \textbf{Opacity/features:} Additional attributes can encode phase $\phi$
          or coherence $\kappa$ across neighboring primitives
\end{itemize}
\end{proposition}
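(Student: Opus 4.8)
The plan is to treat this proposition as a structural identification rather than an estimate: I want to show, coordinate by coordinate, that a Gaussian splatting primitive already \emph{is} the defining data of a BEDS state in the sense of Definition~\ref{def:beds-state} under Assumption A2, with the Fisher--Rao geometry of Theorem~\ref{thm:conditional-optimality} transferring verbatim and with the auxiliary per-primitive attributes supplying the temporal coordinates.

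First I would invoke Assumption A2 directly: a belief constrained only by a known mean and covariance is, by the maximum-entropy principle, the Gaussian realizing those moments, so a splat $\mathcal{N}(\mu_i,\Sigma_i)$ is a belief state in the A2 sense without any intervening learned embedding. This reads the BEDS position off as $\mu \mapsto \mu_i$ and the BEDS precision off as $\tau \mapsto \Sigma_i^{-1}$, establishing the first two bullets by inspection. Next I would transport the geometry: by Proposition~\ref{prop:beds-manifold} the intrinsic metric on Gaussian beliefs is Fisher--Rao, which on the $(\mu,\tau)$ chart is $\tau\,d\mu^2 + d\tau^2/(2\tau^2)$, i.e.\ $\HH^2$. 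For isotropic splats $\Sigma_i = \sigma_i^2 I$ this chart applies with $\tau_i = 1/\sigma_i^2$, so distances between splats are literally BEDS distances and geodesic interpolation of splats is the thermodynamically optimal path of Theorem~\ref{thm:conditional-optimality}(3); for anisotropic $\Sigma_i$ I would diagonalize and assign each principal axis its own precision, using that the multivariate Gaussian family remains a symmetric space under Fisher--Rao, so the correspondence extends into a product of hyperbolic factors rather than a single $\HH^2$.

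For the temporal coordinates I would use the attributes each primitive already carries---opacity, spherical-harmonic colour coefficients, and in dynamic (4D) Gaussian splatting an explicit time parameter: a periodic attribute is read as a von Mises phase $\phi_i$, and coherence $\kappa_i$ is defined from the concentration of the phases of primitive $i$'s spatial neighbours, so that co-moving clusters have large $\kappa$ while incoherent regions collapse to $\kappa \to 0$. This supplies the $\MvM$ factor, and assembling the three steps maps the splat set $\mathcal{S} = \{\mathcal{N}(\mu_i,\Sigma_i)\}$ into a product of BEDS states on $\HH^2 \times \MvM$ with no learned latent space interposed, which is exactly the assertion.

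The hard part will be the temporal half of the claim: a bare 3D splat has no intrinsic periodic coordinate, so identifying $(\phi,\kappa)$ with opacity or feature channels is a modelling choice that only becomes forced once one works with time-varying primitives or agrees in advance to designate a feature channel as phase---this step is interpretive, not a theorem, and I would say so explicitly. A secondary, purely bookkeeping obstacle is that Definition~\ref{def:beds-state} uses a scalar precision whereas splatting is genuinely anisotropic, so the identification is exact for isotropic splats and requires the product-of-$\HH^2$ generalization above in the general case. I would flag both caveats in the statement rather than conceal them: Steps~1--2 hold as stated, while the $(\phi,\kappa)$ assignment is a native encoding option rather than a consequence of A1--A3.
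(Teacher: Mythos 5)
Your proposal is correct and follows essentially the same route as the paper, which offers no formal proof beyond the observation that under Assumption A2 a Gaussian splat \emph{already is} a maximum-entropy belief state, so that $\mu_i$ and $\Sigma_i^{-1}$ are read off as position and precision by inspection. Your two caveats---that the anisotropic covariance requires a product-of-$\HH^2$ generalization of the scalar-precision chart in Definition~\ref{def:beds-state}, and that the $(\phi,\kappa)$ assignment to opacity or feature channels is an interpretive modelling choice rather than a consequence of A1--A3---are both accurate and go beyond what the paper makes explicit, which hedges only with the phrase ``can encode.''
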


\paragraph{Training as Quasi-Static Trajectory}
Standard Gaussian splatting optimization---slow adjustment of $\mu$ and $\Sigma$
under reconstruction losses---approximates quasi-static trajectories in Fisher--Rao
geometry. Common heuristics map directly to BEDS dissipation mechanisms:

\begin{table}[h]
\centering
\begin{tabular}{ll}
\toprule
\textbf{Gaussian Splatting} & \textbf{BEDS Interpretation} \\
\midrule
Gaussian mean $\mu_i$ & Position (spatial belief) \\
Covariance $\Sigma_i$ & $1/\tau$ (inverse precision) \\
Opacity decay & Entropy export (dissipation) \\
Splat pruning & Structural forgetting \\
Covariance regularization & Precision control ($\gamma_\tau$) \\
Limited splat count & Bounded state viability \\
\bottomrule
\end{tabular}
\caption{Correspondence between Gaussian splatting mechanisms and BEDS parameters.}
\label{tab:gs-beds}
\end{table}

\paragraph{Self-Supervised Learning on Gaussian Primitives}
This observation suggests that JEPA-style masked prediction can operate directly
on Gaussian representations:
\begin{enumerate}
    \item Represent input as a set of Gaussians $\mathcal{S}$
    \item Mask a subset $M \subset \mathcal{S}$
    \item Predict masked parameters from visible primitives
    \item Minimize Fisher--Rao distance:
\end{enumerate}
\begin{equation}
\mathcal{L} = \sum_{i \in M} d^2_{\text{FR}}\left(
    \mathcal{N}(\mu_i, \Sigma_i),\,
    \mathcal{N}(\hat{\mu}_i, \hat{\Sigma}_i)
\right)
\label{eq:gs-jepa-loss}
\end{equation}

\begin{remark}[Independent Validation: Wasserstein-Fisher-Rao Gradient Flows]
\label{rem:wfr-validation}
Recent independent work by Daniels and Rigollet~\cite{daniels_rigollet_2025} provides
strong mathematical support for our Prediction P-GS (Section~\ref{subsec:prediction-gs}).
They prove that:
\begin{enumerate}
    \item Gaussian splats naturally live on the \emph{Bures-Wasserstein manifold}
    $\mathsf{BW}_\rho(\mathbb{R}^d)$, a geodesically convex subset of the
    2-Wasserstein space $\mathcal{W}_2(\mathbb{R}^d)$;
    \item Standard Gaussian Splatting optimization can be recovered as
    \emph{Wasserstein-Fisher-Rao gradient descent}, with explicit gradient formulas
    for both the Wasserstein and Fisher-Rao components;
    \item Training heuristics---including splat pruning, position noising, and
    opacity decay---admit principled interpretations as birth-death dynamics
    and entropic regularization within the WFR geometry.
\end{enumerate}
This confirms that Gaussian Splatting naturally approximates thermodynamically
optimal trajectories in precisely the Fisher--Rao geometry predicted by the
BEDS framework under Assumptions A1--A3.
\end{remark}

This is JEPA \emph{directly in belief space}, without requiring an encoder to first
learn a probabilistic latent representation. The geometry is explicit rather than emergent.

\subsection{Crystallization Thresholds and Bounded Energy}
\label{subsec:crystallization-bounded}

The multi-agent setting introduces threshold phenomena in crystallization.
Crystallization in the multi-agent setting occurs in stages:

\begin{definition}[Crystallization Thresholds]
\label{def:crystal-thresholds}
\begin{itemize}
    \item \textbf{Position crystallization}: $\tau > \tau_{\text{crit}}$ implies $\mu$ frozen
    \item \textbf{Phase crystallization}: $\kappa > \kappa_{\text{crit}}$ implies $\phi$ frozen
    \item \textbf{Complete crystallization}: Both $\tau$ and $\kappa$ exceed their thresholds
\end{itemize}
\end{definition}

A key theoretical result bounds the total energy of a hierarchical BEDS system:

\begin{theorem}[Bounded Total Energy]
\label{thm:bounded-energy-main}
For a hierarchy with geometrically decreasing dissipation $\gamma_n = \gamma_0 r^n$ where $0 < r < 1$:
\begin{equation}
\boxed{E_{\text{total}} < \frac{E_0}{1-r}}
\label{eq:bounded-energy}
\end{equation}
\end{theorem}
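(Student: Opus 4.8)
The plan is to reduce the statement to the elementary fact that a geometric series with ratio $r \in (0,1)$ converges, the genuine work being to show that the per-level maintenance energy $E_n$ inherits the geometric decay of the dissipation rates $\gamma_n$. First I would write the total energy of the hierarchy as a sum of steady-state maintenance energies, $E_{\text{total}} = \sum_{n\geq 0} E_n$, where $E_n$ is the power (equivalently, the energy per characteristic relaxation time) that level $n$ must supply to hold its belief state $(\mu_n, \tau_n, \phi_n, \kappa_n)$ against the dissipative drift of Definition~\ref{def:dissipation}. Because the levels are coupled only through the recursive-prior relation of Principle~\ref{principle:recursive} — each posterior becomes the next prior, not an ongoing source of energy exchange — these maintenance costs are additive across levels, so no cross terms appear in $E_{\text{total}}$.

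Second I would invoke the Energy--Precision Bound (Corollary~\ref{cor:energy-precision}) at each level separately: maintaining precision $\tau_n$ against dissipation rate $\gamma_n$ costs at least $\tfrac{1}{2}\gamma_n \kB T\,\tau_n$, and under Assumption A3 (quasi-static operation) this bound is attained, so $E_n = \tfrac{1}{2}\gamma_n \kB T\,\tau_n^{\mathrm{eff}}$, where $\tau_n^{\mathrm{eff}}$ is the precision of the \emph{uncrystallized} degrees of freedom (coordinates past their crystallization threshold in Definition~\ref{def:crystal-thresholds} are frozen and require no maintenance power, hence drop out of the budget). Setting $E_0 := \tfrac{1}{2}\gamma_0 \kB T\,\tau_0^{\mathrm{eff}}$ and using the hypothesis $\gamma_n = \gamma_0 r^n$ gives $E_n = E_0\, r^n \cdot \bigl(\tau_n^{\mathrm{eff}}/\tau_0^{\mathrm{eff}}\bigr)$. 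The lemma I must establish is that this ratio stays bounded — indeed $\tau_n^{\mathrm{eff}} \leq \tau_0^{\mathrm{eff}}$ — so that $E_n \leq E_0 r^n$. Granting this, the conclusion is immediate:
\[
E_{\text{total}} \;=\; \sum_{n\geq 0} E_n \;\leq\; E_0 \sum_{n\geq 0} r^n \;=\; \frac{E_0}{1-r},
\]
with strict inequality whenever the hierarchy has finite depth $N$ (the partial sums $E_0\frac{1-r^{N+1}}{1-r}$ all lie strictly below $E_0/(1-r)$) or whenever at least one level has strictly smaller effective precision than level $0$ — which covers every case of practical interest.

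The hard part will be the bounded-precision lemma $\tau_n^{\mathrm{eff}} \leq \tau_0^{\mathrm{eff}}$. The naive reading of recursive crystallization is that higher levels carry \emph{higher} confidence, which at first sight would let $\tau_n$ grow with $n$ and threaten convergence. The resolution I would pursue is that once a coordinate crystallizes ($\tau > \tau_{\mathrm{crit}}$, $\mu$ frozen) it contributes nothing further to the maintenance budget: crystallized structure is inherited as a prior, not re-inferred, so only the \emph{active} (still-dissipating) precision at each level enters $E_n$, and this active precision is dominated by that of level $0$, which processes the rawest, highest-variance data. Making this precise requires care about what ``maintenance'' means for an already-frozen coordinate, and is exactly the point where the quasi-static idealization (A3) and the recursive structure (Principle~\ref{principle:recursive}) must be combined; everything downstream is the one-line geometric-series estimate above. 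A secondary subtlety is that the same decomposition should be carried out for the coherence channel $\kappa_n$ (with rate $\gamma_\kappa$), but since that channel contributes an analogous geometrically decaying term, it only rescales $E_0$ and does not affect the bound.
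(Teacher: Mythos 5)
Your skeleton is the same as the paper's---decompose $E_{\text{total}}$ as a sum over levels, show $E_n$ decays like $r^n$, and sum the geometric series---but you are considerably more careful about the one step that carries all the weight. The paper's proof simply asserts that ``the maintenance energy required at level $n$ is proportional to $\gamma_n$'' with a level-independent constant, i.e.\ $E_n = E_0 r^n$, and concludes. You correctly observe, via Corollary~\ref{cor:energy-precision}, that the proportionality constant is $\tfrac{1}{2}\kB T\,\tau_n^{\mathrm{eff}}$, so the geometric decay of $E_n$ requires the additional lemma $\tau_n^{\mathrm{eff}} \leq \tau_0^{\mathrm{eff}}$ (or at least boundedness of the ratio), which the paper silently assumes. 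You flag this as the hard part and sketch a plausible resolution through crystallization thresholds, but you do not prove it---so your proposal is a plan with an acknowledged open lemma rather than a complete proof. That said, it is no less complete than the paper's own argument, which hides the same gap inside the word ``proportional.'' You also handle one point better than the paper: the boxed statement claims the strict inequality $E_{\text{total}} < E_0/(1-r)$, yet the paper's proof derives exact equality $E_{\text{total}} = \sum E_0 r^n = E_0/(1-r)$ for an infinite hierarchy, which contradicts the claim as stated; your observation that strictness follows from finite depth or from $\tau_n^{\mathrm{eff}} < \tau_0^{\mathrm{eff}}$ at some level is the correct repair. In short: same route, but your version makes explicit both the missing precision-boundedness lemma and the strictness issue, which is exactly where any rigorous completion of this theorem would have to do its work.
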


\proven

\begin{proof}
At level $n$, the dissipation rate is $\gamma_n = \gamma_0 r^n$. The maintenance energy required at level $n$ is proportional to $\gamma_n$:
\[
E_n = E_0 \cdot r^n
\]
The total energy is the geometric series:
\[
E_{\text{total}} = \sum_{n=0}^{\infty} E_n = E_0 \sum_{n=0}^{\infty} r^n = \frac{E_0}{1-r}
\]
Since $0 < r < 1$, this sum converges.
\end{proof}

\textbf{Interpretation}: Abstract knowledge (deep layers, crystallized structure) costs less to maintain than concrete knowledge (early layers, raw features). This explains why pretrained representations transfer efficiently---the crystallized knowledge requires little ongoing maintenance.

\subsection{Multi-Agent Applications}
\label{subsec:multi-agent-apps}

The multi-agent formulation applies directly to several practical scenarios:

\begin{itemize}
    \item \textbf{Federated Learning}: Each client is a BEDS agent. They share belief updates (not raw data), and the global model emerges from distributed crystallization.

    \item \textbf{Ensemble Methods}: Each model is a BEDS agent. The ensemble prediction is the crystallized consensus, weighted by precision.

    \item \textbf{Distributed Optimization}: Each worker is a BEDS agent. Potentials encode trust in each worker's gradients, learned from agreement history.

    \item \textbf{Multi-Sensor Fusion}: Each sensor is a BEDS agent. Sensor fusion follows the standard BEDS protocol---precision-weighted averaging with learned trust weights.
\end{itemize}

See Appendix~\ref{app:multiagent-details} for detailed treatment of the MRF formulation and three-level learning dynamics.

The recursive and multi-agent extensions demonstrate that BEDS principles
scale beyond individual learning systems to distributed and hierarchical
architectures.


\section{Problem Taxonomy}
\label{sec:problem-taxonomy}

The product structure $\bedsM = \HH^2 \times \MvM$ established in Section~\ref{sec:beds-framework} has a remarkable consequence: it induces a natural classification of machine learning problems based on the asymptotic behavior of each geometric factor. This taxonomy provides practitioners with a principled way to categorize problems \emph{a priori}, before training begins.

\subsection{From Geometry to Classification}
\label{subsec:taxonomy-intro}

Each factor of the product manifold can exhibit one of two fundamental behaviors:
\begin{itemize}
    \item \textbf{Convergent behavior}: the component reaches a stable equilibrium
    \item \textbf{Tracking behavior}: the component must continuously adapt to a moving target
\end{itemize}

Since the two factors ($\tau$ for spatial precision, $\kappa$ for temporal coherence) are geometrically independent under our assumptions, we obtain a combinatorial structure that classifies all learning problems into exactly six practically-relevant classes.

\subsection{Formal Definitions}
\label{subsec:taxonomy-definitions}

Before presenting the formal taxonomy, we establish precise definitions for
the two fundamental regimes that characterize learning dynamics. These
definitions operationalize the intuitive distinction between problems that
``settle down'' (BEDS-crystallizable) and problems that require continuous
adaptation (BEDS-maintainable). The mathematical precision of these definitions
is essential for the classification scheme that follows.

\begin{definition}[BEDS-Crystallizable Regime]
\label{def:crystallizable}
A learning problem is \emph{BEDS-crystallizable} on a component $\theta \in \{\tau, \kappa\}$ if the dynamics converge to a fixed point:
\begin{equation}
    \lim_{t \to \infty} \theta(t) = \theta^* \quad \text{where } \theta^* \text{ is a stable equilibrium.}
\end{equation}
\end{definition}

In the BEDS-crystallizable regime, the system can ``freeze'' into a stable configuration, analogous to crystallization in physical systems. Once equilibrium is reached, no further energy expenditure is required to maintain the state.

\begin{definition}[BEDS-Maintainable Regime]
\label{def:maintainable}
A learning problem is \emph{BEDS-maintainable} on a component $\theta \in \{\tau, \kappa\}$ if the optimal state is time-varying:
\begin{equation}
    \theta^*(t) \neq \text{const} \quad \Rightarrow \quad \text{continuous tracking required.}
\end{equation}
\end{definition}

In the BEDS-maintainable regime, the target itself evolves over time (due to distribution shift, changing objectives, or exploration requirements), and the system must continuously expend energy to track it.

\subsection{The Six Problem Classes}
\label{subsec:six-classes}

The formal classification of learning problems emerges directly from the
product structure of the BEDS manifold. Since the spatial component ($\tau$)
and the temporal component ($\kappa$) evolve on geometrically independent
factors under our assumptions, we can characterize each problem by the
asymptotic behavior of each factor separately. This leads to a precise
taxonomy that practitioners can apply \emph{a priori}, before training
begins, to guide algorithm selection and hyperparameter tuning.

\begin{proposition}[Problem Taxonomy]
\label{prop:taxonomy}
Under Assumptions A1--A3, the product structure $\bedsM = \HH^2 \times \MvM$ induces exactly six classes of learning problems:
\begin{center}
\begin{tabular}{llll}
\toprule
\textbf{Class} & \textbf{Component $\tau$} & \textbf{Component $\kappa$} & \textbf{Description} \\
\midrule
\textbf{C-$\tau$} & BEDS-crystallizable & (any) & Spatial precision converges \\
\textbf{C-$\kappa$} & (any) & BEDS-crystallizable & Temporal coherence converges \\
\textbf{C-full} & BEDS-crystallizable & BEDS-crystallizable & Complete crystallization \\
\textbf{M-$\tau$} & BEDS-maintainable & (any) & Precision tracking required \\
\textbf{M-$\kappa$} & (any) & BEDS-maintainable & Coherence maintenance required \\
\textbf{M-full} & BEDS-maintainable & BEDS-maintainable & Full tracking mode \\
\bottomrule
\end{tabular}
\end{center}
\end{proposition}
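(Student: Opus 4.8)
The plan is to reduce the six-way classification to an \emph{independent} binary dichotomy on each of the two geometric factors of the BEDS manifold, and then to count the resulting canonical regimes. Three stages are needed: first decouple the spatial and temporal dynamics, then establish a crystallizable/maintainable dichotomy on each factor separately, and finally assemble the per-factor labels into the named classes.

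For the first stage I would invoke Proposition~\ref{prop:beds-manifold}: Assumptions A1 and A2 force the belief space to be the Riemannian product $\bedsM = \HH^2 \times \MvM$, and crucially the metric \eqref{eq:beds-metric} carries \emph{no cross terms} coupling $(\mu,\tau)$ to $(\phi,\kappa)$. I would then use the standard fact that a curve in a metric product is a geodesic if and only if its projection to each factor is a geodesic; combined with A3 (optimal trajectories are geodesics), this makes the spatial evolution $(\mu(t),\tau(t))$ on $\HH^2$ and the temporal evolution $(\phi(t),\kappa(t))$ on $\MvM$ evolve independently. The relevant asymptotic summary of each factor is its ``radial'' coordinate toward crystallization---the precision $\tau$ for $\HH^2$, the coherence $\kappa$ for $\MvM$ (cf.\ Figure~\ref{fig:beds-state-space-canonical})---since $\mu$ and $\phi$ are the flat directions that do not control whether the state freezes.

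For the second stage, fix one factor, say $\tau$; the argument for $\kappa$ is verbatim. Combining the dissipation law $\dot\tau = -2\gamma\tau$ of Definition~\ref{def:dissipation} with the information influx from data yields a balance equation whose instantaneous fixed point defines a target $\tau^*(t)$. Under A3 the influx varies slowly compared with the relaxation time $1/\gamma$, so the linear contraction keeps $\tau(t)$ near its instantaneous equilibrium and the long-run behaviour reduces to that of $\tau^*(t)$ itself. This yields exactly two mutually exclusive, jointly exhaustive cases: either $\tau^*(t)$ converges, whence $\tau(t)\to\tau^*$ and the problem is BEDS-crystallizable on $\tau$ (Definition~\ref{def:crystallizable}); or $\tau^*(t)$ is genuinely time-varying, whence continuous tracking is required and the problem is BEDS-maintainable on $\tau$ (Definition~\ref{def:maintainable}); the same holds for $\kappa$. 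The final stage is then bookkeeping: each problem carries an independent label in $\{\mathrm{C},\mathrm{M}\}$ for $\tau$ and for $\kappa$---the absence of cross terms makes all $2\times 2$ joint possibilities realizable---and the canonical descriptors one can form are the $\tau$-label alone (C-$\tau$, M-$\tau$), the $\kappa$-label alone (C-$\kappa$, M-$\kappa$), and the joint label when the two agree (C-full, M-full), for a total of six; no further class arises because there are exactly two factors, each admitting exactly the binary classification of stage two.

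The hard part will be the dichotomy in stage two. Without regularity on the data stream, the $\tau$-dynamics with an arbitrary influx could in principle support limit cycles or stranger attractors, which would break the clean ``converge XOR track'' alternative. The plan is to lean on A3: quasi-staticity justifies the slow-influx, near-equilibrium approximation, which collapses the question to whether the instantaneous target $\tau^*(t)$ converges---and this is precisely the crystallizable/maintainable split of Definitions~\ref{def:crystallizable}--\ref{def:maintainable}. A secondary, interpretive point to pin down is that the proposition's ``six classes'' are the six canonical \emph{descriptors} of this lattice rather than a partition into six disjoint cells, so that a problem with $\tau$ crystallizing and $\kappa$ maintainable is correctly read as lying in C-$\tau$ $\cap$ M-$\kappa$.
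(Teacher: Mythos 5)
Your proposal is correct and follows essentially the same route as the paper's own proof: decompose the belief space via the product structure $\HH^2 \times \MvM$ into two independent factors, impose the binary crystallizable/maintainable dichotomy on each, and enumerate the six classes as the two single-factor labels for $\tau$, the two for $\kappa$, and the two agreeing joint labels (the paper phrases this as ``three modes of practical interest $\times$ two regimes''). You supply considerably more supporting detail than the paper does---the no-cross-term/geodesic-projection argument for independence and the dissipation-balance argument for the dichotomy---but these elaborate rather than replace the published argument.
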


\begin{proof}
The product structure decomposes belief space into two geometrically independent factors. Each factor admits exactly two asymptotic regimes: convergence to equilibrium (BEDS-crystallizable) or continuous tracking (BEDS-maintainable). The six classes arise from three modes of practical interest---$\tau$-dominant, $\kappa$-dominant, and coupled---combined with two possible regimes, yielding $3 \times 2 = 6$ classes. Independence of the factors under A1--A3 ensures these classes are well-defined and mutually exclusive when considering component-specific behavior.
\end{proof}

\proven

This classification has immediate practical implications. Identifying the
regime of a problem informs the choice of algorithm, the setting of
hyperparameters, and the design of monitoring diagnostics. The following
sections provide concrete criteria for classification and examples spanning
the major paradigms of machine learning.

\begin{figure}[H]
\centering
\begin{tikzpicture}[
    scale=1.0,
    every node/.style={font=\small},
    cell/.style={minimum width=4.2cm, minimum height=2.2cm, align=center},
    clabel/.style={font=\footnotesize\bfseries, text=bedsblue},
    example/.style={font=\scriptsize\itshape, text=gray}
]

\draw[thick] (-4.2,-2.2) rectangle (4.2,2.2);
\draw[thick] (0,-2.2) -- (0,2.2);
\draw[thick] (-4.2,0) -- (4.2,0);

\node[font=\bfseries, rotate=90] at (-5.3,0) {$\kappa$ (temporal)};
\node[font=\bfseries] at (0,-2.9) {$\tau$ (spatial)};

\node[font=\small, bedsgreen] at (-2.1,2.5) {BEDS-cryst.};
\node[font=\small, bedsorange] at (2.1,2.5) {BEDS-maint.};
\node[font=\small, bedsgreen, rotate=90] at (-4.55,1.1) {B-cryst.};
\node[font=\small, bedsorange, rotate=90] at (-4.55,-1.1) {B-maint.};

\fill[lightgreen, opacity=0.4] (-4.2,0) rectangle (0,2.2);
\node[clabel] at (-2.1,1.6) {\textbf{C-full}};
\node[example] at (-2.1,1.0) {DINO converged};
\node[example] at (-2.1,0.5) {Stable SSL};

\fill[lightorange, opacity=0.4] (0,0) rectangle (4.2,2.2);
\node[clabel] at (2.1,1.6) {\textbf{M-$\tau$ + C-$\kappa$}};
\node[example] at (2.1,1.0) {Online learning};
\node[example] at (2.1,0.5) {Concept drift};

\fill[lightorange, opacity=0.4] (-4.2,-2.2) rectangle (0,0);
\node[clabel] at (-2.1,-0.6) {\textbf{C-$\tau$ + M-$\kappa$}};
\node[example] at (-2.1,-1.2) {RL exploration};
\node[example] at (-2.1,-1.7) {SAC, PPO};

\fill[lightred, opacity=0.4] (0,-2.2) rectangle (4.2,0);
\node[clabel] at (2.1,-0.6) {\textbf{M-full}};
\node[example] at (2.1,-1.2) {Continual RL};
\node[example] at (2.1,-1.7) {World models};

\node[draw, rounded corners=2pt, fill=lightblue, font=\scriptsize] at (-5.9,1.1) {C-$\kappa$};
\node[draw, rounded corners=2pt, fill=lightblue, font=\scriptsize] at (-5.9,-1.1) {M-$\kappa$};
\node[draw, rounded corners=2pt, fill=lightblue, font=\scriptsize] at (-2.1,-3.3) {C-$\tau$};
\node[draw, rounded corners=2pt, fill=lightblue, font=\scriptsize] at (2.1,-3.3) {M-$\tau$};

\end{tikzpicture}
\caption{\textbf{The six BEDS problem classes.} The product structure $\HH^2 \times \MvM$ partitions learning problems based on whether each component (spatial precision $\tau$, temporal coherence $\kappa$) is BEDS-crystallizable or BEDS-maintainable. This classification follows directly from Assumptions A1--A3.}
\label{fig:problem-taxonomy}
\end{figure}
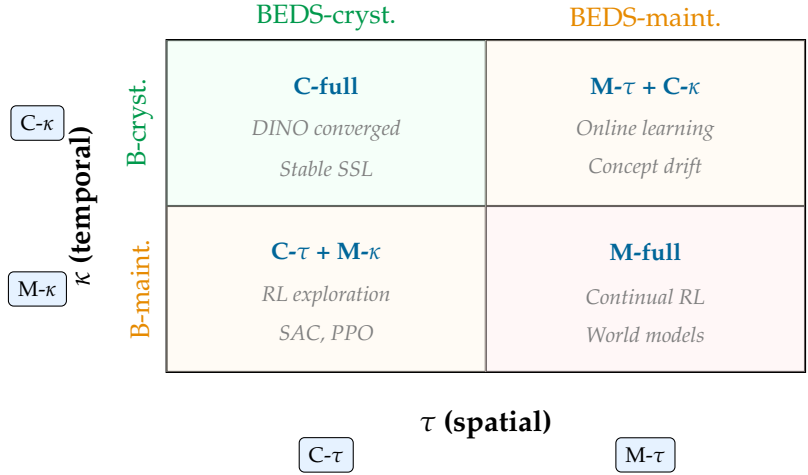

\subsection{Practical Classification Criteria}
\label{subsec:classification-criteria}

To classify a problem before training, practitioners can use the following criteria:

\begin{center}
\begin{tabular}{lcc}
\toprule
\textbf{Criterion} & \textbf{C (BEDS-crystallizable)} & \textbf{M (BEDS-maintainable)} \\
\midrule
Environment & Stationary & Non-stationary \\
Data distribution & Fixed & Drifting \\
Learning objective & Fixed point & Moving target \\
$\mathcal{C} = \tau \cdot \kappa$ & Can grow unbounded & Must remain bounded \\
Optimal $\dot{S}$ & $\to 0$ asymptotically & $> 0$ always \\
\bottomrule
\end{tabular}
\end{center}

The crystallization index $\mathcal{C}$ provides a quantitative diagnostic: in BEDS-crystallizable regimes, $\mathcal{C}$ may increase without bound as the system settles; in BEDS-maintainable regimes, excessive $\mathcal{C}$ indicates dangerous rigidity.

\subsection{Examples and Applications}
\label{subsec:taxonomy-examples}

\begin{table}[htbp]
\centering
\caption{\textbf{Classification of common ML problems.} Each example is categorized by its asymptotic behavior on the two geometric components.}
\label{tab:taxonomy-examples}
\begin{tabular}{llp{7cm}}
\toprule
\textbf{Class} & \textbf{Example} & \textbf{Characteristics} \\
\midrule
\textbf{C-$\tau$} & Ridge regression, image classification & Labels are fixed; model converges to stable precision over training \\
\textbf{C-$\kappa$} & BYOL initialization & Momentum teacher synchronization dominates; coherence stabilizes \\
\textbf{C-full} & DINO fully trained & Both precision and coherence reach equilibrium; stable representations \\
\textbf{M-$\tau$} & Online learning, concept drift & Precision must continuously adapt to shifting distributions \\
\textbf{M-$\kappa$} & RL exploration (SAC, PPO) & Low coherence enables exploration; entropy regularization \\
\textbf{M-full} & Continual RL, active world models & Both components in tracking mode; full adaptation required \\
\bottomrule
\end{tabular}
\end{table}

\begin{remark}[Taxonomy and Algorithm Selection]
The problem class directly informs algorithm choice. BEDS-crystallizable problems (C-full) benefit from aggressive crystallization (high learning rates, strong regularization toward equilibrium). BEDS-maintainable problems (M-full) require careful entropy management to prevent premature convergence. Mixed regimes (M-$\tau$ + C-$\kappa$ or C-$\tau$ + M-$\kappa$) suggest asymmetric treatment of the two geometric factors.
\end{remark}

This taxonomy provides a principled basis for algorithm selection. The
following section explores extensions of BEDS to optimization schedules
and connections with quantum mechanics.

\section{Extensions and Connections}
\label{sec:extensions}

The BEDS framework, while primarily focused on regularization and dissipation in machine learning, connects to broader themes in optimization and physics. This section explores two such connections: graduated non-convexity as a dissipative schedule, and structural parallels with quantum mechanics. These extensions are not central to the main argument but suggest that the thermodynamic perspective on learning may have deeper roots and wider applicability than initially apparent.

\subsection{Graduated Non-Convexity as Dissipative Continuation}
\label{subsec:gnc}

Graduated non-convexity (GNC) is a powerful technique in non-convex optimization where the objective function is progressively ``revealed'' from a smooth, convex surrogate to the full non-convex target. This can be formulated as an explicit continuation $\mathcal{L}_{\alpha(t)}$ from a smoothed objective ($\alpha \approx 0$) to the full non-convex objective ($\alpha = 1$). Note that $\alpha(t)$ here denotes a revelation schedule, distinct from SAC's temperature parameter $\alpha$ (Section~\ref{subsec:bridging}).

\paragraph{BEDS Interpretation.}
Within the BEDS paradigm, a natural way to formalize controlled forgetting is to couple this continuation with a dissipative term that penalizes deviation from a reference belief state:
\begin{equation}
\min_q \mathbb{E}_{q}[\mathcal{L}_{\alpha(t)}] + \beta(t) D_{\KL}(q\|\pi)
\label{eq:gnc-beds}
\end{equation}
Here $\alpha(t)$ progressively injects structural complexity (revealing fine-grained details of the objective landscape), while $\beta(t)$ controls information disposal and prevents premature crystallization.

This viewpoint interprets GNC not as a source of forgetting per se, but as a schedule that organizes \emph{when} fine-grained structure becomes learnable under explicit dissipation constraints.

\paragraph{Mathematical Formalization.}
The interplay between $\alpha(t)$ and $\beta(t)$ can be formalized as follows. Define the \emph{effective temperature} as the inverse of the dissipation strength:
\begin{equation}
T_{\text{eff}}(t) = \frac{1}{\beta(t)} \quad \text{(equivalently: } \beta(t) = 1/T_{\text{eff}}(t) \text{)}
\end{equation}
At high effective temperature (low $\beta$), the system explores broadly; at low effective temperature (high $\beta$), it crystallizes around local structure. The continuation schedule $\alpha(t)$ controls which features of the landscape are visible:
\begin{equation}
\mathcal{L}_{\alpha}(\theta) = (1-\alpha) \mathcal{L}_{\text{smooth}}(\theta) + \alpha \mathcal{L}_{\text{target}}(\theta)
\end{equation}

\paragraph{Connection to Simulated Annealing.}
Simulated annealing~\cite{kirkpatrick_1983} corresponds to the limiting case where $\alpha = 1$ throughout (full objective always visible) and only $\beta(t)$ varies---typically increasing from low to high. GNC generalizes this by also varying which aspects of the objective are revealed.

\paragraph{Examples and Applications.}
GNC principles appear in several machine learning contexts:
\begin{itemize}
    \item \textbf{Robust estimation}: Black and Anandan's~\cite{black_anandan_1996} graduated non-convexity for optical flow progressively sharpens the robust penalty function, following the foundational work of Blake and Zisserman~\cite{blake_zisserman_1987} on visual reconstruction.
    \item \textbf{Curriculum learning}: Training on easy examples first, then harder ones~\cite{bengio_2009}, implicitly implements $\alpha(t)$ by controlling which data features dominate the loss.
    \item \textbf{Knowledge distillation}: The teacher's soft targets provide a smooth surrogate; temperature annealing implements $\beta(t)$.
    \item \textbf{Diffusion models}: The denoising schedule can be viewed as an $\alpha(t)$ schedule revealing signal structure.
\end{itemize}

\paragraph{Testable Prediction.}
The BEDS framework predicts a specific relationship between optimal schedules:
\begin{conjbox}[GNC-Dissipation Coupling]
For optimal learning dynamics, the dissipation schedule $\beta(t)$ should increase as the revelation schedule $\alpha(t)$ increases. Intuitively: as more structure becomes learnable, the system should crystallize more strongly around it.
\end{conjbox}

This prediction is testable: compare GNC with coupled schedules (where $\beta$ tracks $\alpha$) against GNC with independent schedules. The coupled version should achieve better final solutions with less wasted computation.

\subsection{Structural Parallels with Quantum Mechanics}
\label{subsec:quantum-parallels}

The connection between BEDS and quantum mechanics is not merely analogical---it reflects deep structural constraints on any information-processing system that must preserve information while allowing interference between belief states.

\paragraph{Why This Connection Matters for ML.}
Understanding the quantum parallels illuminates three aspects of machine learning:
\begin{enumerate}
    \item \textbf{Quantum-inspired algorithms}: Techniques like quantum annealing~\cite{kadowaki_nishimori_1998} and variational quantum circuits~\cite{peruzzo_2014} may find classical analogues in BEDS dynamics. Recent surveys~\cite{biamonte_2017,preskill_2018} explore these connections.
    \item \textbf{Coherent vs.\ decoherent learning}: The distinction between reversible (Schr\"odinger-like) and irreversible (dissipative) dynamics clarifies when information is created, preserved, or destroyed during learning~\cite{zurek_2003,schlosshauer_2007}.
    \item \textbf{Fundamental limits}: The Heisenberg uncertainty principle has a BEDS analogue constraining simultaneous crystallization of conjugate variables~\cite{nielsen_chuang_2000}.
\end{enumerate}

The BEDS framework with its extended state $(\mu, \sigma, p, \varphi)$ exhibits deep structural parallels with quantum mechanics. When dissipation is negligible ($\gamma \approx 0$), the requirement that evolution must preserve information---neither creating nor destroying it---leads directly to quantum-like dynamics.

\paragraph{From BEDS Parameters to Complex Amplitudes.}
The phase parameter $\varphi$ necessitates a \emph{complex} representation. Just as a complex number $z = r e^{i\varphi}$ carries both magnitude and direction, the extended BEDS state encodes both uncertainty ($\sigma$) and accumulated history ($\varphi$). The full state can be written as a complex amplitude:
\begin{equation}
  \Psi(x) = \left(\frac{1}{2\pi\sigma^2}\right)^{1/4} \exp\left(-\frac{(x-\mu)^2}{4\sigma^2} + i\frac{p(x-\mu)}{\hbar} + i\varphi\right)
\end{equation}
This is precisely a Gaussian wave packet in quantum mechanics. Complex numbers are not imposed but \emph{emerge} because real numbers cannot encode phase, and phase is necessary for reversible evolution.

\paragraph{The Bloch Sphere Visualization.}
For the simplest case (a two-state system), the BEDS state can be visualized on the Bloch sphere (Figure~\ref{fig:bloch-sphere-ml}). Three processes operate on this space:

\begin{figure}[H]
\centering
\begin{tikzpicture}[scale=1.1]
  \shade[ball color=gray!10, opacity=0.3] (0,0) circle (2cm);
  \draw[thick] (0,0) circle (2cm);

  \draw[dashed, gray] (-2,0) arc (180:360:2 and 0.5);
  \draw[thick] (-2,0) arc (180:0:2 and 0.5);

  \draw[thick, ->] (0,-2.3) -- (0,2.5) node[above] {$|0\rangle$};
  \node[below] at (0,-2.3) {$|1\rangle$};
  \draw[thick, ->] (-2.3,0) -- (2.5,0) node[right] {$|+\rangle$};
  \node[left] at (-2.3,0) {$|-\rangle$};

  \fill[bedsblue] (0.8, 1.5) circle (3pt);
  \node[right, font=\small, text=bedsblue] at (1, 1.5) {pure state};

  \draw[very thick, bedsgreen, ->] (0.8, 1.5) arc (60:120:1.8 and 0.4);
  \node[above right, font=\scriptsize, text=bedsgreen] at (0.3, 1.9) {1. Schrödinger (rotation)};

  \fill[gray!50] (0.3, 0.5) circle (2pt);
  \draw[very thick, bedsorange, ->] (0.6, 1.2) -- (0.35, 0.6);
  \node[right, font=\scriptsize, text=bedsorange] at (0.7, 0.85) {2. Dissipation (fall)};

  \draw[very thick, bedsred, ->, dashed] (-0.5, 0.8) -- (-0.1, 1.9);
  \node[left, font=\scriptsize, text=bedsred] at (-0.6, 1.3) {3. Measurement};

  \fill[gray] (0,0) circle (2pt);
  \node[below, font=\scriptsize, text=gray] at (0, -0.5) {maximally mixed};

  \node[draw, thick, fill=white, font=\scriptsize, align=left] at (5.2, -0.5) {
    \textcolor{bedsgreen}{$\bullet$ Schrödinger}: reversible\\
    \textcolor{bedsorange}{$\bullet$ Dissipation}: info lost\\
    \textcolor{bedsred}{$\bullet$ Measurement}: crystallize
  };
\end{tikzpicture}
\caption{The Bloch sphere representation of BEDS states. Three processes: (1) Schrödinger evolution rotates on the surface (information conserved), (2) Dissipation pulls toward center (information lost), (3) Measurement jumps to poles (crystallization).}
\label{fig:bloch-sphere-ml}
\end{figure}
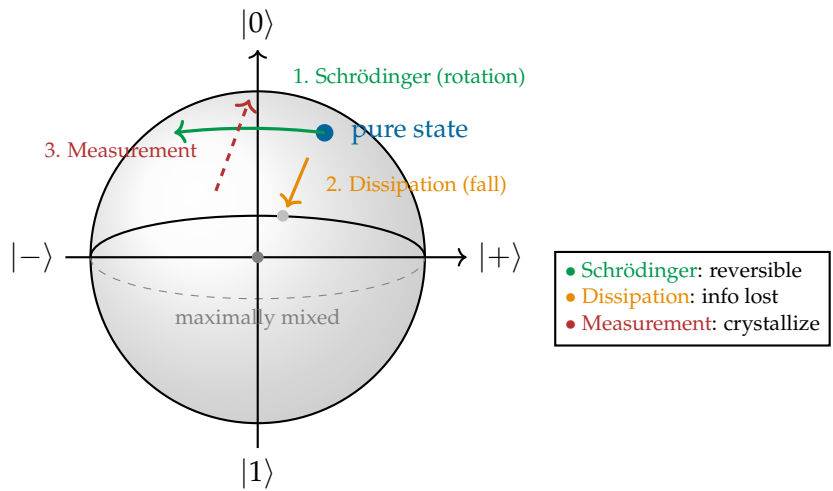

\begin{table}[ht]
\centering
\small
\begin{tabular}{lll}
\toprule
\textbf{Position on Sphere} & \textbf{State Type} & \textbf{BEDS Interpretation} \\
\midrule
Poles ($|0\rangle$, $|1\rangle$) & Crystallized & Belief frozen at definite value \\
Surface & Pure state & Minimum uncertainty (coherent) \\
Interior & Mixed state & Partial information (dissipated) \\
Center & Maximally mixed & No information (maximum entropy) \\
\bottomrule
\end{tabular}
\caption{Bloch sphere positions and their BEDS interpretation.}
\label{tab:bloch-interpretation}
\end{table}

\paragraph{BEDS--Quantum Mechanics Correspondence.}
The structural parallels between BEDS and quantum mechanics are summarized in Table~\ref{tab:beds-qm}.

\begin{table}[ht]
\centering
\small
\begin{tabular}{p{4cm}p{4cm}p{5.5cm}}
\toprule
\textbf{BEDS Concept} & \textbf{QM Concept} & \textbf{Connection} \\
\midrule
Complex amplitudes $(\mu, \sigma, p, \varphi)$ & Wave function $\psi$ & Phase necessary for reversibility \\
Fisher isometry (info-preserving) & Schrödinger equation & Unique evolution preserving information \\
Crystallization constraint & Heisenberg uncertainty & $\sigma_\mu \sigma_p \geq \hbar/2$ \\
Dissipation $\gamma\tau$ decay & Decoherence & Same differential equation \\
Observation/Crystallization & Measurement/Collapse & Irreversible information gain \\
Precision $\tau$ & Purity $\text{Tr}(\rho^2)$ & Information content measure \\
\bottomrule
\end{tabular}
\caption{Correspondence between BEDS framework and quantum mechanics.}
\label{tab:beds-qm}
\end{table}

The key insight is that the Schrödinger equation emerges as the \emph{unique} information-preserving evolution on the BEDS manifold, while the Heisenberg uncertainty principle reflects the impossibility of simultaneously crystallizing conjugate variables. The dissipative dynamics follow the Lindblad master equation~\cite{lindblad_1976,gorini_1976}, the most general form of Markovian open quantum dynamics. This suggests that quantum mechanics may be the natural language for describing any information-preserving system with phase degrees of freedom.

\paragraph{Limitations of the Analogy.}
Several important differences distinguish BEDS from quantum mechanics:
\begin{itemize}
    \item \textbf{No superposition of macrostates}: Unlike quantum systems, BEDS agents do not exist in superpositions of distinct belief states. The ``superposition'' in BEDS is over parameter values within a single belief distribution, not over qualitatively different configurations.
    \item \textbf{Classical communication}: BEDS agents communicate via classical messages (means and precisions), not entangled quantum states. There is no BEDS analogue of quantum non-locality.
    \item \textbf{Decoherence is controllable}: In quantum mechanics, decoherence is typically an undesirable environmental effect. In BEDS, dissipation ($\gamma > 0$) is a designed feature enabling adaptation.
    \item \textbf{No Planck's constant}: The BEDS ``$\hbar$'' is a dimensional parameter set by the information-geometric structure, not a fundamental physical constant. Different systems may have different effective ``$\hbar$'' values.
\end{itemize}

These limitations do not diminish the structural insight: the mathematics that emerges from requiring information-preserving dynamics on a curved manifold is formally identical to quantum mechanics. This suggests that quantum-like phenomena may arise in any sufficiently constrained information-processing system, independent of physical scale.


\section{Predictions and Structural Consequences}
\label{sec:predictions}

The BEDS framework does not aim to provide quantitative performance predictions
nor to describe the detailed dynamics of real learning algorithms.
Instead, it yields a set of qualitative and structural predictions that are expected
to hold across learning paradigms, levels of abstraction, and implementation regimes,
including discrete, stochastic, and non-equilibrium settings.

The value of a theoretical framework lies not only in its explanatory power
but in its predictive capacity. The BEDS framework, while primarily conceptual,
yields a set of structural predictions about learning systems. These predictions
are qualitative rather than quantitative---they concern regimes, trade-offs,
and failure modes rather than specific numerical bounds. Their purpose is to
guide intuition and suggest experimental directions, not to replace empirical
investigation.

These predictions are organized by category and should be interpreted as statements
about regimes, trade-offs, and failure modes rather than mechanistic laws.

\subsection*{A. Predictions about Forgetting and Viability}

\begin{itemize}
\item \textbf{Forgetting is a structural necessity.}  
Learning systems that continuously accumulate information without effective mechanisms
for information disposal will eventually become brittle, overconfident, or unable to adapt
to non-stationary environments. Forgetting is therefore a functional requirement for
long-term viability rather than a design flaw.

\item \textbf{Uncontrolled information retention leads to loss of adaptability.}  
As internal representations become increasingly rigid, the system’s ability to respond
to distributional shifts degrades, even if short-term performance improves.
This effect is expected to manifest in continual learning and digital twin settings.
\end{itemize}

Beyond forgetting, BEDS predicts characteristic patterns in learning stability.

\subsection*{B. Predictions about Stability and Regime Transitions}

\begin{itemize}
\item \textbf{Learning dynamics exhibit distinct stability regimes.}  
Overfitting, representation collapse, and catastrophic forgetting correspond to transitions
between qualitatively different dynamical regimes.
These transitions are governed by the balance between information acquisition,
model capacity, and effective forgetting.

\item \textbf{Failure modes are preceded by detectable changes in variability and coherence.}  
Before collapse or loss of adaptability occurs, learning dynamics typically exhibit
increased instability, reduced variability, or excessive synchronization,
suggesting that early-warning diagnostics are possible.
\end{itemize}

These stability regimes imply fundamental trade-offs.

\subsection*{C. Predictions about Trade-offs and Constraints}

\begin{itemize}
\item \textbf{Precision, stability, and plasticity are fundamentally coupled.}  
Improving local accuracy or precision tends to reduce long-term adaptability,
introducing an unavoidable trade-off between performance, stability, and plasticity.
No learning algorithm can simultaneously optimize all three indefinitely.

\item \textbf{Resource constraints shape learning behavior as much as data.}  
Finite memory, energy, and communication budgets impose constraints that are as fundamental
as data availability or model capacity, particularly in long-lived or distributed systems.
\end{itemize}

The existence of these trade-offs illuminates why certain heuristics work.

\subsection*{D. Predictions about Existing Heuristics}

\begin{itemize}
\item \textbf{Successful learning heuristics implicitly enforce dissipative constraints.}  
Techniques such as dropout, noise injection, data augmentation, masking,
exponential moving averages, and stochastic optimization can be interpreted
as mechanisms that regulate information retention and export.
Their empirical effectiveness is consistent with the need to control dissipation,
even though they are not derived from explicit thermodynamic principles.

\item \textbf{Heuristic diversity reflects multiple ways of managing dissipation.}  
Different regularization and stabilization techniques correspond to different
strategies for balancing information acquisition and forgetting,
rather than fundamentally different learning principles.
\end{itemize}

At the theoretical level, BEDS defines reference points for evaluating
learning systems.

\subsection*{E. Predictions about Idealized Optimality and Reference Geometry}

\begin{itemize}
\item \textbf{Information-geometric optimality defines a reference, not a mechanism.}  
The Fisher--Rao geometry and quasi-static trajectories characterize an idealized
lower bound on information dissipation in a continuous setting.
Real learning systems are not expected to realize this bound,
but their behavior may be meaningfully compared to it in terms of relative efficiency
and stability.

\item \textbf{Deviations from the ideal regime are expected and necessary.}  
Discrete computation, stochastic updates, and non-stationary environments
inevitably push real systems away from the quasi-static limit,
without invalidating the usefulness of the reference bound.
\end{itemize}

These principles become particularly relevant in long-lived systems.

\subsection*{F. Predictions about Continual and Distributed Systems}

\begin{itemize}
\item \textbf{Dissipation becomes dominant in long-lived and distributed settings.}  
In continual learning, multi-agent systems, and digital twins,
the cost of uncontrolled information accumulation grows with time,
making controlled forgetting more critical than asymptotic performance
on static benchmarks.

\item \textbf{Viability replaces optimality as the primary criterion.}  
In such systems, the relevant question is not whether learning converges to a global optimum,
but whether it remains stable, adaptive, and resource-bounded over extended operation.
\end{itemize}

Taken together, these predictions position BEDS as a framework for classifying
learning problems by regime, diagnosing failure modes, and reasoning about viability
under finite resources, rather than as a prescriptive theory of algorithmic optimization.
The following section examines the limitations of our assumptions and directions
for future work.

\subsection{Prediction: Gaussian Splatting as a Near-Optimal Dissipative Learning System}
\label{subsec:prediction-gs}

The BEDS framework is deliberately introduced as a reference theory rather than an immediately implementable algorithm. A central value of such a framework lies in its ability to generate concrete, falsifiable predictions about real learning systems.

We formulate the following prediction regarding Gaussian Splatting.

\paragraph{Prediction P-GS (Dissipative Optimality of Gaussian Splatting).}
\emph{Gaussian Splatting constitutes a near-optimal practical approximation of the BEDS learning dynamics under Assumptions A1--A3.}

More precisely:

\begin{enumerate}
    \item \textbf{Belief Representation (A2).}  
    Gaussian splats explicitly parameterize maximum-entropy belief states, namely Gaussian distributions over spatial structure. This provides a direct instantiation of Assumption A2, without requiring an intermediate learned latent representation.

    \item \textbf{Learning Geometry (A1).}  
    The optimization of Gaussian parameters (means and covariances) under small, incremental updates empirically approximates quasi-static trajectories in Fisher--Rao geometry, since the Fisher--Rao metric corresponds to the local second-order limit of Kullback--Leibler divergence between nearby Gaussian belief states.

    \item \textbf{Dissipation and Viability (A3).}  
    Mechanisms commonly employed in Gaussian Splatting---such as opacity decay, covariance regularization, densification control, and pruning---can be interpreted as explicit entropy-export mechanisms. These prevent over-crystallization (excessive precision or structural rigidity) and enable the maintenance of bounded, viable belief states over time.
\end{enumerate}

\paragraph{Consequence.}
Under this interpretation, Gaussian Splatting is not merely a rendering or reconstruction technique, but a \emph{dissipative belief field} whose training dynamics approximate the thermodynamically optimal reference trajectories predicted by the BEDS framework.
\emph{This interpretation receives independent mathematical support from recent work on Wasserstein-Fisher-Rao gradient flows for splat models~\cite{daniels_rigollet_2025}.}

\paragraph{Extended Prediction: JEPA on Gaussian Belief Fields.}
A direct consequence of this perspective is the following speculative but testable prediction:

\begin{quote}
\emph{JEPA-style masked prediction can be performed directly on Gaussian primitives, rather than on pixels or learned latent embeddings.}
\end{quote}

In this setting, subsets of Gaussian splats are masked and predicted from the remaining visible splats, with learning driven by minimizing Fisher--Rao distance between predicted and target belief states. This yields a fully self-supervised, dissipative learning process operating natively in belief space.

While not explored experimentally in this work, this prediction illustrates how the BEDS framework naturally unifies modern self-supervised learning and 3D reconstruction paradigms, and provides a concrete direction for future empirical validation.

\vspace{0.5cm}
\hrule
\vspace{0.3cm}
\noindent\textit{Part IV provides critical perspective. We examine what
the framework predicts, acknowledge its limitations, and identify
directions for future development.}
\vspace{0.3cm}
\hrule
\vspace{0.5cm}


\FloatBarrier
\part{Critical Analysis}


\section{Discussion}
\label{sec:discussion}

Every theoretical framework carries limitations, and intellectual honesty
demands that we make ours explicit. The assumptions underlying BEDS are
physically motivated but not universally applicable. The quasi-static
idealization, in particular, represents an extreme limit that real learning
systems never achieve. This section examines these limitations, discusses
practical implementation challenges, and identifies directions for future
development.

\subsection{Limitations of the Assumptions}
\label{subsec:limitations}

Each assumption underlying BEDS carries limitations that affect its
applicability.

\textbf{A1 (Intrinsic measure)}: While physically motivated, practical implementations often use coordinate-dependent approximations. The degree to which this matters empirically is an open question.

\textbf{A2 (Maximum entropy)}: Real belief states may not be well-approximated by simple exponential families. Deep networks may learn distributions far from maximum entropy. Extending to more expressive families is a key direction.

\textbf{A3 (Quasi-static)}: Real learning operates far from the quasi-static limit. SGD with finite learning rates is highly non-equilibrium. The quasi-static regime provides lower bounds, but actual inefficiencies may be much larger.

\subsection{The Quasi-Static Idealization}
\label{subsec:quasistatic-discussion}

The quasi-static assumption (A3) is the most restrictive. In practice, several factors violate the quasi-static idealization. SGD noise exceeds thermal noise by approximately $10^{10}$ times. Learning rate schedules inherently violate equilibrium assumptions. Batch processing is fundamentally irreversible.

Despite these violations, the framework remains valuable. It establishes fundamental bounds---lower limits on achievable efficiency. It provides directional guidance---move toward geodesic trajectories. It offers diagnostic tools---the crystallization index and phase diagrams that characterize learning dynamics.

\subsection{Practical Fisher--Rao Implementation}
\label{subsec:practical-fr}

A central challenge for applying BEDS is the computational cost of exact Fisher--Rao regularization. The Fisher information matrix $F(\theta)$ has dimension $d \times d$ where $d$ is the number of parameters---billions for modern language models. Computing and storing $F$ exactly is infeasible.

\paragraph{Tractable Approximations.}
Several approximation strategies exist, ordered by fidelity:

\begin{enumerate}
    \item \textbf{Diagonal Fisher}: Keep only diagonal entries $F_{ii}$, reducing storage to $O(d)$. This loses cross-parameter interactions but captures per-parameter precision. Empirically effective for Adam-style optimizers where diagonal preconditioning dominates.

    \item \textbf{K-FAC (Kronecker-Factored Approximate Curvature)}: For neural networks, approximate $F \approx A \otimes B$ where $A$ captures activation statistics and $B$ captures gradient statistics. Reduces complexity from $O(d^2)$ to $O(d)$ while preserving layer-wise structure. Introduced by Martens \& Grosse~\cite{martens_grosse_2015}.

    \item \textbf{Low-rank plus diagonal}: $F \approx D + UU^T$ where $D$ is diagonal and $U \in \R^{d \times r}$ with $r \ll d$. Captures the most important directions of curvature while remaining tractable.

    \item \textbf{Empirical Fisher}: Use $\E[\nabla \log p \cdot \nabla \log p^T]$ estimated from mini-batches. Biased but unbiased in expectation; commonly used in natural gradient methods.
\end{enumerate}

\paragraph{SIGReg as Fisher--Rao Proxy.}
The SIGReg regularizer (Garrido et al., 2023), which penalizes deviation from $\Normal(0, I)$, can be interpreted as an implicit Fisher--Rao regularization. For embeddings $z$ with empirical covariance $\Sigma$:
\begin{equation}
\mathcal{L}_{\text{SIGReg}} = \|\Sigma - I\|_F^2 + \|\bar{z}\|^2
\end{equation}
This is computationally cheap (linear in embedding dimension) and empirically effective, suggesting that even crude approximations to Fisher--Rao capture essential geometric structure.

\subsection{What is Proven vs Conjectured}
\label{subsec:proven-vs-conjectured}

For clarity, we summarize the epistemic status of each claim made in
this work.

\begin{table}[htbp]
\centering
\caption{\textbf{Epistemic status of claims.}}
\label{tab:epistemic-status}
\begin{tabular}{lll}
\toprule
\textbf{Claim} & \textbf{Status} & \textbf{Basis} \\
\midrule
Conditional Optimality Theorem & \proven & Direct proof \\
Energy-Precision Bound & \proven & Corollary \\
Euclidean Suboptimality & \proven & Corollary \\
$\HH^2 \times \MvM$ geometry & \proven & Standard result \\
Six-class problem taxonomy & \proven & Product structure argument \\
\midrule
SSL efficiency advantage & \conjectured & Theoretical argument + empirical \\
Multi-agent MRF structure & \conjectured & Formal analogy \\
Kuramoto connection & \conjectured & Equation isomorphism \\
\midrule
GLP Conjecture & \speculative & Pattern recognition \\
Hallucinations as flux deficit & \speculative & Interpretive framework \\
\bottomrule
\end{tabular}
\end{table}

\subsection{Open Questions}
\label{subsec:open-questions}

Several important questions remain beyond the scope of this work.

\begin{enumerate}

    \item \textbf{Quantifying non-equilibrium}: How far from quasi-static is SGD? Can we bound the inefficiency?

    \item \textbf{Practical Fisher--Rao}: Can we implement efficient Fisher--Rao regularization at scale?

    \item \textbf{Coupling $\tau$ and $\kappa$}: When does the product structure break down?

    \item \textbf{Native belief representations}: Can self-supervised learning on Gaussian splatting primitives outperform approaches based on learned latent embeddings?
\end{enumerate}

Addressing these questions will require both theoretical advances and
empirical validation. We hope this framework provides a useful starting
point for such investigations.


\subsection{Speculative Perspectives: The GLP Conjecture}
\label{app:glp}

\speculative

\textbf{Note}: The following is highly speculative and offered as a direction for future inquiry, not a claim.

\medskip

The BEDS framework reveals that learning systems must dissipate entropy to maintain structured beliefs. This constraint is not merely technical---it echoes deeper results from logic, physics, and thermodynamics. Three foundational discoveries of the twentieth century, each arising independently, share a remarkable structural similarity: all concern systems capable of self-reference, all identify pathologies arising from closure, and all resolve these pathologies through some form of openness. We conjecture that these parallels are not coincidental, but reflect a universal trade-off faced by any system that maintains order against entropy.

\begin{itemize}
\item \textbf{G\"odel}~\cite{godel_1931}: Any consistent formal system capable of expressing arithmetic contains true statements unprovable within the system.

\item \textbf{Landauer}~\cite{landauer_1961}: Irreversible computation requires dissipation $\geq \kB T \ln 2$ per bit.

\item \textbf{Prigogine}~\cite{prigogine_1977_nobel}: Open systems far from equilibrium can maintain order through entropy export.
  
\end{itemize}

All three results share a common structure. Each involves a system capable of self-reference. Each identifies a closure condition leading to pathology. Each resolves the pathology through some form of openness---whether logical, computational, or thermodynamic.

\begin{conjbox}[GLP Conjecture (Speculative)]
Systems maintaining order against entropy face universal trade-offs connecting logical incompleteness, computational irreversibility, and thermodynamic dissipation. The ODR conditions (Openness, Dissipation, Recursion) characterize systems that avoid closure pathologies.
\end{conjbox}

This remains unproven and possibly unprovable in its current form.

\section{Conclusion}
\label{sec:conclusion}

This work does not close the theory of learning. It reframes it.

For decades, machine learning has developed through empirical discovery:
techniques like dropout, weight decay, and batch normalization emerged from
trial and error, their effectiveness confirmed by practice but not explained
by principle. The BEDS framework offers a different perspective---one where
these techniques are not arbitrary heuristics but manifestations of deeper
thermodynamic constraints.

\begin{keyresult}[The Central Insight]
Learning is not optimization in a vacuum. It is the maintenance of order
against dissipation---a thermodynamic process subject to fundamental bounds.
The geometry of belief space is not Euclidean but hyperbolic, and this
curvature has consequences: Fisher--Rao regularization is not merely
better than Euclidean alternatives, it is \emph{uniquely optimal} under
information-theoretic constraints (Theorem~\ref{thm:conditional-optimality}).
The unifying equation
\begin{equation}
\mathcal{L} = \dF^2(\theta, \theta^*) + \lambda \cdot \mathcal{L}_{\text{data}}
\end{equation}
emerges not from empirical tuning but from geometric necessity.
\end{keyresult}

This reframing has immediate implications. Overfitting becomes
over-crystallization---not a failure of capacity but of dissipation control.
Catastrophic forgetting becomes insufficient structure in the prior hierarchy.
Mode collapse becomes a phase transition, predictable from the crystallization
index $\mathcal{C} = \tau \cdot \kappa$ (Figure~\ref{fig:beds-state-space-canonical}).
These are not metaphors; they are precise correspondences with testable
consequences. Table~\ref{tab:beds-rosetta} makes these translations explicit,
and Section~\ref{sec:how-to-use} provides practical guidance for applying them.

The deeper question this work raises is whether the thermodynamic perspective
extends beyond regularization. If learning is fundamentally dissipative,
then scaling laws, emergent capabilities, and the puzzling success of
overparameterized models may all find explanation within the same framework.
We do not claim to have answered these questions. We claim to have found
a language in which they can be precisely asked.

A natural next step is to instantiate the framework in belief-native representations, such as Gaussian fields, where dissipation, crystallization, and quasi-static trajectories can be observed directly.

\vspace{0.5cm}

\vspace{1cm}
\hrule
\vspace{0.5cm}

\FloatBarrier
\part{APPENDICES}

\appendix

\section{Mathematical Proofs}
\label{app:proofs}

\subsection{Rate-Distortion Foundation}
\label{app:rate-distortion}

The proofs rest on Shannon's rate-distortion theory. For a Gaussian source with variance $\sigma^2$, maintaining mean squared error at most $D$ requires:
\begin{equation}
R(D) = \frac{1}{2}\log_2\frac{\sigma^2}{D} \quad \text{bits}
\end{equation}

\subsection{Proof of Information Rate Theorem}
\label{app:proof-info-rate}

\begin{proof}
\textbf{Step 1}: Under dissipation, the innovation over interval $\Delta t$ is $\Delta\theta \sim \Normal(0, \gamma \Delta t)$.

\textbf{Step 2}: Rate-distortion with source variance $\gamma \Delta t$ and target distortion $\sigma^{*2}$:
\begin{equation}
R = \frac{1}{2}\log_2\frac{\gamma \Delta t}{\sigma^{*2}}
\end{equation}

\textbf{Step 3}: Taking $\Delta t \to 0$:
\begin{equation}
\dot{I} = \lim_{\Delta t \to 0} \frac{R}{\Delta t} = \frac{\gamma \tau^*}{2\ln 2}
\end{equation}
\end{proof}

\subsection{Proof of Energy-Precision Bound}
\label{app:proof-energy}

\begin{proof}
From the Information Rate Theorem: $\dot{I}_{\min} = \gamma \tau^* / (2\ln 2)$.

By Landauer's principle, each bit costs $\kB T \ln 2$ joules:
\begin{equation}
P_{\min} = \dot{I}_{\min} \cdot \kB T \ln 2 = \frac{\gamma \tau^* \kB T}{2}
\end{equation}

For $\tau^* = 1$: $P_{\min} \geq \gamma \kB T / 2$.
\end{proof}

\section{Fisher--Rao Metrics}
\label{app:fisher}

\subsection{Gaussian Family}

For $p(x|\mu, \sigma) = \Normal(\mu, \sigma^2)$:
\begin{equation}
ds^2 = \tau \, d\mu^2 + \frac{1}{2\tau^2} d\tau^2
\end{equation}
where $\tau = 1/\sigma^2$. This is the Poincar\'e half-plane metric.

\subsection{Von Mises Family}

For $p(\theta|\phi, \kappa) = \frac{\exp(\kappa \cos(\theta - \phi))}{2\pi I_0(\kappa)}$:
\begin{align}
g_{\phi\phi} &= \kappa A(\kappa) \\
g_{\kappa\kappa} &= 1 - A(\kappa)^2 - \frac{A(\kappa)}{\kappa}
\end{align}
where $A(\kappa) = I_1(\kappa)/I_0(\kappa)$.

\section{Multi-Agent BEDS (Extended)}
\label{app:multiagent-details}

\subsection{MRF Formulation}

For a network of $N$ agents with graph $G = (V, E)$:
\begin{equation}
P(\{S\}) \propto \exp(-E(\{S\})/T)
\end{equation}
where
\begin{equation}
E = \sum_i D_{\KL}(q_i \| p_i^{\text{data}}) + \sum_{(i,j) \in E} \lambda_{ij} \dF^2(q_i, q_j) + \sum_i D_{\KL}(q_i \| \pi_i)
\end{equation}

\subsection{Three Levels of Learning}

\begin{table}[htbp]
\centering
\caption{\textbf{Learning levels in BEDS networks.}}
\begin{tabular}{llll}
\toprule
\textbf{Level} & \textbf{What is learned} & \textbf{Timescale} & \textbf{Mechanism} \\
\midrule
1. Beliefs & $(\mu_i, \tau_i)$ & Fast & Bayesian fusion \\
2. Potentials & $\psi_{ij}$ & Medium & Agreement history \\
3. Topology & Structure of $G$ & Slow & Pruning weak links \\
\bottomrule
\end{tabular}
\end{table}

\subsection{BP-BEDS Equivalence}

For Gaussian beliefs, Belief Propagation exactly implements BEDS fusion: precisions add, means are precision-weighted averages.


\section{Correspondence Tables}
\label{app:correspondences}

\begin{table}[htbp]
\centering
\caption{\textbf{BEDS $\leftrightarrow$ Supervised ML.}}
\begin{tabular}{ll}
\toprule
\textbf{BEDS} & \textbf{Supervised ML} \\
\midrule
State $(\mu, \tau)$ & Parameters, confidence \\
Dissipation $\gamma$ & Weight decay rate \\
$\dF^2(\theta, \theta^*)$ & Regularization term \\
Crystallization & Convergence \\
\bottomrule
\end{tabular}
\end{table}

\begin{table}[htbp]
\centering
\caption{\textbf{BEDS $\leftrightarrow$ Self-Supervised Learning.}}
\begin{tabular}{ll}
\toprule
\textbf{BEDS} & \textbf{SSL} \\
\midrule
Spatial $(\mu, \tau)$ & Embedding distribution \\
SIGReg & Distance to $\Normal(0, I)$ \\
Temporal $(\phi, \kappa)$ & Teacher-student alignment \\
EMA momentum $m$ & $\kappa \approx 1/(1-m)$ \\
\bottomrule
\end{tabular}
\end{table}

\begin{table}[htbp]
\centering
\caption{\textbf{BEDS $\leftrightarrow$ Reinforcement Learning.}}
\begin{tabular}{ll}
\toprule
\textbf{BEDS} & \textbf{RL} \\
\midrule
State $(\mu, \tau)$ & Policy parameters \\
Coherence $\kappa$ & Inverse temperature \\
$1/\kappa$ & SAC temperature $\alpha$ \\
Low $\kappa$ & Exploration \\
High $\kappa$ & Exploitation \\
\bottomrule
\end{tabular}
\end{table}


\bibliographystyle{plainnat}
\bibliography{bibliography}

\end{document}